\numberwithin{theorem}{section}
\crefname{remark}{Remark}{Remarks}
\crefname{assumption}{Assumption}{Assumptions}
\crefname{example}{Example}{Examples}
\title{Is the Frequency Principle always valid?}
\author{Qijia Zhai\thanks{School of Mathematics, Sichuan University, Chengdu 610064, China
 (\email{zhaiqijia@stu.scu.edu.cn})}}
\begin{document}

\maketitle

\begin{abstract}
We investigate the learning dynamics of shallow ReLU neural networks on the unit sphere \(S^2\subset\mathbb{R}^3\) in polar coordinates \((\tau,\phi)\), considering both fixed and trainable neuron directions \(\{w_i\}\). For fixed weights, spherical harmonic expansions reveal an intrinsic low-frequency preference with coefficients decaying as \(O(\ell^{5/2}/2^\ell)\), typically leading to the Frequency Principle (FP) of lower-frequency-first learning. However, this principle can be violated under specific initial conditions or error distributions. With trainable weights, an additional rotation term in the harmonic evolution equations preserves exponential decay with decay order \(O(\ell^{7/2}/2^\ell)\) factor, also leading to the Frequency Principle (FP) of lower-frequency-first learning. But like fixed weights case, the principle can be violated under specific initial conditions or error distributions. Our numerical results demonstrate that trainable directions increase learning complexity and can either maintain a low-frequency advantage or enable faster high-frequency emergence. This analysis suggests the FP should be viewed as a tendency rather than a rule on curved domains like \(S^2\), providing insights into how direction updates and harmonic expansions shape frequency-dependent learning.
\end{abstract}

\begin{keywords}
Frequency principle, Neural networks, Training dynamics
\end{keywords}


\section{Introduction}
\label{Sec:Introduction}
In recent years, deep learning implemented by deep neural networks~(DNNs) has achieved great success in many applications such as computer vision~\cite{feng2019computer, xie2019source, o2020deep}, speech recognition~\cite{pan2012investigation, deng2013new, seltzer2013investigation}, translation~\cite{singh2017machine, zhang2015deep, yang2020survey}, and natural language processing~\cite{chowdhary2020natural, nadkarni2011natural, bird2009natural, khurana2023natural, liddy2001natural, eisenstein2019introduction}. Meanwhile, it has become an indispensable method for solving various scientific problems. However, DNNs sometimes fail and generate many issues, such as poor interpretability and limited generalization ability. In theory, DNNs' research has been like exploring a black box for decades. Many researchers compare the theoretical study and practical application of DNNs to alchemy. Therefore, establishing a better theoretical understanding of DNNs has become important.

There has been some progress in the theoretical research of DNNs in recent years. However, we still need to clearly understand how these theoretical results provide key insights and guidance for the practical research of DNNs. Regarding Frequency Principle research, some literature has conducted numerical and theoretical studies on the Frequency Principle, DNNs often fit target functions from low to high frequencies during the training  \cite{HSTX:2022, XZLXM:2020, LMXZ:2021, XZX:2019, Rahaman:2019, ma2020machine, zhang2023shallow, choraria2022spectral, xu2024overview}. Through these findings and research, we have gained a basic understanding of the capabilities and limitations of deep learning, namely the difficulty in learning and achieving good generalization for high-frequency data. At the same time, it is relatively simple for low-frequency data. Based on the guiding principle of this frequency, many algorithms have been developed to utilize the low-frequency bias of DNN to fit smooth data well, or to design special techniques/architectures to alleviate the difficulties of DNN in fitting known highly oscillatory data \cite{liu2020multi, jagtap2020adaptive, tancik2020fourier, cai2019multi}.

In numerous studies, the Frequency Principle has been subjected to theoretical analyses. For instance, in \cite{XZLXM:2020, LMXZ:2021}, the authors conducted frequency analyses on neural networks employing nonlinear activation functions such as $\tanh$ and general activation functions. They estimated the relation between the convergence speed of neural networks under Fourier transformation in the entire space ($\mathbb{R}$) and the frequency, revealing that higher frequencies correspond to lower convergence speeds. In \cite{Rahaman:2019}, the authors focused on the ReLU activation function and extended the neural network function across the entire Fourier space ($\mathbb{R}^d$). They utilized classical Fourier-based interpretations that may fail when artificially introduced high-frequency terms are present by $O(k^{-\delta-1})$, where $k$ represents the $k$th frequency, $1\le \delta\le d$, and $d$ denotes the dimension of the space. In \cite{HSTX:2022}, the authors presented a theoretical elucidation for the spectral bias observed in ReLU neural networks by drawing upon connections with the theory of finite element methods and eigenvalue decomposition. Subsequently, building upon this theoretical framework, they demonstrated that replacing the activation function with a piecewise linear B-spline, specifically the Hat function, could mitigate this spectral bias. They empirically verified this assertion across various settings.

However, our findings reveal that frequency alone cannot fully determine the convergence behavior of neural networks. While the Frequency Principle suggests a natural inclination towards learning low-frequency components first, we show that classical Fourier-based interpretations may fail when artificially introduced high-frequency terms are present. In these scenarios, the interplay between frequency and training error becomes critical. Specifically, we observe that:
\begin{itemize}
    \item When the training error is not negligible, certain high-frequency components can converge faster than low-frequency ones.
    \item Conversely, when the training error is extremely small, low-frequency components typically dominate, leading to faster convergence of low-frequency terms relative to high-frequency ones.
\end{itemize}

We go beyond the traditional Fourier analysis to provide deeper theoretical insights and employ spherical harmonic expansions of ReLU functions on the unit sphere. By examining the decay rates of spherical harmonic coefficients, we demonstrate conditions under which the Frequency Principle does not universally apply. Complementing this theoretical perspective, we employ standard Fourier series approaches numerically to illustrate that these conditions can indeed lead to a breakdown of the Frequency Principle.

This paper focuses on exploring the Frequency Principle under the following settings:
\begin{itemize}
\item Neural networks with ReLU activation functions and fixed weights;
\item Neural networks with ReLU activation functions and trained weights.
\end{itemize}

We choose to utilize the Fourier series method because it naturally accommodates bounded domains, making it suitable for both function approximation and the numerical study of partial differential equations. By periodically extending the function without introducing extraneous frequencies, the Fourier series analysis ensures a smooth testing ground for our experiments, allowing us to isolate and examine the effects of introduced high-frequency components and training errors in a controlled manner.

The main contributions of this article are as follows:
\begin{itemize}
\item \textbf{Refined Analysis of the Frequency Principle via Spherical Harmonics:} We re-examine the Frequency Principle through spherical harmonic expansions, uncovering scenarios where it fails to hold.
\item \textbf{Elucidation of Neural Network Learning Priorities Under High-Frequency Perturbations:} We highlight that frequency considerations do not solely govern neural network training dynamics. Instead, the interplay between frequency and error can invert the usual learning order, enabling high-frequency components to converge faster under certain conditions.
\item \textbf{Interpretable Numerical Experiments Across the Frequency Spectrum:} Through carefully designed numerical experiments, we illustrate how both frequency and error influence learning behavior. These experiments, using the familiar Fourier series framework, offer a more interpretable understanding of why the Frequency Principle may or may not emerge, depending on the target function’s properties and network parameters.
\end{itemize}

\section{ReLU Shallow Neural Networks on the Unit Sphere}
\label{Sec:Preliminaries}

We now formulate our shallow neural network (SNN) on the unit sphere 
\(S^2 \;=\;\bigl\{\,x(\tau,\phi)\in\mathbb{R}^3 :\|x(\tau,\phi)\|=1,\;0\le\tau\le\pi,\;0\le\phi<2\pi\bigr\}\) exclusively in \emph{polar coordinates}.  In these coordinates, each point on \(S^2\) is given by
\begin{equation}\label{spherecordinate}
    x(\tau,\phi) 
\;=\; 
\bigl(\sin\tau\cos\phi,\;\sin\tau\sin\phi,\;\cos\tau\bigr).
\end{equation}

We consider an SNN with a single hidden layer of \(m\) neurons and the activation function is ReLU:
\begin{equation}\label{Relu}
    \sigma(z) \;=\;\max\{0,\,z\}, 
\quad z\in\mathbb{R}.
\end{equation}

The output of the network is then a linear combination of these neuron outputs. Network architecture, parameters, loss functional and training dynamics in polar coordinates are described as follows. Concretely, let
$$\theta = \bigl((a_1,\dots,a_m)^\top,\,(w_1,\dots,w_m)^\top\bigr)$$
be the SNN parameters.  For each neuron \(i\) $(i = 1,2,\ldots,m)$, \(\,a_i \in \mathbb{R}\) is the scalar output weight, and
\(\,w_i\in\mathbb{R}^3\) is the weight vector specifying the linear functional
\(\,w_i^\top x(\tau,\phi)\).  
With the ReLU activation \(\sigma\) from \eqref{Relu}, the network output at \(\,x(\tau,\phi)\in S^2\) is
\begin{equation}\label{neuralnetwork-sphere}
u\bigl(\tau,\phi;\theta\bigr)
\;=\;
\sum_{i=1}^m 
a_i\,\sigma\bigl(w_i^\top x(\tau,\phi)\bigr)
\;=\;
\sum_{i=1}^m 
a_i\,\mathrm{ReLU}\bigl(w_i^\top x(\tau,\phi)\bigr).
\end{equation}

\begin{remark}
Throughout our analysis in Section 3, we often impose the normalization \(\|w_i\|=1\) for all \(i\), as in~\cite[Proposition~2.1]{AS:2021}.  This restricts each \(w_i\) to lie on \(S^2\) (up to orientation), yet still admits a wide class of representable functions. Under the normalization \(\|w_i\|=1\), aligning coordinate systems so \(w_i\) matches the polar axis \(\bigl(0,0,1\bigr)\) merely rotates the sphere.  Thus, such normalization and realignment simplify the theoretical treatment without loss of generality:
\begin{equation}\label{nnrepresent-sphere}
u\bigl(\tau,\phi;\theta\bigr)
\;=\;
\sum_{i=1}^m 
a_i \,\mathrm{ReLU}\bigl(w_i^\top x(\tau,\phi)\bigr),
\quad 
a_i\in\mathbb{R},\quad \|w_i\|=1.
\end{equation}
\end{remark}

Let \(h(\tau,\phi)\) be a prescribed target function on the sphere.  We measure the discrepancy between \(u(\tau,\phi;\theta)\) and \(h(\tau,\phi)\) using an \(L^2\)-type loss:
\begin{equation}\label{l2spheredomain}
L(\theta)
\;=\;
\frac12
\int_{0}^{2\pi}\!\!\int_{0}^{\pi}
\Bigl(u\bigl(\tau,\phi;\theta\bigr)-h(\tau,\phi)\Bigr)^2
\,\sin\tau\,d\tau\,d\phi.
\end{equation}

We train the parameters \(\theta\) via gradient descent over time \(t\in(0,T]\):
\begin{equation}\label{trainingode-sphere}
\begin{aligned}
\frac{d\theta}{dt} 
&=\;
-\nabla_{\theta}L(\theta),\\
\theta(0)
&=\;\theta_0,
\end{aligned}
\end{equation}
where \(\theta_0\) is the given initialization.

To track the evolution of the network’s fit to the target \(h(\tau,\phi)\), we define the \emph{error function}.  We set
\begin{equation}\label{errorxi-sphere}
    D(\tau,\phi)
\;=\;
u\bigl(\tau,\phi;\theta\bigr)
\;-\;
h(\tau,\phi),
\end{equation}
and if we integrate over \(S^2\), we have
\[
\int_{S^2}D(x)\,dx
\;=\;
\int_{0}^{2\pi}\!\!\int_{0}^{\pi}
D(\tau,\phi)\;\sin\tau\,d\tau\,d\phi,
\]
which shows that the same error function appears in either coordinate system.

Since \(h(\tau,\phi)\) is fixed, the temporal evolution of \(D(\tau,\phi)\) directly reflects changes in \(\theta\).  By the chain rule,
\begin{equation}\label{changeofD-sphere}
\frac{\partial D(\tau,\phi)}{\partial t}
\;=\;
\frac{\partial u(\tau,\phi;\theta)}{\partial \theta}
\;\cdot\;
\frac{d\theta}{dt}
\;=\;
\frac{\partial u(\tau,\phi;\theta)}{\partial \theta}
\;\cdot\;
\frac{\partial L(\theta)}{\partial \theta}.
\end{equation}

This formalizes how, under gradient descent, the error function \(D(\tau,\phi)\) evolves over time \(t\).  In the following sections, we investigate these dynamics by examining the spherical harmonic expansions of the ReLU activation on the sphere, providing a detailed frequency-level perspective on how the SNN learns \(h(\tau,\phi)\).

\subsection{Spherical Harmonic Expansion in Polar Coordinates}

We now review the standard \emph{spherical harmonics} on the unit sphere \(S^2\), and expressed in polar coordinates \((\tau,\phi)\).  Recall that each point on \(S^2\subset\mathbb{R}^3\) can be written as
\[
x(\tau,\phi) \;=\; 
\bigl(\sin\tau\cos\phi,\;\sin\tau\sin\phi,\;\cos\tau\bigr),
\quad
0\le\tau\le\pi,\,
0\le\phi<2\pi.
\]

In these angles, the spherical harmonics \(Y_{\ell}^j(\tau,\phi)\) are defined by
\begin{equation}\label{YelljDefPolar}
Y_\ell^j(\tau,\phi)
\;=\;
N_\ell^j 
\,P_\ell^j\!\bigl(\cos\tau\bigr)
\,e^{\,i\,j\,\phi},
\quad
\ell\ge0,
\quad
-\ell\le j\le \ell,
\end{equation}
where
\begin{equation}\label{NelljDef}
N_\ell^j
\;=\;
\sqrt{\frac{(2\ell+1)\,(\ell-j)!}{4\pi\,(\ell+j)!}}
\end{equation}
is the normalization constant, and 
\[
P_\ell^j(x)
\;=\;
(-1)^j\,(1-x^2)^{\tfrac{j}{2}}\,
\frac{d^j}{dx^j}\,P_\ell(x), \quad x\in \mathbb{R}
\]
are the \emph{associated Legendre polynomials}, derived from the usual Legendre polynomials \(P_\ell(x)\) via Rodrigues’ formula.

These functions \(\{Y_{\ell}^j(\tau,\phi)\}\) form a complete orthonormal basis in \(L^2(S^2)\).  More precisely:
\begin{enumerate}
\item 
\emph{Orthogonality:} For any \(\ell,\ell'\ge0\) and \(-\ell\le j\le \ell\), \(-\ell'\le j'\le \ell'\),
\begin{equation}
\int_{0}^{2\pi}\!\!\!\int_{0}^{\pi}
Y_{\ell}^j(\tau,\phi)\,\overline{Y_{\ell'}^{j'}(\tau,\phi)}
\;\sin\tau\,d\tau\,d\phi
\;=\;
\delta_{\ell\ell'}\,\delta_{jj'},
\end{equation}
where $\overline{Y_{\ell'}^{j'}(\tau,\phi)}$ denotes the complex conjugate of the spherical harmonic \(Y_{\ell'}^{j'}(\tau,\phi)\).
\item 
\emph{Completeness:} Any square-integrable function \(h(\tau,\phi)\in L^2(S^2)\) admits an expansion
\begin{equation}\label{uxexpansionPolar}
h(\tau,\phi)
\;=\;
\sum_{\ell=0}^\infty 
\sum_{j=-\ell}^\ell 
h_{\ell j}\,
Y_\ell^j(\tau,\phi),
\end{equation}
where the coefficients \(h_{\ell j}\) can be computed by
\begin{equation}\label{cuxexpansionPolar}
h_{\ell j} 
\;=\;
\int_{0}^{2\pi}\!\!\!\int_{0}^{\pi}
h(\tau,\phi)\,\overline{Y_\ell^j(\tau,\phi)}
\;\sin\tau\,d\tau\,d\phi.
\end{equation}
\end{enumerate}

The index \(\ell\) represents the \emph{degree} of the harmonic and sets the fundamental angular frequency, while \(j\) (the \emph{order}) determines the azimuthal dependence.  Larger \(\ell\) corresponds to higher-frequency components in this expansion.

Since any real-valued function on \(S^2\) can be expanded in spherical harmonics, the same holds for the neural network output 
\(\,u(\tau,\phi;\theta)\).  Hence, one may write
\begin{equation}\label{uxthetaexpansionPolar}
u\bigl(\tau,\phi;\theta\bigr)
\;=\;
\sum_{\ell=0}^{\infty}
\sum_{j=-\ell}^{\ell}
u_{\ell j}(\theta)\,
Y_{\ell}^j(\tau,\phi),
\end{equation}
where the coefficient
\begin{equation}\label{cuxthetaexpansionPolar}
u_{\ell j}(\theta)
\;=\;
\int_{0}^{2\pi}\!\!\!\int_{0}^{\pi}
u\bigl(\tau,\phi;\theta\bigr)
\,\overline{Y_{\ell}^j(\tau,\phi)}
\;\sin\tau\,d\tau\,d\phi
\end{equation}
records how strongly \(u(\tau,\phi;\theta)\) projects onto each spherical harmonic mode \(Y_{\ell}^j\).  Later sections will exploit this expansion to investigate how low- and high-frequency components of \(u(\tau,\phi;\theta)\) evolve during gradient-based training.

\subsection{Error Evolution under Spherical Harmonic Expansion}
Starting from the error evolution equation \eqref{changeofD-sphere}, the temporal derivative of the error function \( D(\tau, \phi) = u(\tau, \phi; \theta) - h(\tau, \phi) \) is expanded in spherical harmonics:
\begin{equation}
    \frac{\partial D(\tau, \phi)}{\partial t} = \sum_{\ell=0}^{\infty} \sum_{j=-\ell}^{\ell} D_{\ell j} \, Y_{\ell}^j(\tau, \phi),
\end{equation}
where \( D_{\ell j} \) is the coefficient of the spherical harmonic \( Y_{\ell}^j(\tau, \phi) \), computed as
\begin{equation}
    D_{\ell j} = \int_{0}^{2\pi} \int_{0}^{\pi} \frac{\partial D(\tau, \phi)}{\partial t} \, \overline{Y_{\ell}^j(\tau, \phi)} \, \sin \tau \, d\tau \, d\phi.
\end{equation}

This expansion leverages the orthonormality of the spherical harmonics (see \eqref{uxexpansionPolar} and \eqref{cuxexpansionPolar}), decomposing the error dynamics into frequency modes defined by the degree \( \ell \) and order \( j \). The degree \( \ell \) governs the angular frequency, with larger \( \ell \) corresponding to finer spatial variations on \( S^2 \).


We define the conditions under which the ReLU SNN, as given by \eqref{neuralnetwork-sphere}, adheres to the Frequency Principle. For a fixed azimuthal order \( j \), consider two frequency degrees \( \ell_1 \) and \( \ell_2 \) with \( \ell_1 < \ell_2 \), representing lower and higher frequency components, respectively. The network adheres to the FP if the magnitude of the error coefficient for the lower-frequency mode exceeds that of the higher-frequency mode: $D_{\ell_1 j}<0$ and $D_{\ell_2 j} < 0$,
\begin{equation}
    |D_{\ell_1 j}| > |D_{\ell_2 j}|,
\end{equation}
indicating that the gradient descent dynamics \eqref{trainingode-sphere} reduce the error in the lower-frequency harmonic \( \ell_1 \) more rapidly than in \( \ell_2 \). This aligns with empirical observations where smoother, low-frequency features are captured earlier due to their broader spatial influence on \( S^2 \).


A violation of the Frequency Principle occurs when the network does not prioritize lower-frequency components. For a fixed \( j \) and \( \ell_1 < \ell_2 \), the FP is violated if
\begin{equation}
    |D_{\ell_1 j}| \leq |D_{\ell_2 j}|,
\end{equation}
suggesting that higher-frequency components are learned at a rate comparable to or exceeding that of lower-frequency ones. This may stem from the non-linear ReLU activation \eqref{Relu}, specific initializations \( \theta_0 \), or the geometry of the normalized weight vectors \( w_i \) (\( \|w_i\| = 1 \)), which could amplify higher-frequency modes under certain conditions.

\section{Frequency Analysis on the Unit Sphere for Fixed Weights}
\label{Sec:SphericalAnalysisFW}

In this section, we analyze the frequency behavior of shallow neural network (SNN) defined on the unit sphere \( S^2  \) under the assumption that the weight vectors \( w_i \) remain fixed throughout training. By holding the weights constant, we focus on understanding how the network’s frequency representation is inherently shaped by the ReLU activation function and the fixed directions \( w_i \). This setting provides a baseline frequency analysis, free from the complexities introduced by weight adaptation.

The unit sphere is a natural domain on which to conduct this analysis since spherical harmonics form a complete orthonormal basis for square-integrable functions on \( S^2 \). By expanding functions into spherical harmonics, we can precisely characterize their frequency components and thus investigate how a ReLU-activated SNN distributes energy across different spherical frequencies.

We consider the SNN of the form \eqref{neuralnetwork-sphere}, where each \( w_i \in S^2 \) is a fixed unit vector (\(\|w_i\|=1\)) and \( a_i \in \mathbb{R} \). The inner product \( w_i^\top x \) corresponds to projecting the input \( x \) onto the direction \( w_i \). Since the \( w_i \) do not change during training, the directional structure imposed on the input by \( w_i \) remains constant, allowing us to focus solely on how the ReLU activation reshapes the frequency content of the target function.

\subsection{Expansion of a Single Neuron with Fixed Weights in Polar Coordinates}
\label{Sec:SingleNeuronExpansion}

Having established the basic spherical harmonic machinery, we now analyze the simpler case of a \emph{single neuron} with ReLU activation:
\[
\mathrm{ReLU}\bigl(w_i^\top x(\tau,\phi)\bigr),
\]
where \(w_i\in\mathbb{R}^3\) is fixed (i.e., not trained), and \(x(\tau,\phi)\in S^2\) is the input in polar coordinates.  
To study its spherical harmonic expansion, we first interpret \(\,w_i^\top x(\tau,\phi)\) geometrically.

Since \(\|x(\tau,\phi)\|=1\) and  \(\|w_i\|=1\), the quantity \(w_i^\top x(\tau,\phi)\) is 
\[
w_i^\top x(\tau,\phi)
\;=\;
\cos(\tau'),
\]
where \(\tau'\in[0,\pi]\) is the angle between \(w_i\) and \(x(\tau,\phi)\).  By choosing a rotated coordinate system so that \(w_i\) coincides with the north pole (0,0,1), this angle \(\tau'\) can be identified with the polar angle.  Thus
\begin{equation}
\label{NeuronCosTau}
\mathrm{ReLU}\bigl(w_i^\top x(\tau,\phi)\bigr)
\;=\;
\max\!\bigl\{0,\,\cos(\tau')\bigr\}.
\end{equation}

In particular, \(\cos(\tau')\) is nonnegative only if \(\tau'\in[0,\tfrac{\pi}{2}]\).  This means the neuron output is positive (active) in a polar cap region about \(w_i\) and zero elsewhere, partitioning \(S^2\) into \emph{active} and \emph{inactive} zones.

Because \(\mathrm{ReLU}\bigl(w_i^\top x(\tau,\phi)\bigr)\) depends only on \(\tau'\) (and not on the azimuth \(\phi\)), it is an \emph{axisymmetric} function on the sphere.  Consequently, its spherical harmonic expansion will involve only the \(j=0\) modes:
\begin{equation}\label{expansionofReLU}
\mathrm{ReLU}\bigl(w_i^\top x(\tau,\phi)\bigr)
\;=\;
\sum_{\ell=0}^{\infty}
c_{\ell}
\,Y_\ell^0(\tau,\phi),
\end{equation}
where each coefficient \(c_{\ell}\) is given by an inner product with \(Y_{\ell}^0(\tau,\phi)\).  In polar coordinates,
\[
Y_{\ell}^0(\tau,\phi) 
\;=\;
\sqrt{\frac{2\ell+1}{4\pi}}\;
P_{\ell}\!\bigl(\cos\tau\bigr).
\]

Since \(\mathrm{ReLU}(\cos\tau')\) is nonzero only for \(\tau'\in[0,\frac{\pi}{2}]\), we may express
\[
c_{\ell}
\;=\;
\int_{S^2}
\mathrm{ReLU}\bigl(\cos\tau'\bigr)
\,\overline{Y_{\ell}^0(\tau',\phi')}\;d\Omega
\;=\;
2\pi\sqrt{\frac{2\ell+1}{4\pi}}
\int_{0}^{1}
t\,P_{\ell}(t)\,dt,
\]
where \(t=\cos\tau'\in[0,1]\) and \(d\Omega=\sin(\tau')\,d(\tau')\,d\phi'\).  

Carrying out the integral carefully, one finds for \(\ell\ge2\) \cite{bach2017breaking}:
\[
c_{\ell}
\;=\;
\frac{\sqrt{\pi}}{24}
\bigl(\tfrac12\bigr)^\ell
\sqrt{2\ell+1}\,\bigl(\ell^2 + 3\ell + 2\bigr),
\]
while for \(\ell=0\), \(c_{0}=\tfrac{{\pi}}{2}\), and for \(\ell=1\), \(c_{1}=\tfrac{\sqrt{3\pi}}{3}\).  In general, these coefficients satisfy
\[
c_{\ell}
\;=\;
O\!\Bigl(\frac{\ell^{5/2}}{2^\ell}\Bigr)
\quad
\text{as}\;\ell\to\infty,
\]
indicating that \(\mathrm{ReLU}(\cos\tau')\) has significant low-frequency components but rapidly decaying high-frequency components.

\begin{figure}[t]
    \centering
    \includegraphics[width=0.5\linewidth]{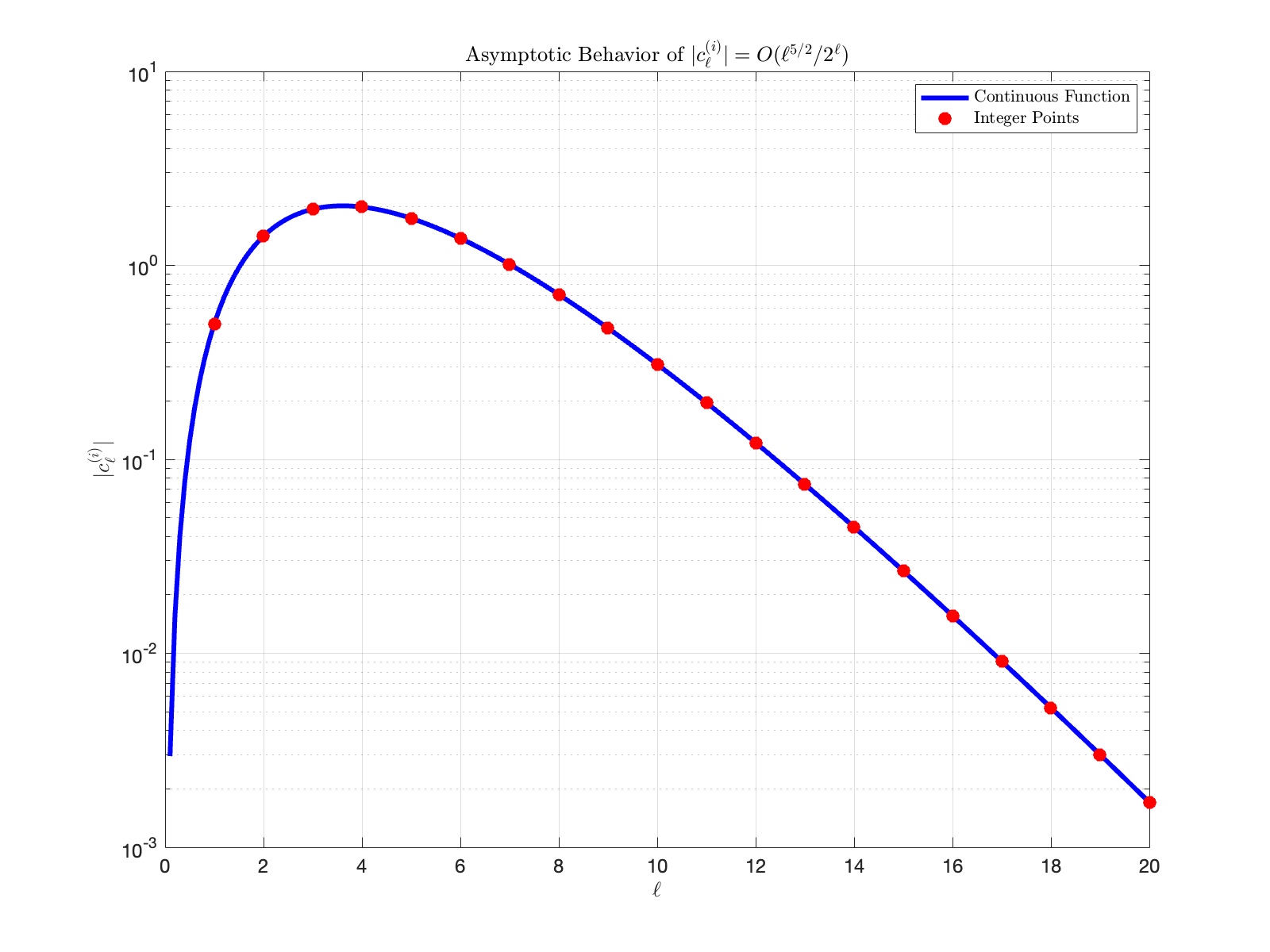}
    \caption{Asymptotic behavior of \(\bigl|c_\ell \bigr|\).  Note the polynomial factor \(\ell^{5/2}\) is dominated by the exponential decay \(2^{-\ell}\).}
    \label{behavierofci}
\end{figure}

For an SNN of the form~\eqref{neuralnetwork-sphere}, each \(\mathrm{ReLU}(w_i^\top x(\tau,\phi))\) can be expanded similarly.  The overall harmonic coefficient in the \(j=0\) mode becomes a linear combination of \(\{c_{\ell}\}\).  Specifically, if
\[
u\bigl(\tau,\phi;\theta\bigr)
\;=\;
\sum_{i=1}^{m}
a_i\;\mathrm{ReLU}\bigl(w_i^\top x(\tau,\phi)\bigr),
\]
then
\[
u_{\ell j}(\theta)
\;=\;
\int_{S^2}
u(\tau,\phi;\theta)
\,\overline{Y_{\ell}^j(\tau,\phi)}
\;d\Omega
\;=\;
\sum_{i=1}^{m}
a_i\;c_{\ell}\,\delta_{j0}.
\]

Hence, the low-frequency bias in each single neuron’s expansion (i.e., the decay of \(c_{\ell}\) with \(\ell\)) underpins the frequency behavior of the entire network on the sphere.  Analyzing these coefficients \(\{c_{\ell}\}\) thus reveals how the SNN distributes energy across spherical frequencies, providing insight into phenomena like the Frequency Principle.

\subsection{Error Dynamics under Gradient Descent with Fixed Directions}
\label{Sec:ErrorEvolution}

We now analyze the case where only the \emph{output-layer} coefficients 
\(\,\theta=(a_1,\dots,a_m)^\top\) are trained, while the directions \(w_i\in S^2\) remain 
\emph{fixed}.  
We define the \emph{error function} in \eqref{errorxi-sphere} and measure this error via the \(L^2\)-type loss \eqref{l2spheredomain} and we train \(\theta\) by gradient descent on \(L(\theta)\).

Observe that
\[
\frac{\partial u(\tau,\phi;\theta)}{\partial a_i}
\;=\;
\mathrm{ReLU}\bigl(w_i^\top x(\tau,\phi)\bigr),
\]
so 
\[
\frac{\partial D(\tau,\phi)}{\partial t}
\;=\;
\sum_{i=1}^m
\frac{\partial u(\tau,\phi;\theta)}{\partial a_i}
\;\frac{\partial L(\theta)}{\partial a_i}.
\]

Moreover, for the reason that
\[
\frac{\partial L(\theta)}{\partial a_i}
\;=\;
-\int_{0}^{2\pi}\!\!\int_{0}^{\pi}
\Bigl[u(\tau',\phi';\theta)-h(\tau',\phi')\Bigr]\,
\mathrm{ReLU}\bigl(w_i^\top x(\tau',\phi')\bigr)
\,\sin\tau'\,d\tau'\,d\phi',
\]
hence
\[
 \frac{\partial L(\theta)}{\partial a_i}
\;=\;
-\int_{0}^{2\pi}\!\!\int_{0}^{\pi}
D(\tau',\phi')\,
\mathrm{ReLU}\bigl(w_i^\top x(\tau',\phi')\bigr)
\,\sin\tau'\,d\tau'\,d\phi'.
\]

Consequently,
\begin{equation}\label{DupdateFixedDirPolar}
\begin{aligned}
&\frac{\partial D(\tau,\phi)}{\partial t}
\;=\;\\
& -\sum_{i=1}^m
\mathrm{ReLU}\bigl(w_i^\top x(\tau,\phi)\bigr)
\int_{0}^{2\pi}\!\!\int_{0}^{\pi}
D(\tau',\phi')\,\mathrm{ReLU}\bigl(w_i^\top x(\tau',\phi')\bigr)
\,\sin\tau'\,d\tau'\,d\phi'.
\end{aligned}
\end{equation}

If all \(w_i\) align along the polar axis (e.g., \(w_i^\top x(\tau,\phi) = \cos\tau'\)), then 
\(\mathrm{ReLU}(\cos\tau')\) is $\cos\tau'$ on \(\tau'\in[0,\tfrac{\pi}{2}]\) and $0$ otherwise.  
In that special case,
\[
u(\tau,\phi;\theta)
\;=\;
\Bigl(\sum_{i=1}^m a_i\Bigr)\,\cos(\tau'),
\quad
D(\tau,\phi)
\;=\;
\Bigl(\sum_{i=1}^m a_i\Bigr)\cos(\tau')\;-\;h(\tau,\phi).
\]

Then, for each $i$, $i = 1,2, \ldots, m$
\[
\begin{aligned}
    &\int_{0}^{2\pi}\!\!\!\!\int_{0}^{\pi}
D(\tau',\phi')\,\mathrm{ReLU}\bigl(w_i^\top x(\tau',\phi')\bigr)
\,\sin\tau'\,d\tau'\,d\phi' \\
&\;=\;
\int_{0}^{2\pi}\!\!\!\!\int_{0}^{\frac{\pi}{2}}
\Bigl(\sum_{i=1}^m a_i \cos(\tau') - h(\tau',\phi')\Bigr)
\cos(\tau')\,\sin(\tau')\,d\tau'\,d\phi'.
\end{aligned}
\]
Since 
\(\int_{0}^{2\pi}\int_{0}^{\tfrac{\pi}{2}}\cos^2(\tau')\sin(\tau')\,d\tau'\,d\phi'= \tfrac{2\pi}{3}\),
the integral becomes
\[
\frac{2\pi}{3}\sum_{i=1}^m a_i
\;-\;
\int_{0}^{2\pi}\!\!\int_{0}^{\tfrac{\pi}{2}}
h(\tau',\phi')\,\cos(\tau')\,\sin(\tau')\,d\tau'\,d\phi'
\;\;=\;\;
\frac{2\pi}{3}\sum_{i = 1}^m a_i \;-\;C(h),
\]
where
\(
C(h) = \int_{0}^{2\pi}\!\!\int_{0}^{\tfrac{\pi}{2}}
h(\tau',\phi')\,\cos(\tau')\,\sin(\tau')\,d\tau'\,d\phi'
\)
denotes the contribution from \(h(\tau',\phi')\), independent of the learnable parameters
\(\{a_i\}\).  Substituting back into \eqref{DupdateFixedDirPolar} provides a closed-form expression 
for \(\tfrac{\partial D(\tau,\phi)}{\partial t}\).

In general, if the directions $\{w_i\}$ differ (so $w_i^\top x(\tau',\phi')$ defines different upper hemispheres), one can still decompose each ReLU integral piecewise over \(\tau'\in[0,\tfrac{\pi}{2}]\) but must account for the orientation of each $w_i$.  While more involved, the integrals remain straightforward to evaluate numerically or in closed form (for certain symmetric weight configurations). Thus, even without invoking higher-dimensional mean-value arguments, one sees explicitly how each  ``cap'' $\{\tau'\le \tfrac{\pi}{2}\}$ relative to $w_i$ contributes to the gradient signal.  

Hence \eqref{DupdateFixedDirPolar} captures the interplay of the fixed directions, the ReLU activation, and the error distribution $D(\tau,\phi)$ in governing the time evolution of the error under gradient descent.

\subsection{Implications for the Frequency Principle}
\label{Sec:ImplicationsFP}

In the setting of Section~\ref{Sec:ErrorEvolution}, where only the output-layer parameters 
\(\theta=(a_1,\ldots,a_m)\) are trained and each weight direction \(w_i\in S^2\) remains fixed, we have established that 
the time evolution of the error \eqref{errorxi-sphere} takes the form
\begin{equation}\label{eq:DupdateNew}
\frac{\partial D(\tau,\phi)}{\partial t}
\;=\;
-\sum_{i=1}^m
\Bigl[\tfrac{2\pi}{3}\!\!\sum_{k=1}^m a_k \;-\;C(h)\Bigr]\,
\mathrm{ReLU}\bigl(w_i^\top x(\tau,\phi)\bigr).
\end{equation}

From Section~\ref{Sec:SingleNeuronExpansion}, each single-neuron function 
\(\mathrm{ReLU}\bigl(w_i^\top x(\tau,\phi)\bigr)\) depends only on the polar angle (when $w_i$ is aligned).

Substituting this into \eqref{eq:DupdateNew}, we find
\begin{equation}
    \begin{aligned}
\frac{\partial D(\tau,\phi)}{\partial t}
&\;=\;
-\sum_{\ell=0}^\infty
\Biggl[
\sum_{i=1}^m
\Bigl(\tfrac{2\pi}{3}\!\!\sum_{k=1}^m a_k - C(h)\Bigr)\,c_{\ell}
\Biggr]\,
Y_{\ell}^0(\tau,\phi)\\
&\;=\;
-\sum_{\ell=0}^\infty
\Biggl[
m\Bigl(\tfrac{2\pi}{3}\!\!\sum_{k=1}^m a_k - C(h)\Bigr)\,c_{\ell}
\Biggr]\,
Y_{\ell}^0(\tau,\phi).
\end{aligned}
\end{equation}
Define
\begin{equation}\label{D_ellfixed}
D_{\ell}
\;:=\;
m\Bigl(\tfrac{2\pi}{3}\!\!\sum_{k=1}^m a_k - C(h)\Bigr)\,c_{\ell}.
\end{equation}
Hence each spherical harmonic mode \(\ell\) of the error is updated by \(D_{\ell}\).  We can now state 
conditions under which the usual Frequency Principle---where lower-frequency modes decay faster than 
higher ones---either fails or is preserved.

\begin{theorem}[Special Case for FP Violation on the Sphere]\label{thm:fp-fails-zeroD-new}
Let \(\{w_i\}_{i=1}^m \subset S^2\) be fixed unit vectors and align along the polar axis, and consider the shallow ReLU network \eqref{neuralnetwork-sphere}. Let \(D(\tau,\phi)\) defined in \eqref{errorxi-sphere} be the error function on the sphere.  
Suppose that at any training time $t$, if $$
\sum_{k=1}^m a_k = \frac{3}{2\pi}C(h)$$
with $C(h) = \int_{0}^{2\pi}\!\!\int_{0}^{\tfrac{\pi}{2}}
h(\tau',\phi')\,\cos(\tau')\,\sin(\tau')\,d\tau'\,d\phi'$, then
\[
D_{\ell}
\;=\;
0
\quad
\text{for every }\ell.
\]
\end{theorem}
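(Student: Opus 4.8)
The plan is to recognize that essentially all the work has already been done in deriving the closed form \eqref{D_ellfixed}, and that the theorem itself is a one-line substitution. So I would begin by recalling that expression, namely
\[
D_{\ell} \;=\; m\Bigl(\tfrac{2\pi}{3}\sum_{k=1}^m a_k - C(h)\Bigr)\,c_{\ell},
\]
and recollecting how it arises in Section~\ref{Sec:ErrorEvolution}: specializing the gradient-flow update \eqref{DupdateFixedDirPolar} to the aligned case reduces every neuron to the common axisymmetric function $\mathrm{ReLU}(\cos\tau')$; evaluating the cap integral $\int_0^{2\pi}\!\int_0^{\pi/2}\cos^2\tau'\,\sin\tau'\,d\tau'\,d\phi' = \tfrac{2\pi}{3}$ together with the target contribution $C(h)$ produces the scalar prefactor $\tfrac{2\pi}{3}\sum_k a_k - C(h)$; and expanding $\mathrm{ReLU}(\cos\tau')=\sum_\ell c_\ell Y_\ell^0$ while summing over the $m$ identical neurons yields the overall factor $m$. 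The key structural fact to highlight is that this prefactor is a \emph{single scalar} that multiplies every harmonic coefficient $c_\ell$ identically, so the entire frequency content of $\partial D/\partial t$ is controlled by one number.

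Given this, the argument is immediate. I would substitute the hypothesis $\sum_{k=1}^m a_k = \tfrac{3}{2\pi}C(h)$ into the prefactor and observe that
\[
\tfrac{2\pi}{3}\sum_{k=1}^m a_k - C(h)
\;=\;
\tfrac{2\pi}{3}\cdot\tfrac{3}{2\pi}\,C(h) - C(h)
\;=\;
C(h) - C(h)
\;=\;
0.
\]
Since $D_\ell$ equals $m$ times this (now vanishing) scalar times the finite coefficient $c_\ell$, it follows that $D_\ell = 0$ for every $\ell \ge 0$, independent of the individual values of the $a_k$ and independent of the decay profile of $c_\ell$.

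There is no genuine analytic obstacle here; the only point requiring care is interpretive rather than computational. First, the vanishing is \emph{uniform in} $\ell$, so at such a configuration the flow produces no instantaneous motion in any spherical harmonic mode, low or high frequency alike, which is exactly the degenerate mechanism by which the Frequency Principle becomes vacuous: an aligned-neuron network sees only the one azimuthally symmetric direction spanned by $\{c_\ell Y_\ell^0\}$, so its whole learning capacity is governed by the single degree of freedom $\sum_k a_k$, and tuning it to cancel $C(h)$ stalls the update across all frequencies simultaneously. Second, I would emphasize that the hypothesis is a condition on the \emph{state} $\theta(t)$ at a fixed time $t$ (the $a_k$ being time dependent), so the conclusion $D_\ell=0$ is an instantaneous statement about $\partial D/\partial t$ at that time, not a claim that the relation persists along the trajectory.
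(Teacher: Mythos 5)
Your proposal is correct and follows exactly the route the paper intends: the theorem is an immediate consequence of the closed form \eqref{D_ellfixed}, and substituting the hypothesis $\sum_{k=1}^m a_k = \tfrac{3}{2\pi}C(h)$ makes the scalar prefactor $\tfrac{2\pi}{3}\sum_k a_k - C(h)$ vanish, forcing $D_\ell = 0$ for every $\ell$. The paper leaves this substitution implicit (it states the theorem without a separate proof block, relying on the derivation of \eqref{D_ellfixed} just above), so your write-up, including the observation that the vanishing is uniform in $\ell$ and is an instantaneous statement about $\partial D/\partial t$, is a faithful and slightly more explicit rendering of the same argument.
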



\begin{remark}
Intuitively, if the error \(D(\tau,\phi)\) is zero on the regions of the sphere 
(where $\mathrm{ReLU}(w_i^\top x)$ is active) that contribute to the integral, 
then each mode sees no gradient at initialization.  Thus no low-frequency advantage emerges, leading to 
immediate FP violation.
\end{remark}

Recalling that \(\bigl|c_{\ell}\bigr|\approx O\!\bigl(\ell^{5/2}/2^\ell\bigr)\) , we deduce that higher-frequency components of \(\mathrm{ReLU}(w_i^\top x)\) decay exponentially in \(\ell\).  Depending on how
\(\tfrac{2\pi}{3}\sum_{k=1}^m a_k - C(h)\) scales with $\ell$, the SNN can either maintain or lose low-frequency dominance.  
Below are two corollaries echoing that trade-off.

\begin{theorem}[General Conditions for FP Violation on the Sphere]
\label{thm:FPviolationSphere}
Let \(\{w_i\}_{i=1}^m \subset S^2\) be fixed unit vectors and align along the polar axis, and consider the shallow ReLU network \eqref{neuralnetwork-sphere}. Let \(D(\tau,\phi)\) defined in \eqref{errorxi-sphere} be the error function on the sphere.  
Suppose that at any training time $t$, if $$
\sum_{k=1}^m a_k < \frac{3}{2\pi}C(h)$$
with $C(h) = \int_{0}^{2\pi}\!\!\int_{0}^{\tfrac{\pi}{2}}
h(\tau',\phi')\,\cos(\tau')\,\sin(\tau')\,d\tau'\,d\phi'$, then for all \(\ell\), the frequency principle can be \emph{violated}.  
In other words, the exponential decay of \(c_{\ell}\) can be surmounted by such error terms at sufficiently low 
frequencies, enabling high-frequency components to persist.
\end{theorem}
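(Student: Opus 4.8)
The plan is to begin from the closed-form modal update \eqref{D_ellfixed}, $D_\ell = m\bigl(\tfrac{2\pi}{3}\sum_k a_k - C(h)\bigr)c_\ell$, which cleanly separates the scalar prefactor $A := \tfrac{2\pi}{3}\sum_k a_k - C(h)$ from the geometric ReLU spectrum $\{c_\ell\}$. First I would pin down the signs. From the explicit coefficients in Section~\ref{Sec:SingleNeuronExpansion} --- $c_0=\pi/2$, $c_1=\sqrt{3\pi}/3$, and $c_\ell=\tfrac{\sqrt\pi}{24}\,2^{-\ell}\sqrt{2\ell+1}\,(\ell+1)(\ell+2)$ for $\ell\ge 2$ --- one reads off $c_\ell>0$ for every $\ell$. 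The hypothesis $\sum_k a_k<\tfrac{3}{2\pi}C(h)$ is exactly $A<0$, so $D_\ell=mA\,c_\ell<0$ for all $\ell$ simultaneously. This already supplies the sign requirement $D_{\ell_1}<0,\ D_{\ell_2}<0$ of the FP comparison and shows the update carries a single, frequency-independent sign --- a first indication that no frequency-selective decay is being produced.

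The substance of the statement lies in comparing $|D_{\ell_1}|$ and $|D_{\ell_2}|$ for $\ell_1<\ell_2$. Because the common factor $m|A|$ cancels, the instantaneous update ratio is $|D_{\ell_1}|/|D_{\ell_2}|=c_{\ell_1}/c_{\ell_2}$, which depends on neither the weights nor the target. I would therefore not argue from the update rates in isolation, but from the genuine modal error, tracking $\hat D_{\ell 0}(t)=\bigl(\sum_k a_k(t)\bigr)c_\ell-h_{\ell 0}$: the aligned network $u=(\sum_k a_k)\,\mathrm{ReLU}(\cos\tau')$ is a one-parameter family, so the only adjustable quantity is the total amplitude $S(t):=\sum_k a_k(t)$, whose gradient flow is a scalar linear ODE with fixed point $S^\ast=\tfrac{3}{2\pi}C(h)$ (precisely the $A=0$ case of Theorem~\ref{thm:fp-fails-zeroD-new}). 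The regime $A<0$ is $S<S^\ast$, where each low-frequency mode still retains an order-one uncancelled target contribution $h_{\ell_1 0}$, while a high-frequency mode may carry a residual $|\hat D_{\ell_2 0}|$ of comparable or larger size; this is the sense in which ``the error terms surmount the exponential decay of $c_\ell$ at sufficiently low frequencies.''

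The step I expect to be the main obstacle is exactly this reversal. Within the pure update model the magnitude ordering is dictated solely by the monotonically decreasing spectrum $\{c_\ell\}$, so a genuine violation cannot originate from the updates themselves and must be furnished by the target- and initialization-dependent terms $h_{\ell 0}$. Accordingly I would make the existential quantifier in ``can be violated'' explicit: exhibit a target $h$ and initialization $\theta_0$ with $\sum_k a_k<\tfrac{3}{2\pi}C(h)$ for which $|\hat D_{\ell_1 0}|\le|\hat D_{\ell_2 0}|$ holds, for instance a target whose high-frequency $j=0$ content $h_{\ell_2 0}$ is large relative to its low-frequency content while $S$ sits in the prescribed sub-projection regime. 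Verifying that inequality for the constructed witness and checking it persists over a nondegenerate interval of the gradient flow is where the real care is needed; the remaining ingredients --- positivity of $c_\ell$, the sign of $A$, and the scalar ODE for $S$ --- are routine bookkeeping.
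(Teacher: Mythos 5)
Your opening paragraph is, in substance, the paper's entire proof of Theorem~\ref{thm:FPviolationSphere}: the paper argues directly from \eqref{D_ellfixed} that $c_{\ell}>0$ for every $\ell$ while the hypothesis $\sum_{k} a_k < \tfrac{3}{2\pi}C(h)$ makes the prefactor $\tfrac{2\pi}{3}\sum_{k} a_k - C(h)$ negative, concludes that every harmonic mode of the error is pushed in the growth direction, and stops there (``the neural network is not convergent''). Where you genuinely depart from the paper is in what you require ``violation'' to mean. You hold the theorem to the magnitude criterion of Section~\ref{Sec:Preliminaries} ($|D_{\ell_1 j}|\le |D_{\ell_2 j}|$ for $\ell_1<\ell_2$), and you correctly observe that the sign argument alone can never produce that reversal: the common factor $m|A|$ cancels, so the instantaneous update ratio is $c_{\ell_1}/c_{\ell_2}>1$, which always favors the lower mode. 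Your remedy --- passing to the true modal residuals $\hat D_{\ell 0}(t)=S(t)\,c_{\ell}-h_{\ell 0}$, reducing the aligned network to the scalar gradient flow of $S(t)=\sum_{k} a_k(t)$ with fixed point $S^{\ast}=\tfrac{3}{2\pi}C(h)$, and exhibiting a witness target $h$ whose high-frequency $j=0$ content dominates --- is a more demanding route that the paper does not attempt; it would establish the literal inequality rather than mere non-convergence. The trade-off is clear: the paper's reading buys a two-line proof (uniform error growth, hence no low-frequency-first decay of any kind occurs), while your reading matches the paper's own definition of FP violation but leaves real work outstanding, since the witness construction and the persistence of the inequality along the flow are only sketched, not verified. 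Nothing in your analysis contradicts the paper; your proposal subsumes its proof and, usefully, makes explicit the gap between what that proof shows and what the paper's stated definition of violation actually asks for.
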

\begin{proof}[Proof of Theorem \ref{thm:FPviolationSphere}]
    This proof can be directly calculated from \eqref{D_ellfixed} that $c_{\ell}$ is positive for all $\ell$ and $\Bigl(\tfrac{2\pi}{3}\!\!\sum_{k=1}^m a_k - C(h)\Bigr)$ is negative. So $\frac{\partial D_{\ell}}{\partial t} > 0$, which means the neural network is not convergent.
\end{proof}

\begin{theorem}[Conditions for FP Preservation on the Sphere]
\label{thm:FPholdSphere}
Under the similar setting and notation as in Theorem~\ref{thm:FPviolationSphere}, but$$
\sum_{k=1}^m a_k > \frac{3}{2\pi}C(h),$$
the frequency principle \emph{holds} for all \(\ell\) and the error terms \(D\bigl(\tau_i,\phi_i\bigr)\) remain of the same symbolic order.  
That is, for higher modes \(\ell\), the exponential decay of \(c_{\ell}\) cannot be balanced by constant-order (or moderately 
growing) error terms, thereby causing the high-frequency components to diminish naturally.
\end{theorem}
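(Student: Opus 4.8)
The plan is to exploit the factorized form of the per-mode update in \eqref{D_ellfixed}, namely \(D_\ell = \beta\, c_\ell\) with the \(\ell\)-independent scalar \(\beta := m\bigl(\tfrac{2\pi}{3}\sum_{k=1}^m a_k - C(h)\bigr)\) and the purely geometric coefficient \(c_\ell\) computed in Section~\ref{Sec:SingleNeuronExpansion}. Because \(\beta\) carries all of the parameter- and target-dependence while \(c_\ell\) carries all of the frequency-dependence, every cross-frequency comparison collapses to a comparison of \(c_{\ell_1}\) against \(c_{\ell_2}\), with the common factor \(\beta\) cancelling. This is exactly what makes the assertion that the error terms ``remain of the same symbolic order'' precise: all modes share the single prefactor \(\beta\).

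First I would fix the sign, mirroring the proof of Theorem~\ref{thm:FPviolationSphere} but with the reversed hypothesis. The assumption \(\sum_{k=1}^m a_k > \tfrac{3}{2\pi}C(h)\) is exactly \(\tfrac{2\pi}{3}\sum_k a_k - C(h) > 0\), so \(\beta > 0\). Since the single-neuron computation gives \(c_\ell > 0\) for every \(\ell\), we obtain \(D_\ell > 0\), hence the coefficient \(-D_\ell\) of \(Y_\ell^0\) in \(\partial D/\partial t\) is strictly negative for all \(\ell\): in contrast with the divergent FP-violation regime, the error coefficient in every mode now decreases, i.e.\ the dynamics are contractive across the whole spectrum.

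Next I would establish the ordering \(|D_{\ell_1}| > |D_{\ell_2}|\) for \(\ell_1 < \ell_2\), which by the factorization reduces to \(c_{\ell_1} > c_{\ell_2}\), i.e.\ to monotone decay of \(c_\ell\). Using the closed form \(c_\ell = \tfrac{\sqrt\pi}{24}\,2^{-\ell}\sqrt{2\ell+1}\,(\ell+1)(\ell+2)\) for \(\ell\ge 2\) (together with the tabulated \(c_0,c_1\)), I would form the ratio
\[
\frac{c_{\ell+1}}{c_\ell} \;=\; \frac12\,\sqrt{\frac{2\ell+3}{2\ell+1}}\;\frac{\ell+3}{\ell+1},
\]
and show it is \(<1\). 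For large \(\ell\) this is immediate since the right-hand side tends to \(\tfrac12\); the delicate part is the smallest indices, where the polynomial ratio \(\tfrac{\ell+3}{\ell+1}\) is largest, so I would check \(\ell=0,1,2\) directly against \(c_0,c_1,c_2\). Monotonicity then yields \(|D_{\ell_1}| = \beta\,c_{\ell_1} > \beta\,c_{\ell_2} = |D_{\ell_2}|\), which is precisely the FP inequality of Section~\ref{Sec:ErrorEvolution}, and the factor \(2^{-\ell}\) guarantees that high modes cannot be sustained against the fixed prefactor \(\beta\).

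I expect the main obstacle to be careful bookkeeping of two things rather than any deep estimate. The first is the sign convention of the FP definition: that definition is phrased for the rate coefficient of \(\partial D/\partial t\) (which here equals \(-D_\ell\)), so I must verify both the negativity of that coefficient and the magnitude ordering, being careful not to conflate \(D_\ell\) with the error coefficient itself. The second is that the hypothesis is an \emph{instantaneous} condition at a training time \(t\): since the \(a_k\) evolve, \(\sum_k a_k\) may cross the threshold \(\tfrac{3}{2\pi}C(h)\) during training, so I would phrase the conclusion pointwise in \(t\) and make explicit that no control of the full trajectory is required --- FP holds exactly at those times where the threshold condition is met.
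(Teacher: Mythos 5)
Your proposal is correct and follows the same core route as the paper: the paper's entire proof is ``similar to the proof of Theorem~\ref{thm:FPviolationSphere},'' i.e.\ the sign analysis of the factorization \(D_\ell = m\bigl(\tfrac{2\pi}{3}\sum_k a_k - C(h)\bigr)c_\ell\), now with the prefactor positive instead of negative, so that every mode's rate coefficient \(-D_\ell\) is negative and the dynamics contract. Your proposal does exactly this, but then goes further than the paper does: you verify the actual FP inequality \(|D_{\ell_1}| > |D_{\ell_2}|\) for \(\ell_1 < \ell_2\) (as defined in Section~2.2) by proving monotone decay of \(c_\ell\) through the ratio \(\tfrac{c_{\ell+1}}{c_\ell} = \tfrac12\sqrt{\tfrac{2\ell+3}{2\ell+1}}\,\tfrac{\ell+3}{\ell+1} < 1\) together with the direct checks at \(\ell=0,1,2\) (the tightest case being \(\ell=2\), where the squared ratio is \(\tfrac{175}{180}\cdot 4 = 3.89 < 4\)). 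The paper never carries out this magnitude comparison --- it appeals only informally to the exponential decay of \(c_\ell\) --- so your version is strictly more complete; your two bookkeeping cautions (the sign convention relating \(D_\ell\) to the rate coefficient of \(Y_\ell^0\), and the pointwise-in-\(t\) reading of the hypothesis) are also both consistent with how the theorem is stated.
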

\begin{proof}[Proof of Theroem \ref{thm:FPholdSphere}]
    The proof is similar to the proof of Theorem \ref{thm:FPviolationSphere}.
\end{proof}
\begin{remark}
These results highlight that high-frequency spherical harmonic components, whose amplitudes decay on the order of 
\(\ell^{5/2}/2^\ell\), are difficult to sustain in the trained network unless the factors 
\(D\bigl(\tau_i,\phi_i\bigr)\,\sin\bigl(2\,\tau_i\bigr)\) grow at least as fast as \(2^\ell / \ell^{5/2}\). 
Consequently, \emph{violating} the frequency principle at large \(\ell\) requires significantly amplified error terms, while 
for moderate or slowly varying \(D\bigl(\tau_i,\phi_i\bigr)\,\sin\bigl(2\,\tau_i\bigr)\), the exponential decay ensures 
that the learned solution exhibits low-frequency dominance in accordance with the FP.
\end{remark}

\noindent
Hence, in summary, once the exponential decay of $c_{\ell}$ is accounted for, only a sufficiently 
large or fast-growing error feedback \(\bigl(\tfrac{2\pi}{3}\sum_{k=1}^m a_k - C(h)\bigr)\) can 
sustain high-frequency modes.  Otherwise, the FP is preserved, manifesting as a bias toward learning 
low-frequency harmonics more rapidly than high-frequency ones.

\subsection{Implications for the Frequency Principle: Numerical Example}
\label{NumericalExampleFixedweight}

In this section, we present a numerical experiment to illustrate how a shallow network on the sphere adheres (or in specific cases, deviates) from the Frequency Principle. We begin by transforming spherical coordinates to a Cartesian representation, describe the neural network model and its loss function, and provide details on the spherical harmonic analysis and error monitoring.

To ensure compatibility with standard neural network inputs, we transform spherical coordinates \(\tau\) (polar angle) and \(\phi\) (azimuthal angle) to Cartesian coordinates \(\mathbf{x} = (x, y, z)\). Concretely, each point on the spherical surface is mapped via
\[
    x = \sin(\tau)\cos(\phi), 
    \quad
    y = \sin(\tau)\sin(\phi), 
    \quad
    z = \cos(\tau).
\]

This representation allows the neural network to operate directly. The numerical experiment employs a shallow feed-forward network to approximate a target function defined on \(S^2\). The network comprises:
\begin{itemize}
    \item An input layer for the spherical coordinates.
    \item A single hidden layer of size \(h\) with ReLU activation.
    \item A linear output layer produces a scalar value \(u_1\).
\end{itemize}

Mathematically, if \(\mathbf{x} = \mathbf{x}(\tau, \phi)\), the model is
\[
    \mathbf{z}_1 
    = \mathrm{ReLU}\!\bigl(W_1\,\mathbf{x}(\tau, \phi)\bigr), 
    \qquad
    u_1 
    = W_2\,\mathbf{z}_1,
\]
where \(W_1,\,W_2\) are weight matrices, $W_1$ is fixed and $W_2$ is trainable.

To learn a function \(h(\tau,\phi)\) on the sphere, we minimize a weighted mean squared error (MSE) reflecting the spherical geometry:
\[
    \mathrm{Loss} 
    \;=\; \frac{1}{N} \sum_{i=1}^N 
        \Bigl(
            u(\tau_i,\phi_i,\theta)
            \;-\;
            h(\tau_i,\phi_i)
        \Bigr)^2
        \,\sin(\tau_i).
\]
The factor \(\sin(\tau_i)\) incorporates the differential area element \(d\Omega=\sin(\tau)\,d\tau\,d\phi\), ensuring uniform coverage of the sphere's surface in the loss computation.

To quantify approximation accuracy and analyze how different frequency components evolve, we compute the spherical harmonic coefficients \(c_{lj}\). The spherical harmonics \(Y_{l}^{j}(\tau,\phi)\) are defined by
\[
    Y_{l}^{j}(\tau,\phi)
    \;=\;
    \sqrt{\frac{2l+1}{4\pi}\,\frac{(l-j)!}{(l+j)!}}\;
    P_l^j(\cos\tau)\;e^{\,i\,j\,\phi},
\]
where \(P_l^j\) are the associated Legendre polynomials. The coefficients of the neural network's learned function \(h(\tau,\phi)\) are computed via
\[
    c_{lj}
    \;=\;
    \int_0^\pi\!\int_0^{2\pi}
        h(\tau,\phi)\;Y_{l}^{j^*}(\tau,\phi)\;\sin(\tau)\;d\phi\,d\tau,
\]
with \(Y_{l}^{j^*}\) denoting the complex conjugate of \(Y_{l}^{j}\). In practice, the code approximates this double integral using a suitable numerical quadrature over sampled points on the sphere.

We train the network by iteratively minimizing the MSE discussed above. At each epoch, two primary error metrics are tracked:
\begin{enumerate}
    \item \textit{Training loss}: The global mean squared error over the spherical dataset, often plotted on a logarithmic scale to highlight convergence behavior.
    \item \textit{Spherical harmonic coefficient errors}: We compare the learned coefficients \(c_{lj,\mathrm{pred}}\) with the true coefficients \(c_{lj,\mathrm{true}}\) for all degrees up to \(l_{\max}\). Specifically, we track 
    \(\bigl|c_{lj,\mathrm{pred}} \;-\; c_{lj,\mathrm{true}}\bigr|\) and visualize these discrepancies as heat maps over the \((l,j)\) grid.
\end{enumerate}
This approach, monitoring both the global MSE and the frequency decomposition, highlights whether the network follows the typical “low frequencies first” trajectory associated with the Frequency Principle. Prolonged or unexpected elevation in high-frequency modes can thus signal a deviation from the principle, shedding light on potential factors (e.g., network initialization, optimization hyper-parameters) that can alter the usual learning order.

\subsubsection{$u(\tau,\phi)=0$}

We consider a target function $u(\tau,\phi)=0$ defined on the unit sphere $\mathbb{S}^2$. The neural network architecture consists of a shallow network with 100 neurons in the hidden layer, employing ReLU activation functions. The training data comprises 100 sampling points uniformly distributed on the sphere. The network is trained using stochastic gradient descent (SGD) with a learning rate of $1e{-3}$ over 10,000 epochs.

\textbf{Case 1: Complete Adherence to Frequency Principle}
\begin{figure}
    \centering
    \includegraphics[width=6cm, height =4cm]{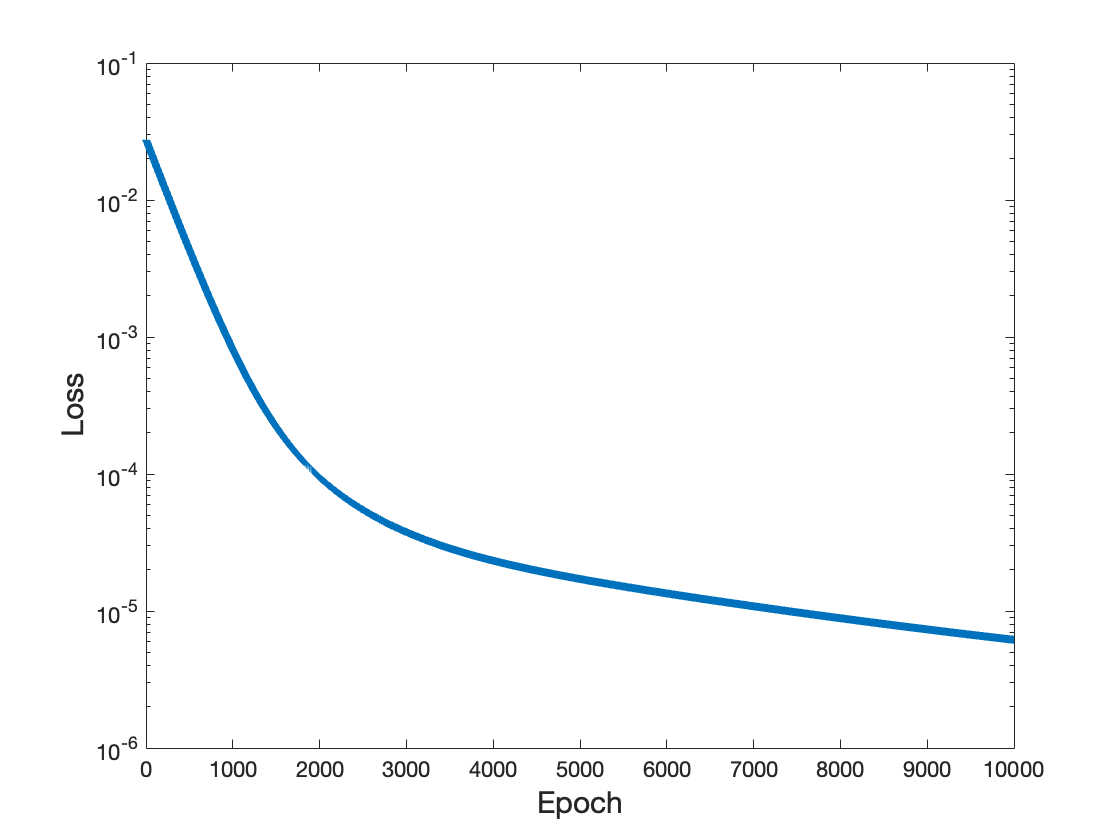}
    \caption{Image of Loss of the first test (Best Loss: 6.16e-6).}
    \label{1test}
\end{figure}
\begin{figure}
    \centering
    \subfigure[$l = 1,2,\ldots, 5$, $j=0$]{\includegraphics[width=6cm, height =4cm]{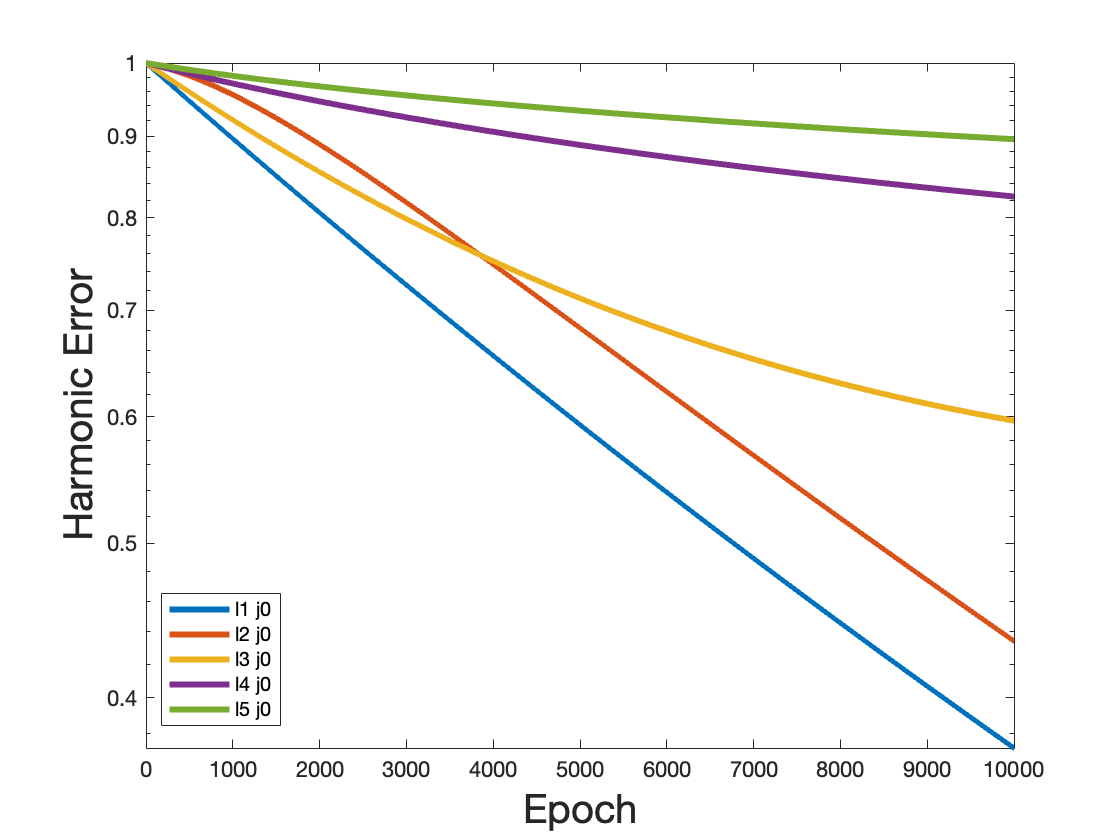}}
    \subfigure[$l = 6,7,\ldots, 10$, $j=0$]{\includegraphics[width=6cm, height =4cm]{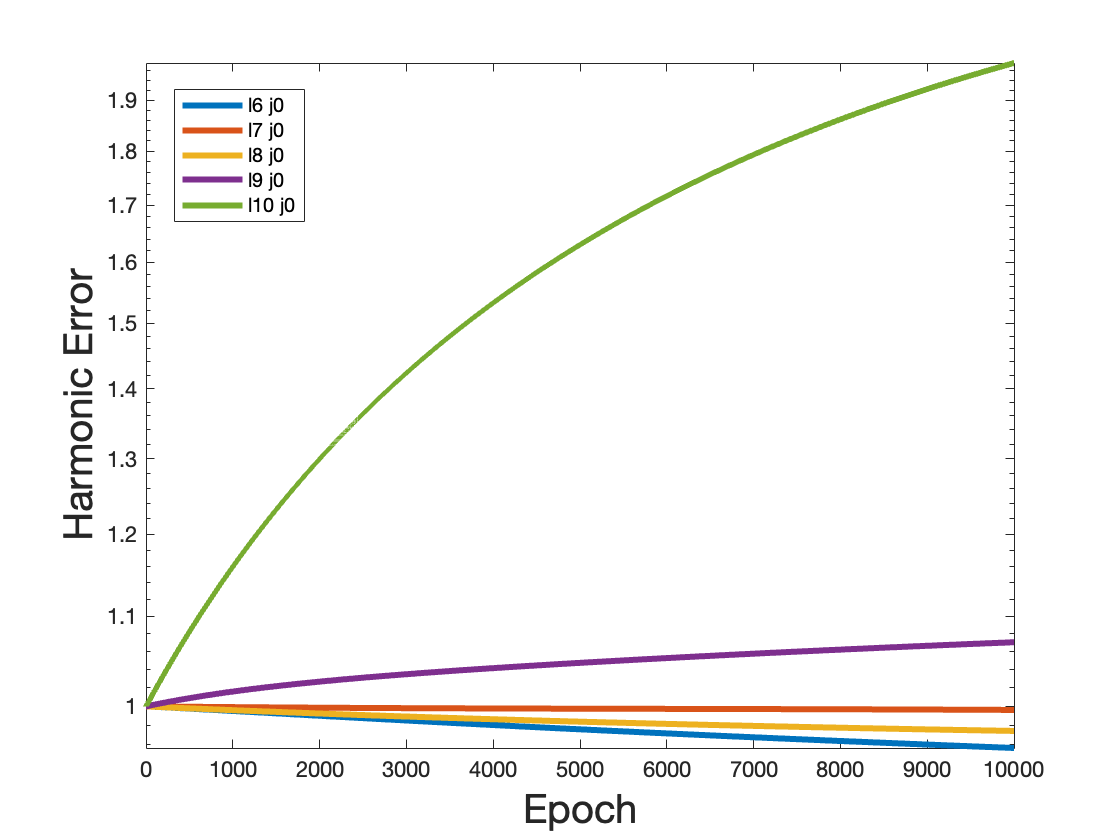}}
    \caption{Harmonic Errors of $c_{lj}$ When $l = 1,2,\ldots, 10$, $j=0$.}
    \label{1testalphabetam0}
\end{figure}

\begin{figure}
    \centering
    \subfigure[Image of the target function]{\includegraphics[width=6cm, height =4cm]{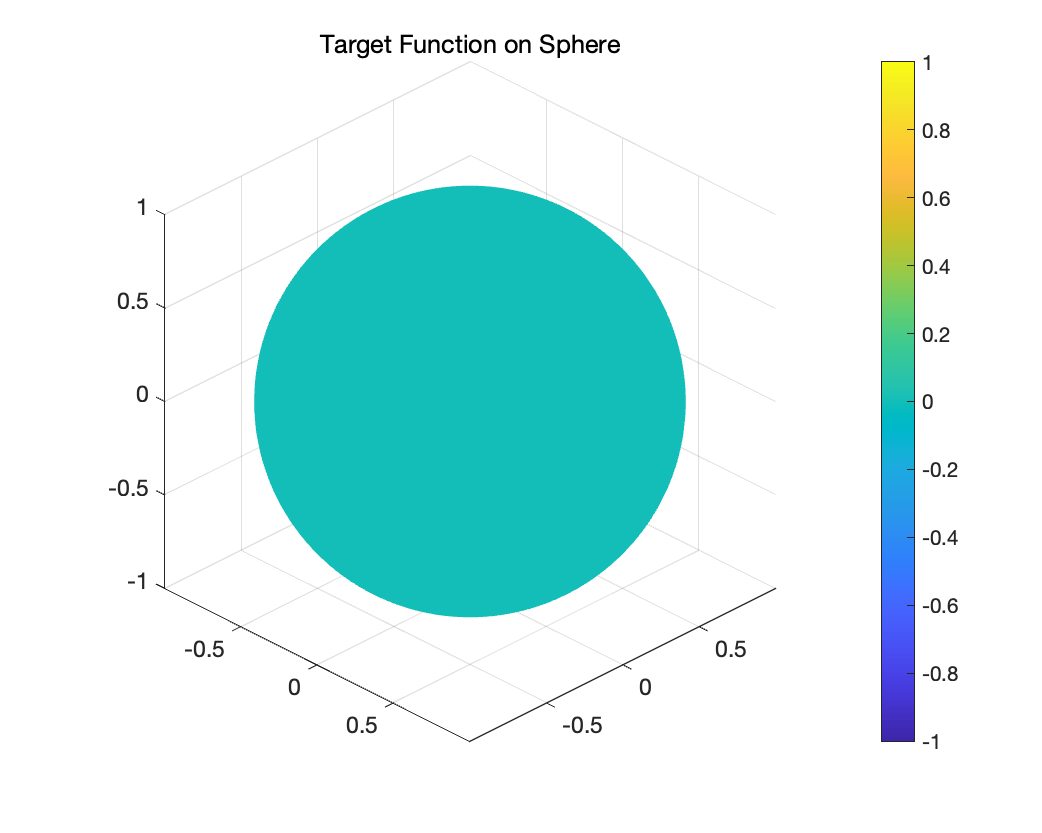}}
    \subfigure[Image of the SNN output]{\includegraphics[width=6cm, height =4cm]{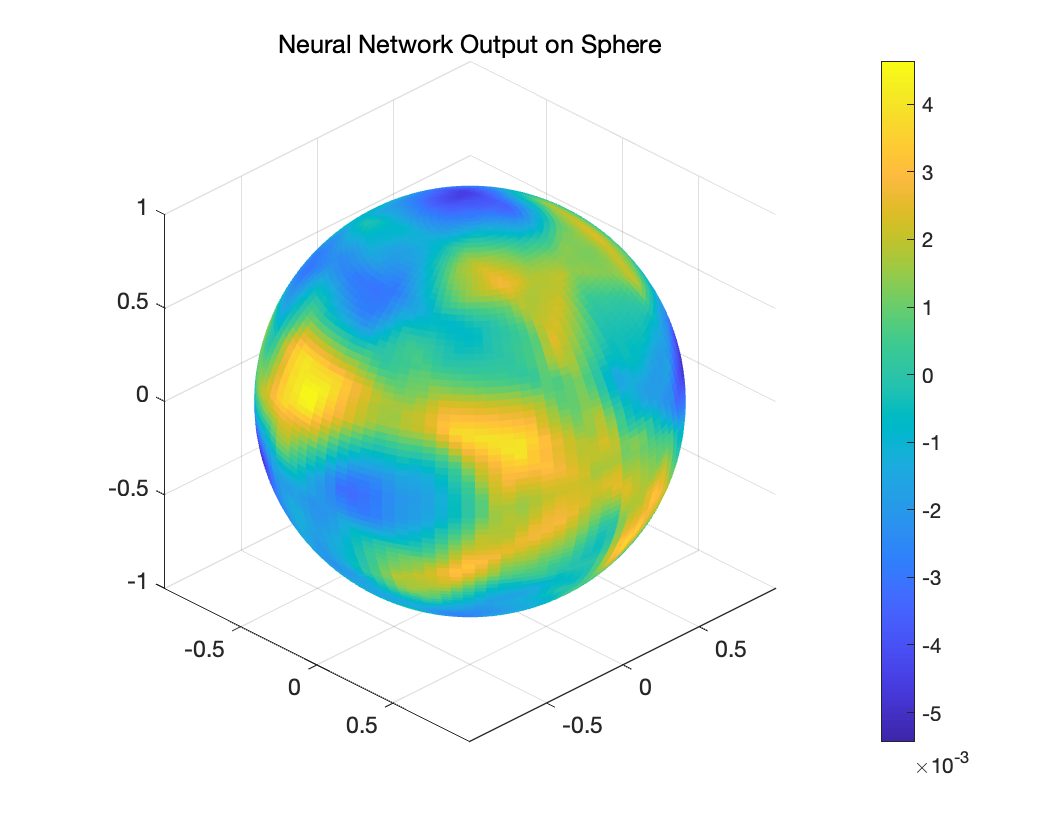}}

   \subfigure[Image of the error between the target function and SNN]{\includegraphics[width=6cm, height =4cm]{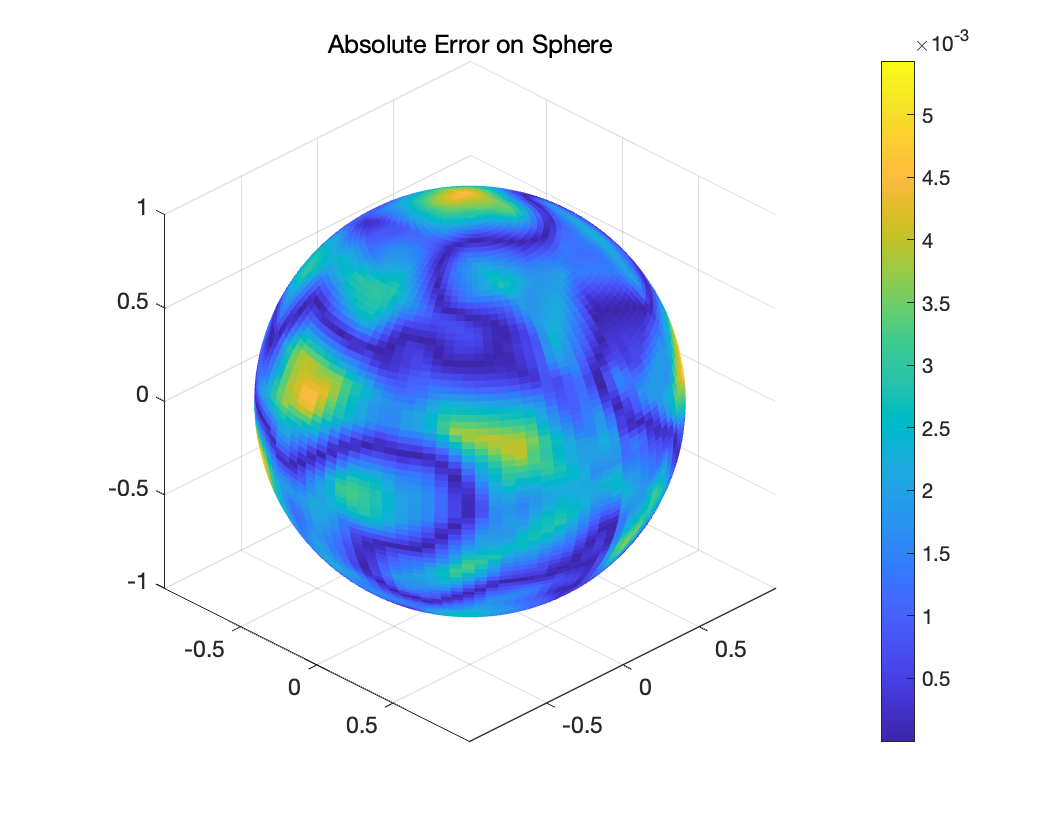}}
    \caption{The target function, SNN output, and error of the first test.}
    \label{1testdnntarget}
\end{figure}

From Figure \ref{1test}, we can see that the loss goes down to 6.16e-6, and Figure \ref{1testalphabetam0} shows the harmonic error of the first 10 frequencies of $j=0$. In this case, the neural network's learning behavior strictly follows the frequency principle. The network first learns low-frequency components of the target function before progressing to higher frequencies. This hierarchical learning pattern is evident in the spherical harmonic coefficient errors, which show a clear decremental pattern from lower to higher degrees.

Visualizing the target function, network output, and absolute error demonstrates the high accuracy of the approximation (Figure \ref{1testdnntarget}). The error distribution is notably uniform across the sphere, indicating that the network has successfully captured both low and high-frequency components of the target function.

\textbf{Case 2: Partial Adherence to Frequency Principle}
\begin{figure}
    \centering
    \includegraphics[width=6cm, height =4cm]{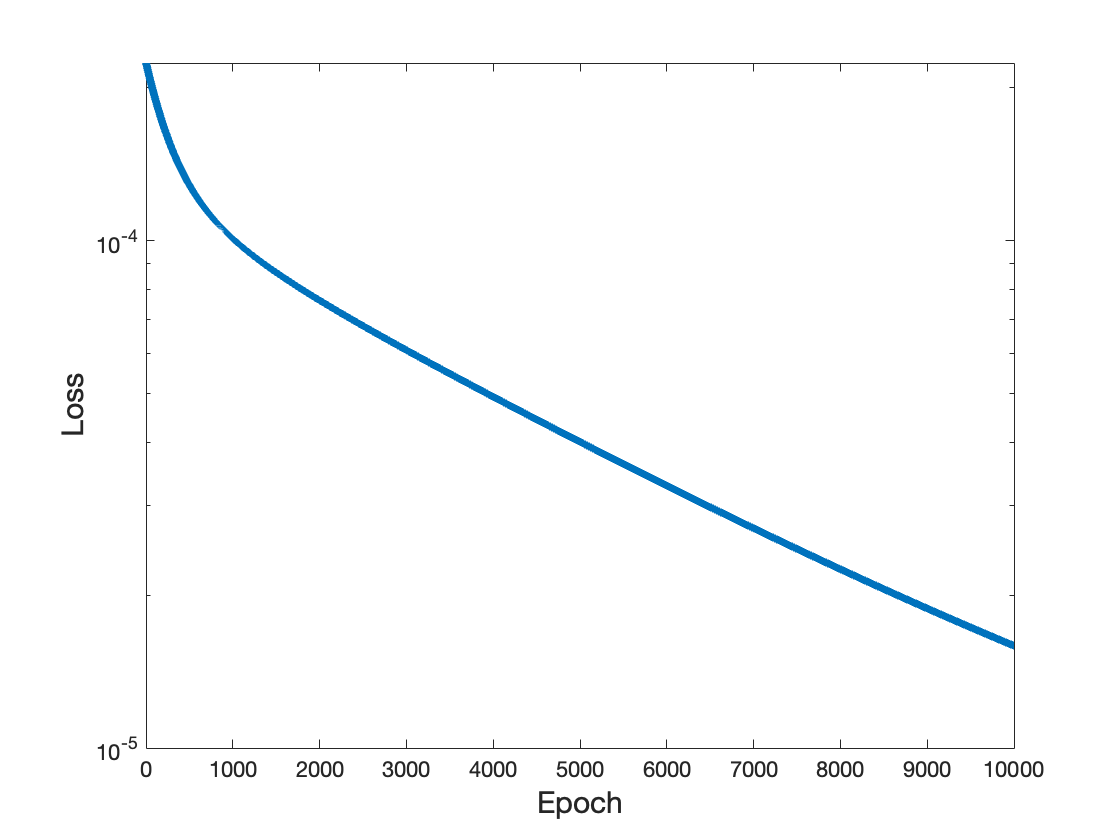}
    \caption{Image of Loss of the second test (Best Loss: 1.59e-5).}
    \label{2test}
\end{figure}
\begin{figure}
    \centering
    \subfigure[$l = 1,2,\ldots, 5$, $j=0$]{\includegraphics[width=6cm, height =4cm]{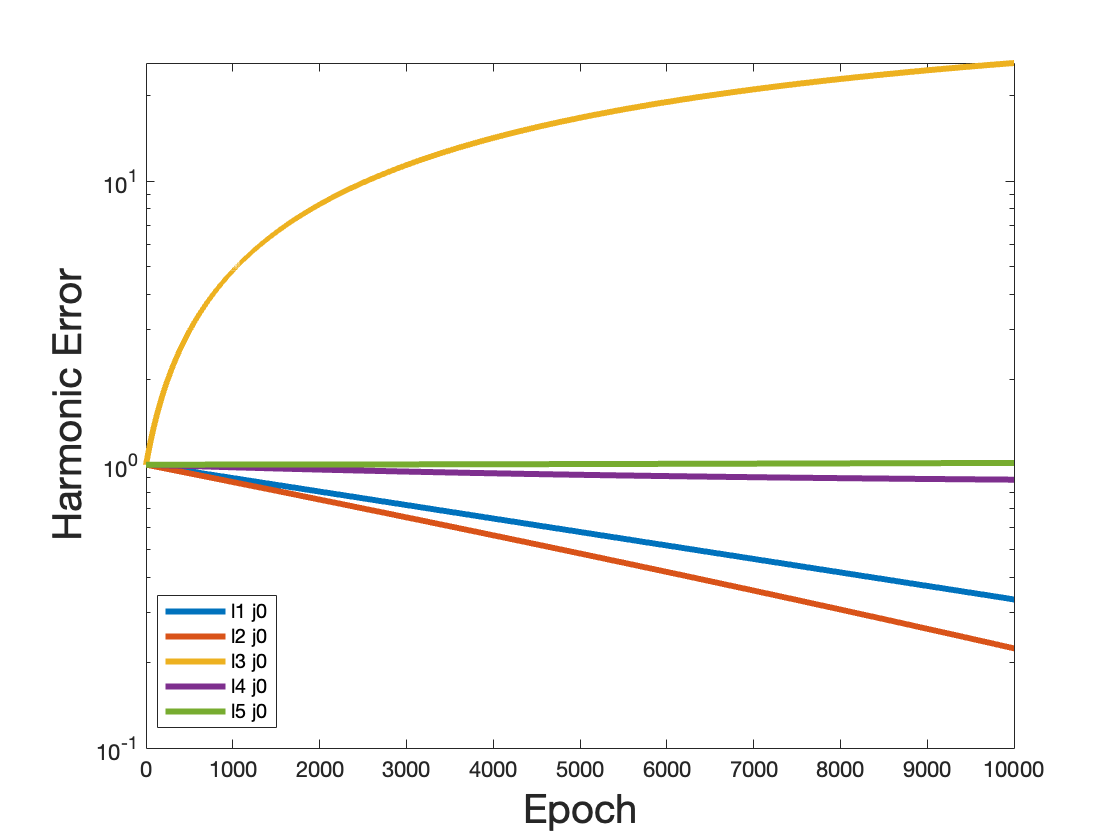}}
    \subfigure[$l = 6,7,\ldots, 10$, $j=0$]{\includegraphics[width=6cm, height =4cm]{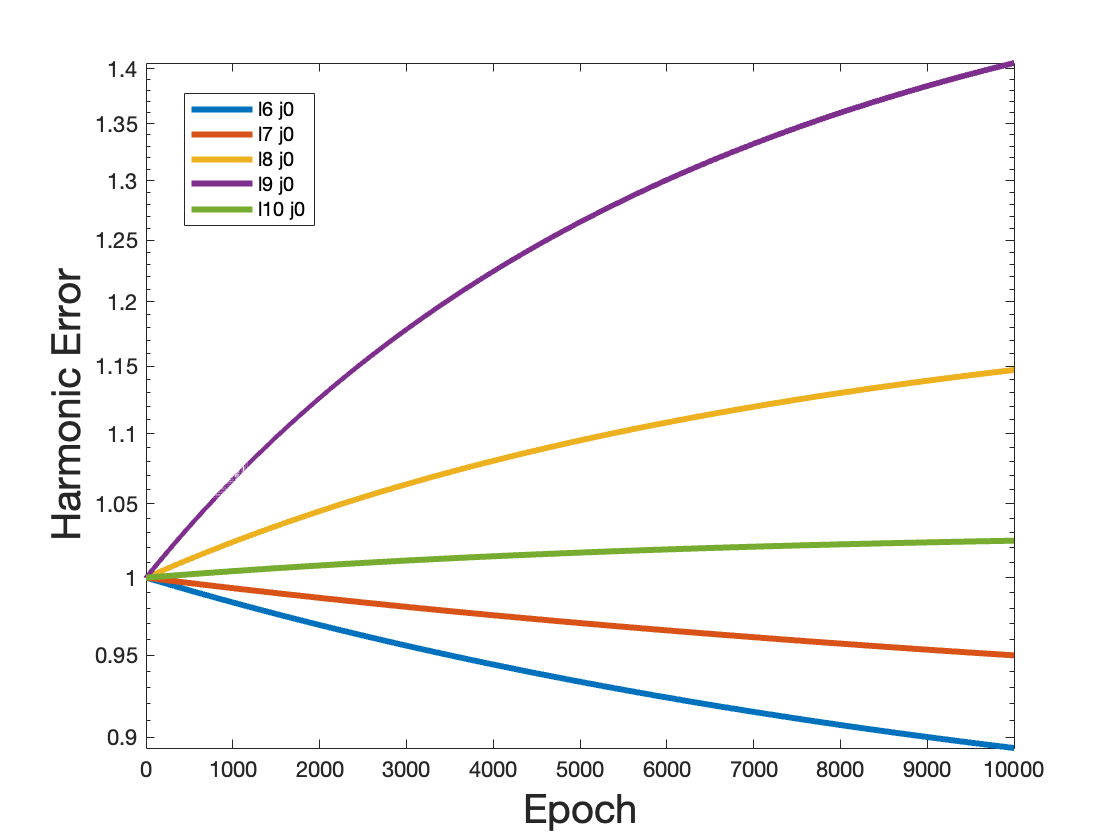}}
    \caption{Harmonic Errors of $c_{lj}$ When $l = 1,2,\ldots, 10$, $j=0$.}
    \label{2testalphabeta}
\end{figure}

\begin{figure}
    \centering
    \subfigure[Image of the target function]{\includegraphics[width=6cm, height =4cm]{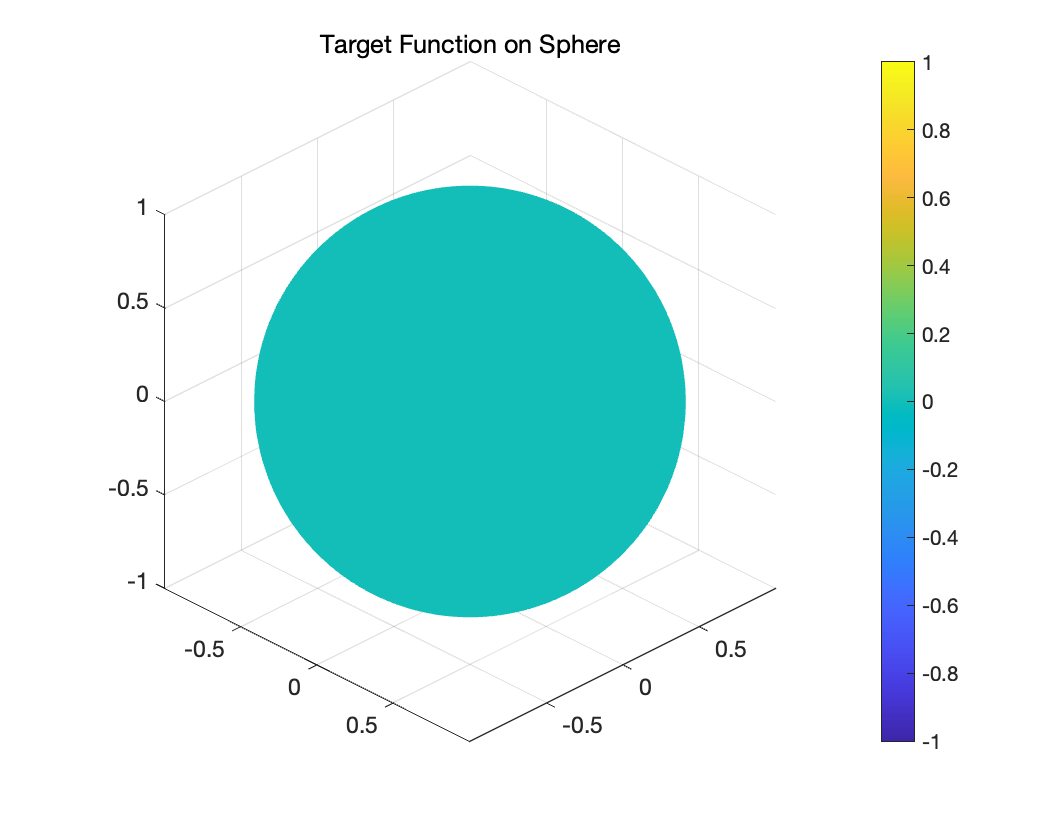}}
    \subfigure[Image of the SNN output]{\includegraphics[width=6cm, height =4cm]{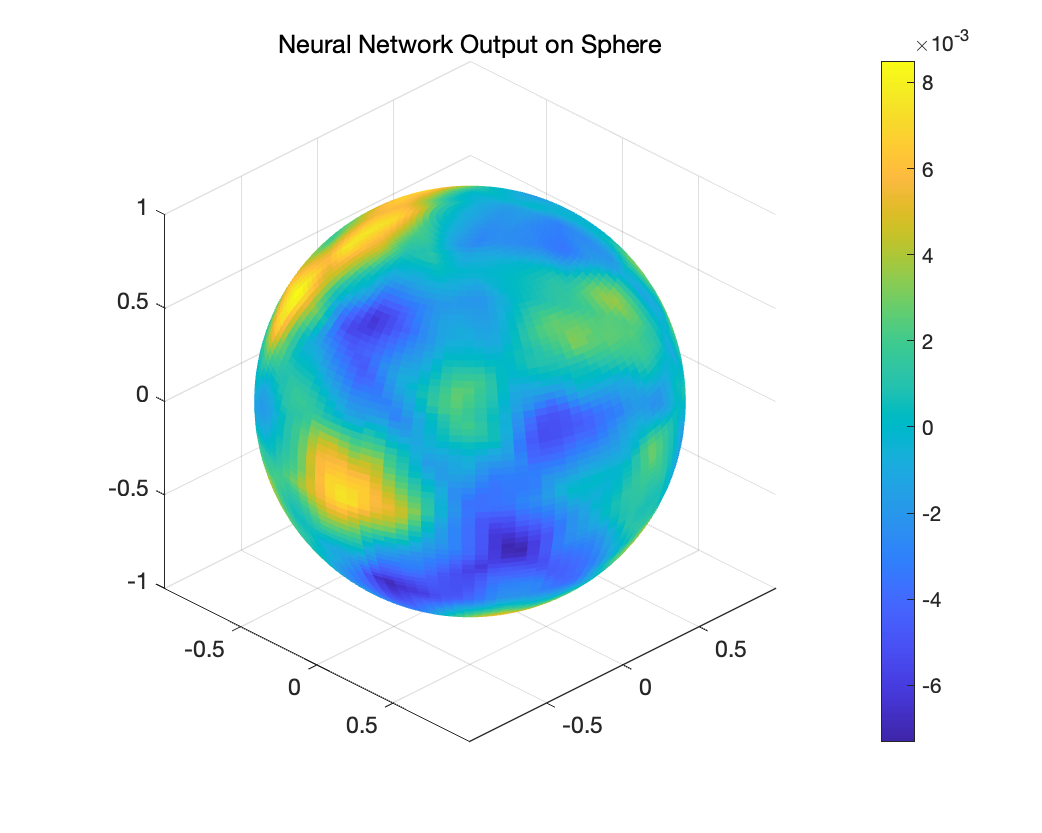}}

   \subfigure[Image of the error between the target function and SNN]{\includegraphics[width=6cm, height =4cm]{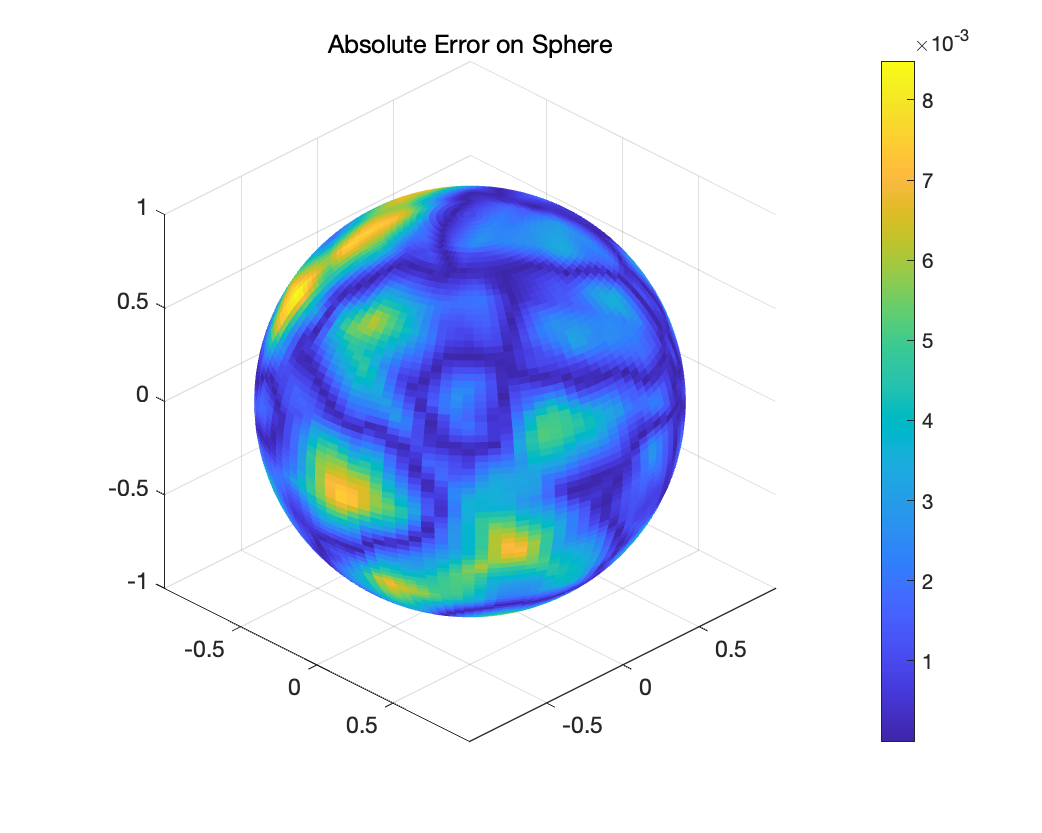}}
    \caption{The target function, SNN output, and error of the second test.}
    \label{2testdnntarget}
\end{figure}

Figure \ref{2test} shows that the loss goes down to 1.59e-5, and Figure \ref{2testalphabeta} shows the harmonic error of the first 10 frequencies of $j=0$. We observe a mixed learning pattern in this intermediate case where the network partially adheres to the frequency principle. While lower-frequency components are generally learned earlier, some higher-frequency components are learned simultaneously or even earlier than certain low-frequency components. This behavior is reflected in the non-monotonic pattern of spherical harmonic coefficient errors across different degrees.

The spatial distribution of the error shows localized regions of higher discrepancy, suggesting that certain frequency components remain challenging for the network to approximate accurately. This partial adherence may be attributed to the specific initialization scheme and the optimization trajectory taken during training.

\textbf{Case 3: Contradiction to Frequency Principle}
\begin{figure}
    \centering
    \includegraphics[width=6cm, height =4cm]{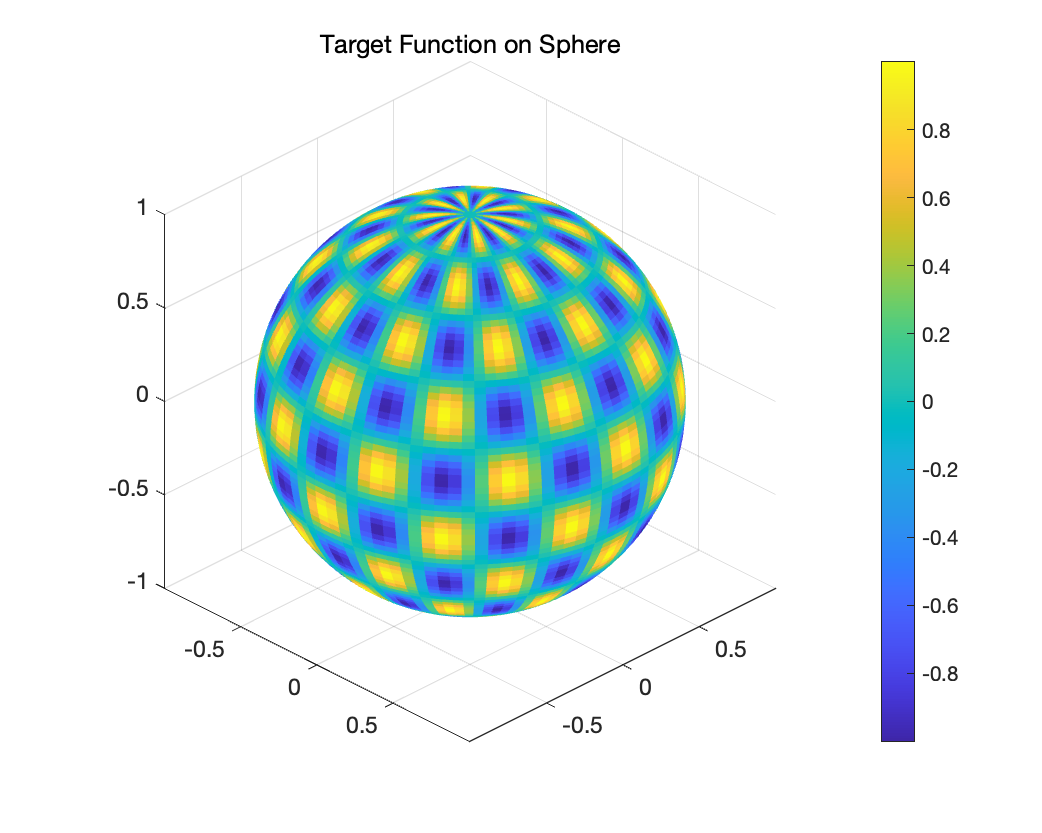}
    \caption{Image of the initialization of the third test.}
    \label{3testhighinitial}
\end{figure}
\begin{figure}
    \centering
    \includegraphics[width=6cm, height =4cm]{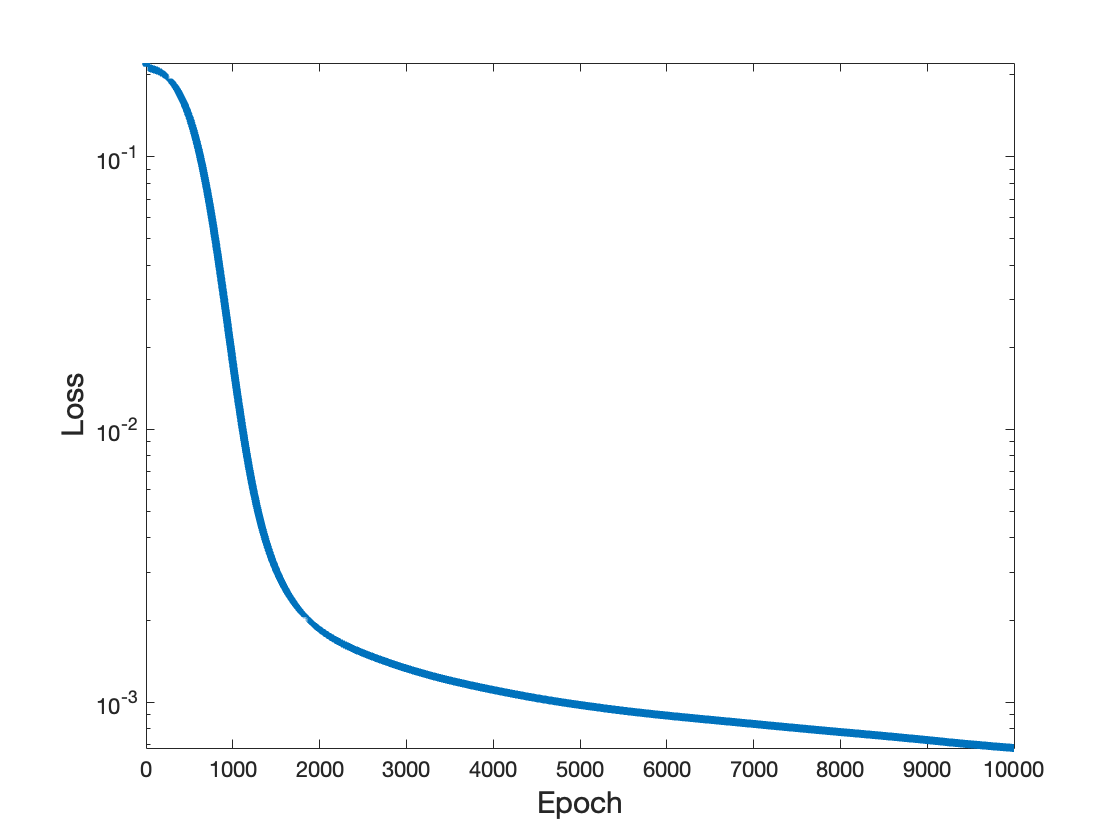}
    \caption{Image of Loss of the third test (Best Loss: 6.79e-4).}
    \label{3test}
\end{figure}
\begin{figure}
    \centering
    \subfigure[$l = 1,2,\ldots, 5$, $j=0$]{\includegraphics[width=6cm, height =4cm]{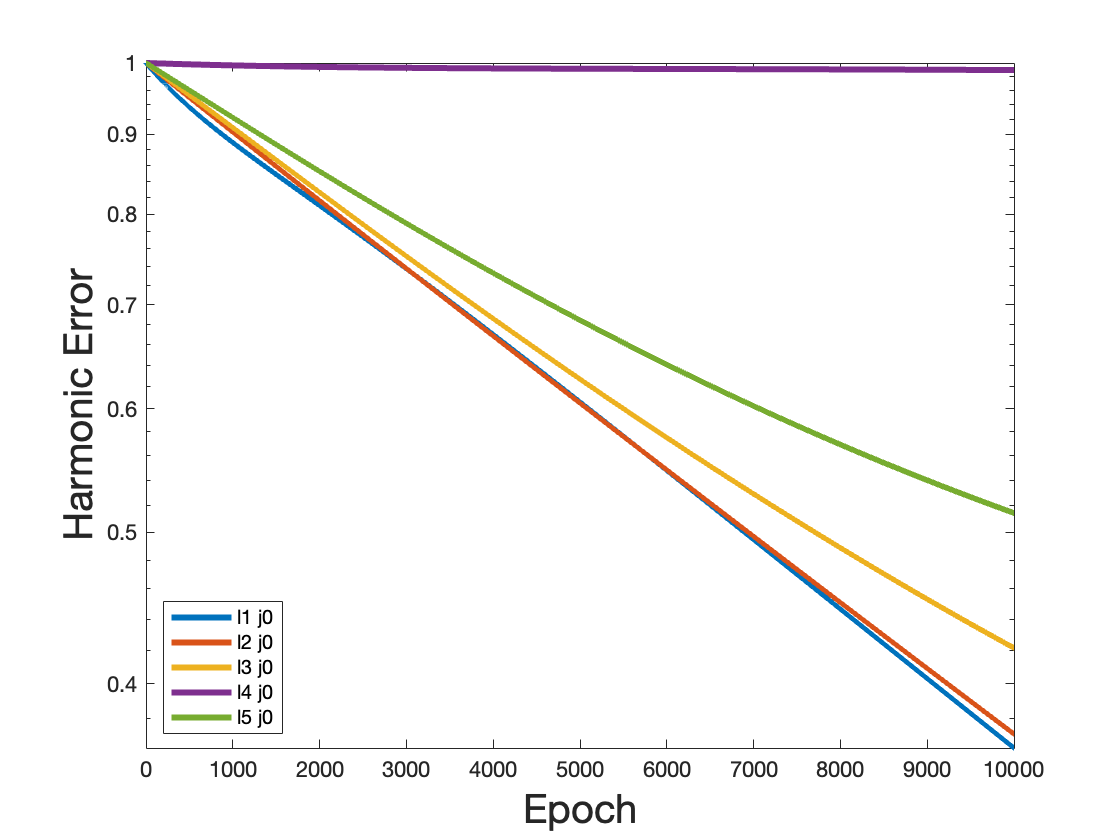}}
    \subfigure[$l = 6,7,\ldots, 10$, $j=0$]{\includegraphics[width=6cm, height =4cm]{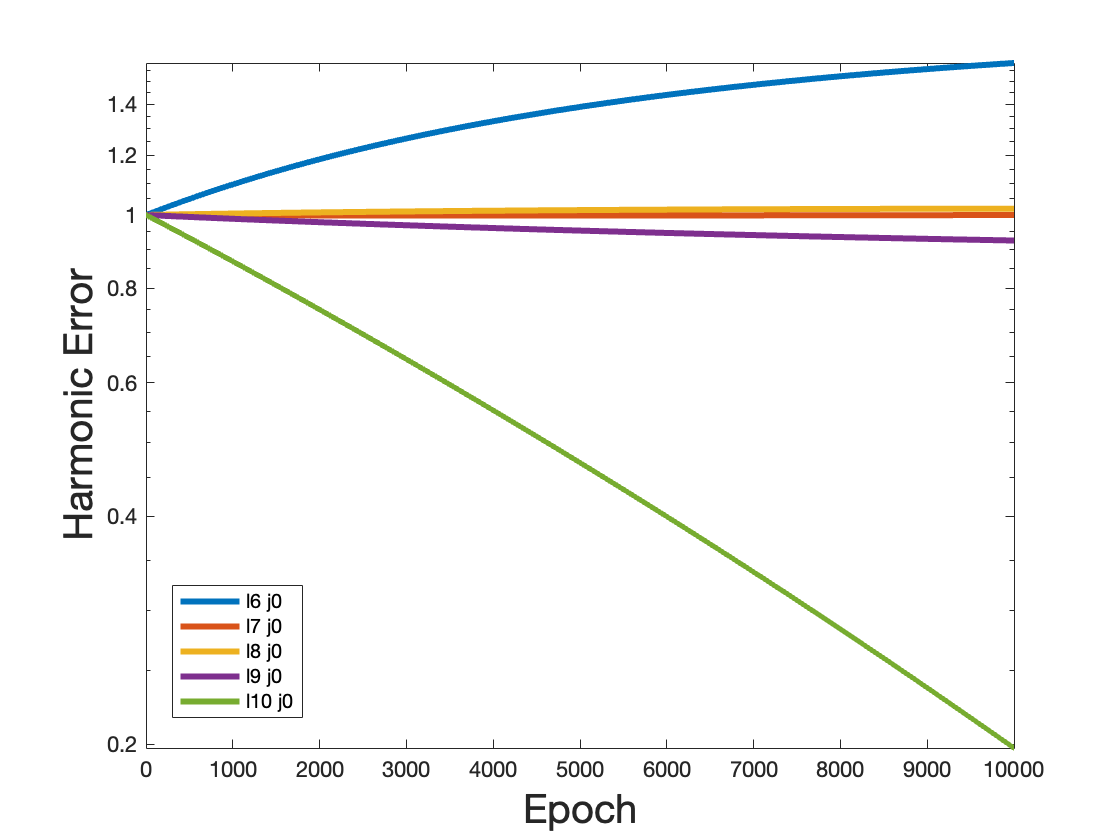}}
    \caption{Harmonic Errors of $c_{lj}$ When $l = 1,2,\ldots, 10$, $j=0$.}
    \label{3testalphabeta}
\end{figure}

\begin{figure}
    \centering
    \subfigure[Image of the target function]{\includegraphics[width=6cm, height =4cm]{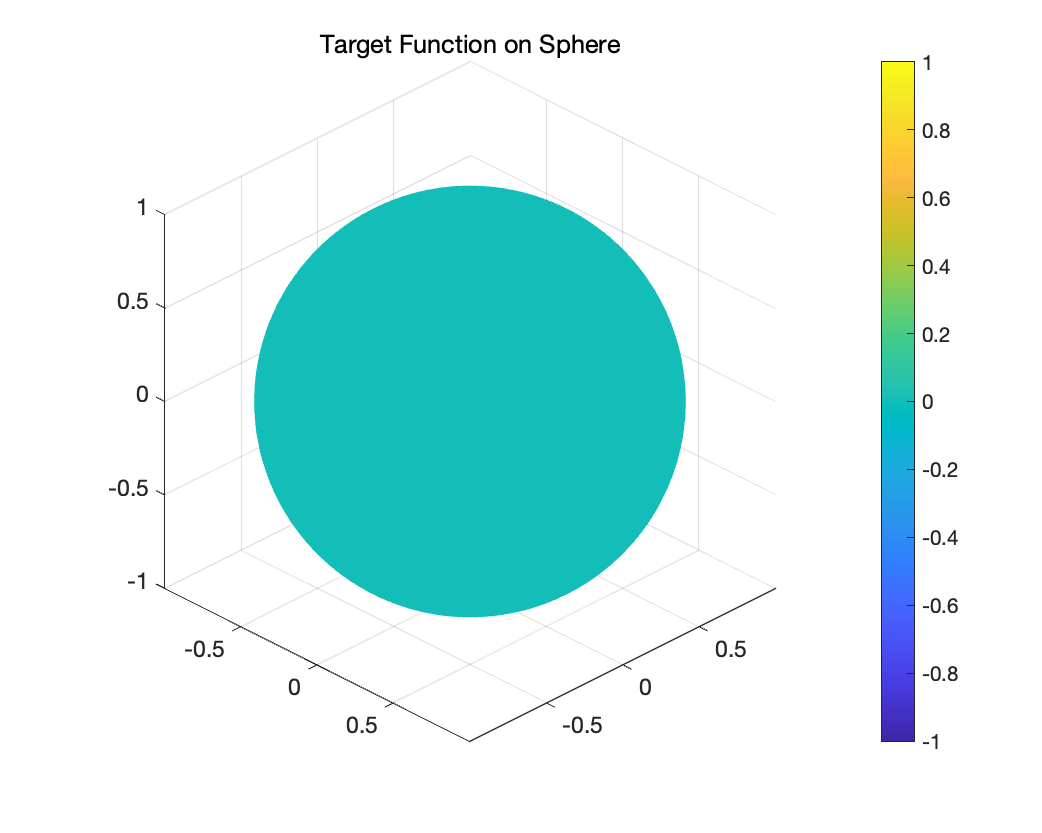}}
    \subfigure[Image of the SNN output]{\includegraphics[width=6cm, height =4cm]{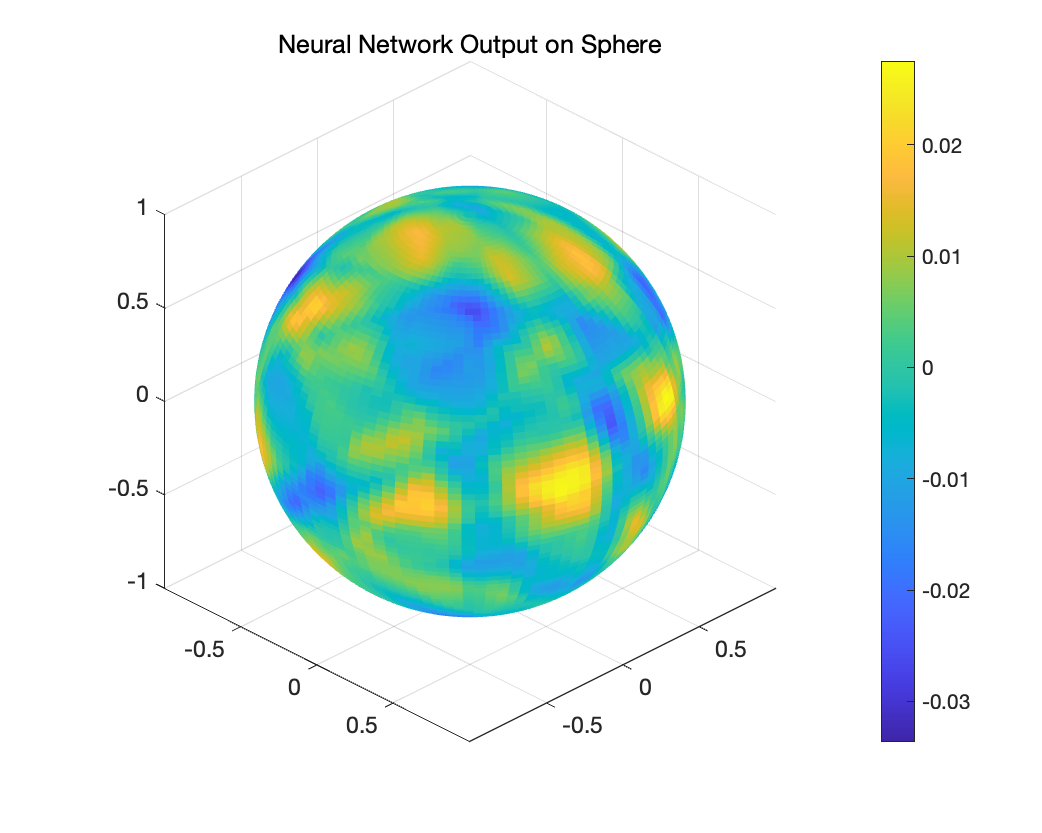}}

   \subfigure[Image of the error between the target function and SNN]{\includegraphics[width=6cm, height =4cm]{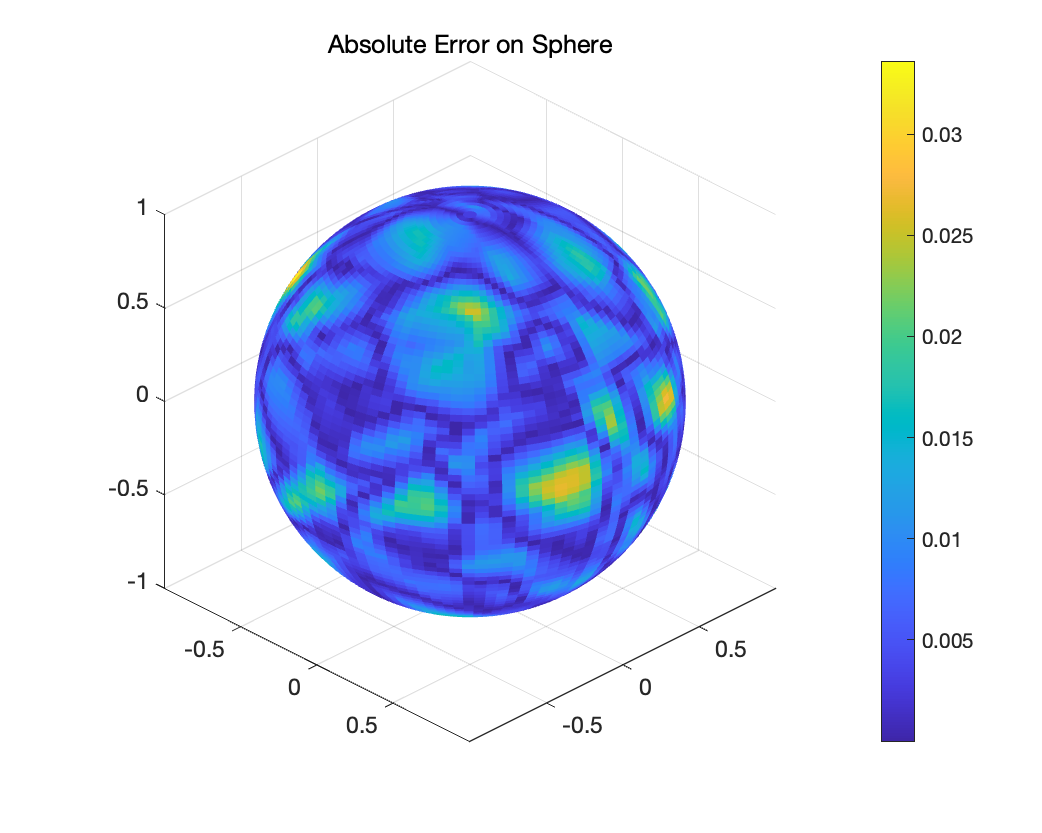}}
    \caption{The target function, SNN output, and error of the third test.}
    \label{3testdnntarget}
\end{figure}

In this case, we give a high-frequency initialization $\sin(10\tau)\cos(10\phi)$ as Figure \ref{3testhighinitial} shown. Then, the network exhibits learning behavior that contradicts the frequency principle. Surprisingly, higher-frequency components are learned before lower-frequency components, as evidenced by the spherical harmonic coefficient error plots. This unexpected behavior demonstrates that while the frequency principle is often observed in neural networks, it is not universal and can be violated under certain conditions.

The error distribution shows significant variations across the sphere, with particularly large errors in regions corresponding to low-frequency components. This case provides valuable insights into the limitations of the frequency principle and suggests that network architecture, initialization, and optimization parameters play crucial roles in determining learning patterns.

\textbf{Comparative Analysis}
The three cases presented above demonstrate the diverse learning behaviors possible in shallow neural networks on the sphere. While the frequency principle often serves as a useful framework for understanding neural network learning dynamics, our experiments show that its applicability can vary significantly. These variations may be attributed to several factors:
\begin{itemize}
    \item The choice of network initialization, which influences the initial distribution of frequency components in the network's output
    \item The optimization trajectory determined by the SGD algorithm and learning rate
    \item  The specific architecture of the shallow network and its capacity to represent different frequency components
\end{itemize}

These findings contribute to our understanding of neural network learning dynamics on manifolds and suggest that the frequency principle should be considered as a general tendency rather than a universal law.

\subsubsection{$u(\tau,\phi) = \sin(\tau)\cos(3\phi) + \sin(3\tau)\cos(5\phi)$}

We examine a more complex target function defined on the unit sphere $\mathbb{S}^2$ that combines multiple frequency components. The function $u(\tau,\phi) = \sin(\tau)\cos(3\phi) + \sin(3\tau)\cos(5\phi)$ incorporates both low and high-frequency terms in both $\tau$ and $\phi$ directions. The neural network architecture remains consistent with our previous analysis, employing 100 neurons in the hidden layer with ReLU activation functions. Training is conducted using SGD with a learning rate of $1e{-3}$ over 100,000 epochs on 100 uniformly distributed sampling points.

\textbf{Case 1: Partial Adherence to Frequency Principle}
\begin{figure}
    \centering
    \includegraphics[width=6cm, height=4cm]{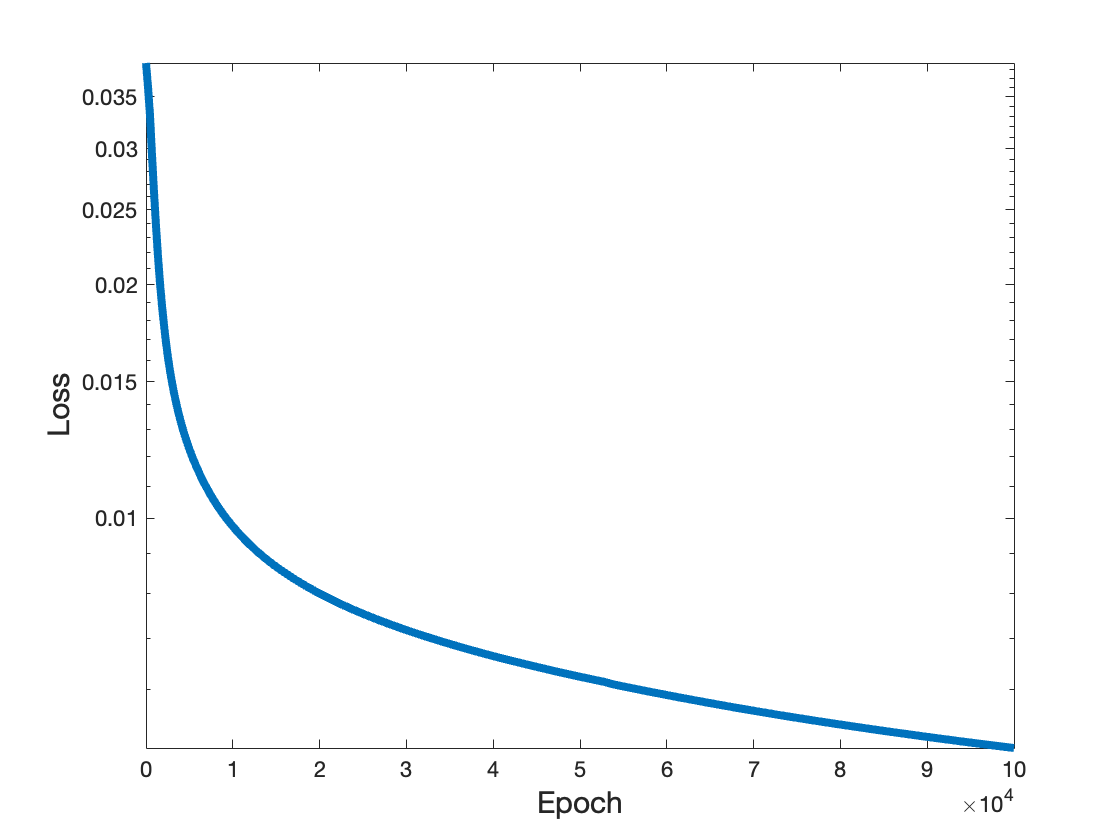}
    \caption{Image of Loss of the fourth test (Best Loss: 5.7e-3).}
    \label{trig1test}
\end{figure}

\begin{figure}
    \centering
    \subfigure[$l = 1,3,5$, $j=0$]{\includegraphics[width=6cm, height=4cm]{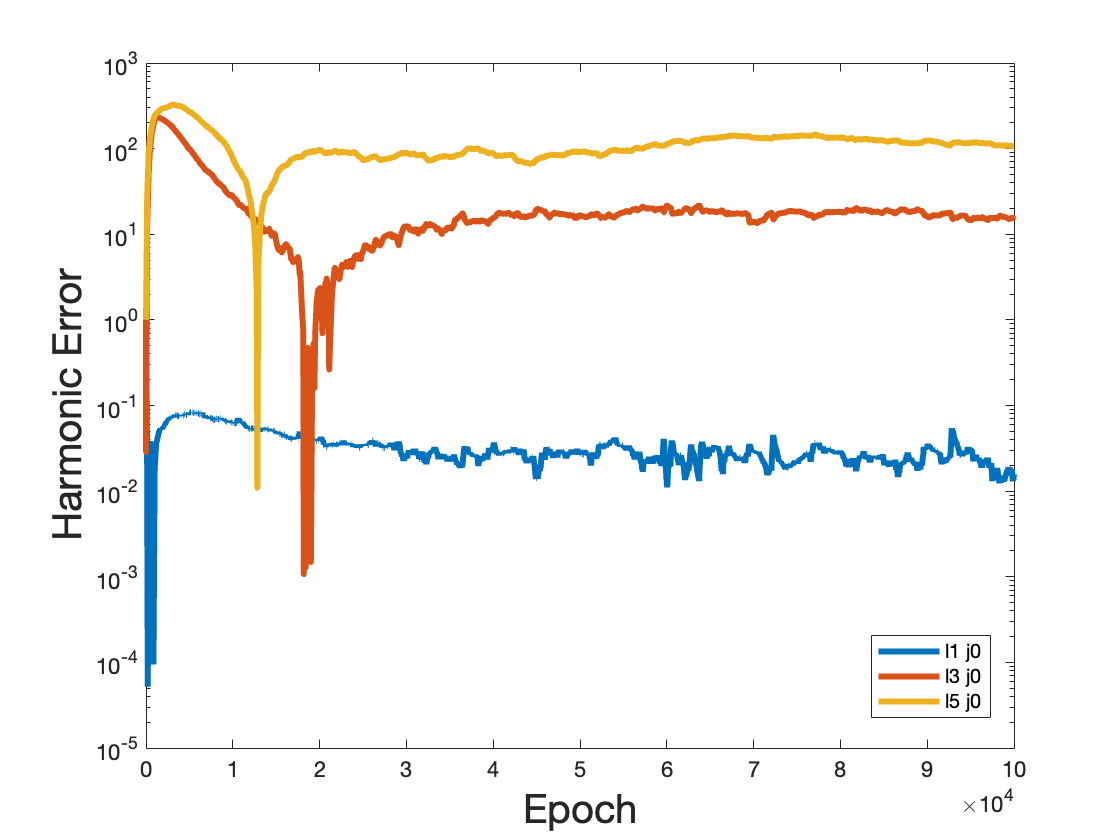}}
    
    \subfigure[$l = 1,2,\ldots,5$, $j=0$]{\includegraphics[width=6cm, height=4cm]{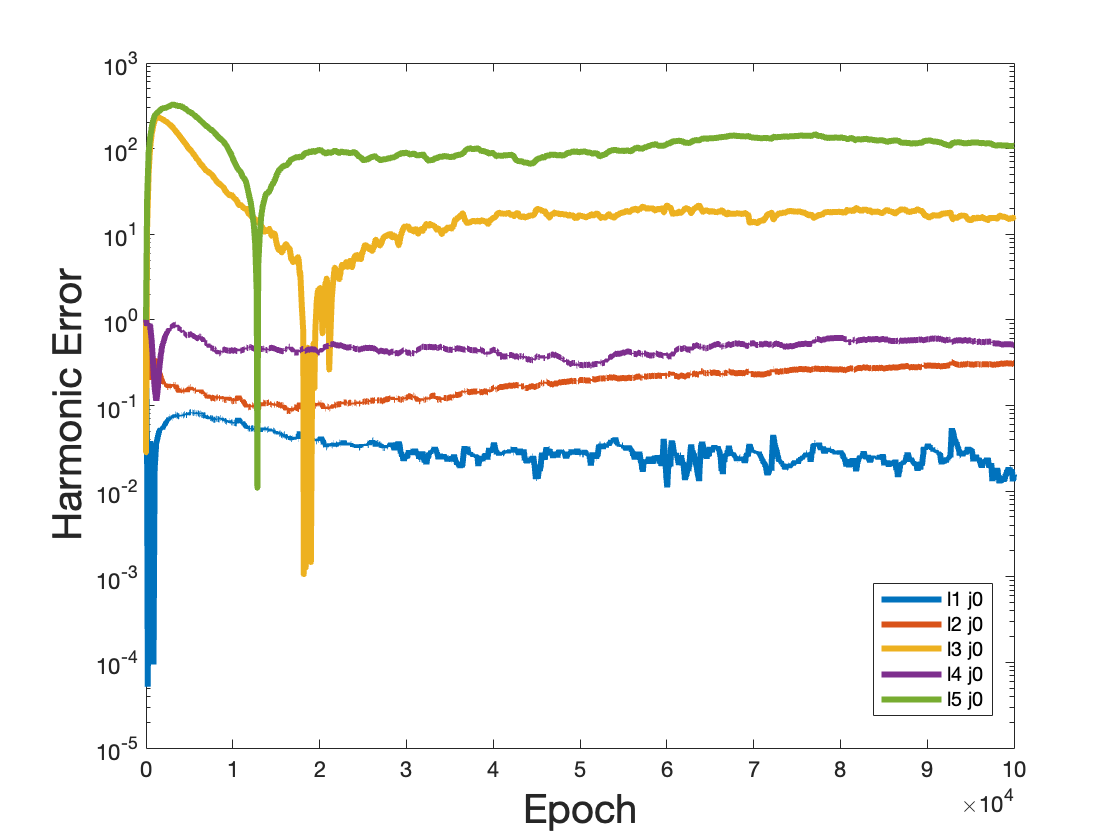}}
    \subfigure[$l = 6,7,\ldots,10$, $j=0$]{\includegraphics[width=6cm, height=4cm]{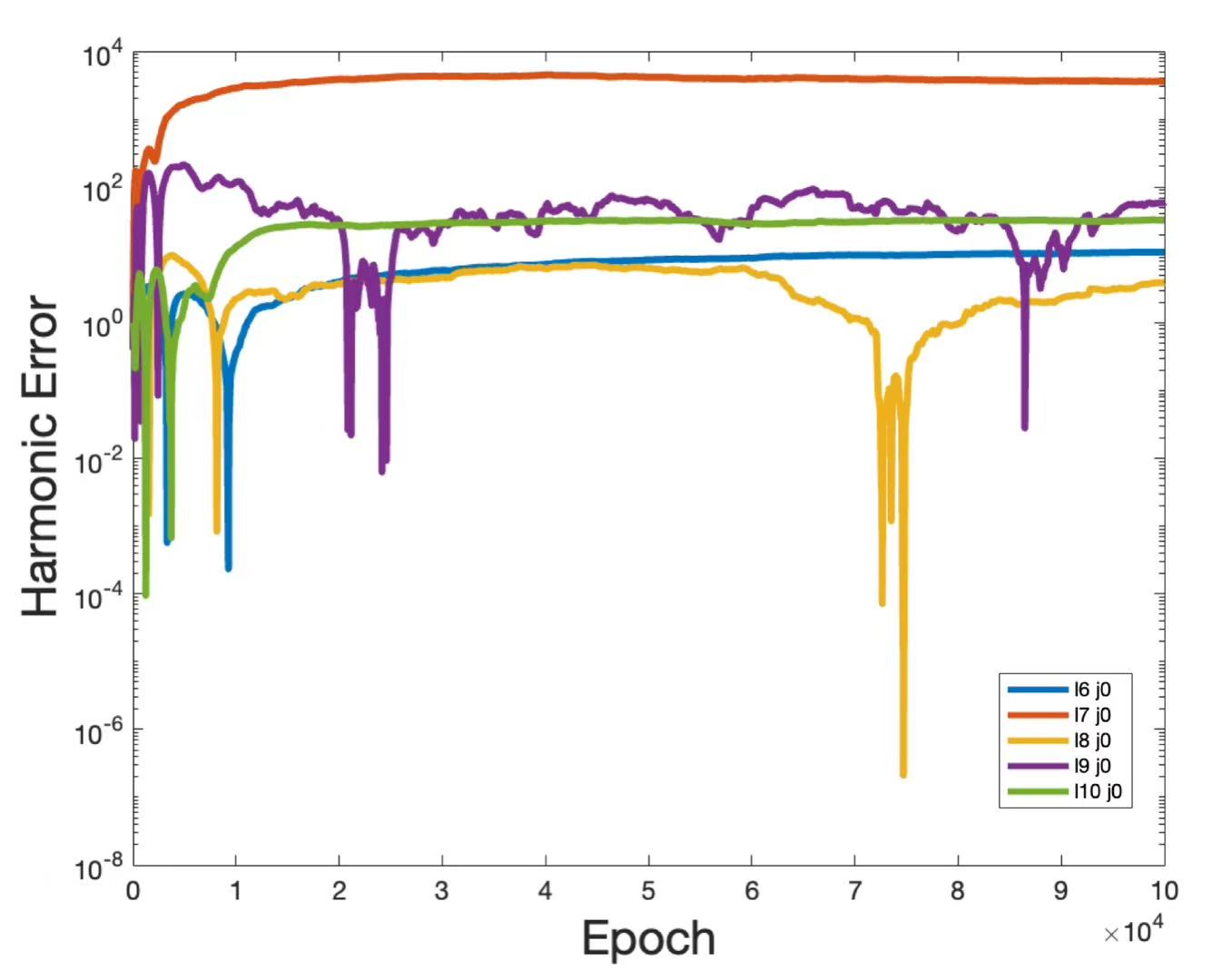}}
    \caption{Harmonic Errors showing mixed learning patterns.}
    \label{trig1testharmonic}
\end{figure}

\begin{figure}
    \centering
    \subfigure[Target function]{\includegraphics[width=6cm, height=4cm]{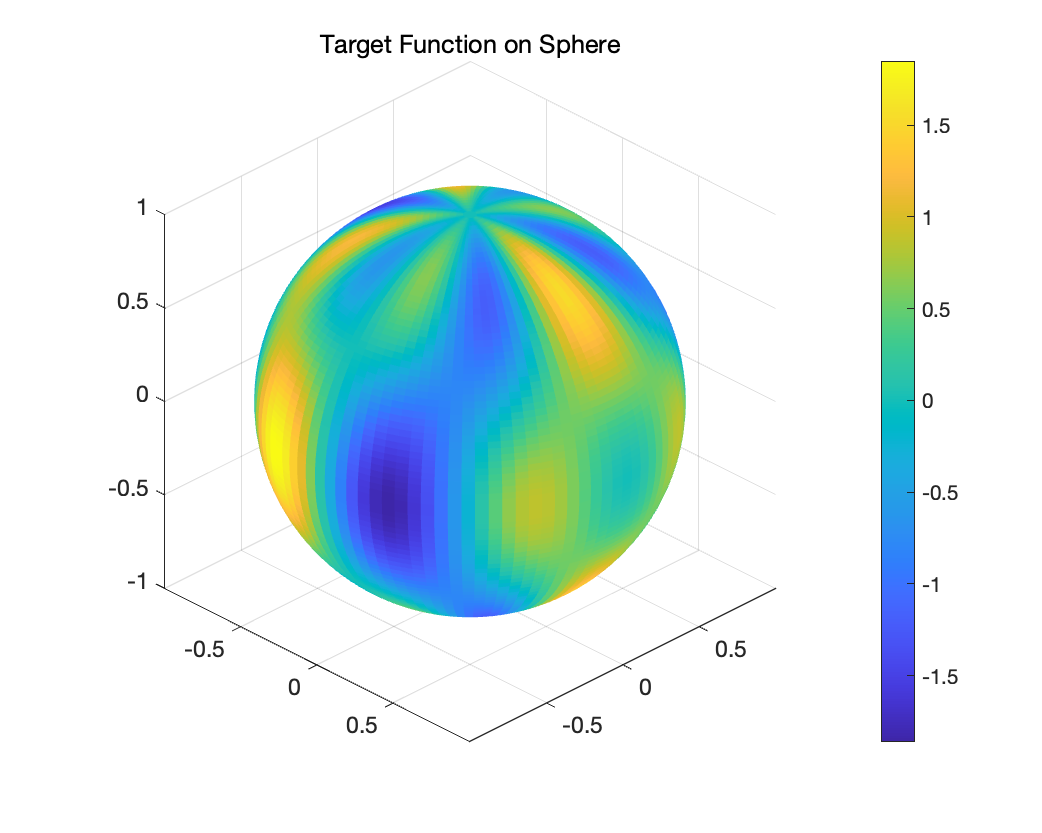}}
    \subfigure[Neural network output]{\includegraphics[width=6cm, height=4cm]{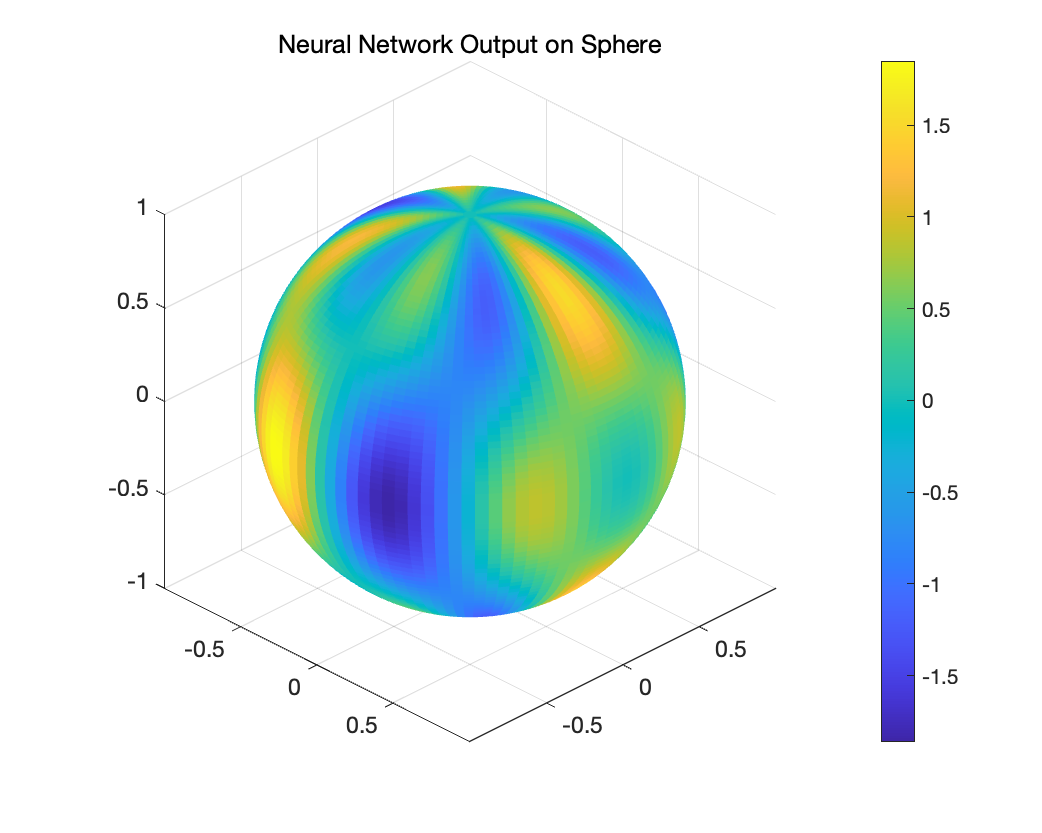}}
    \subfigure[Absolute error distribution]{\includegraphics[width=6cm, height=4cm]{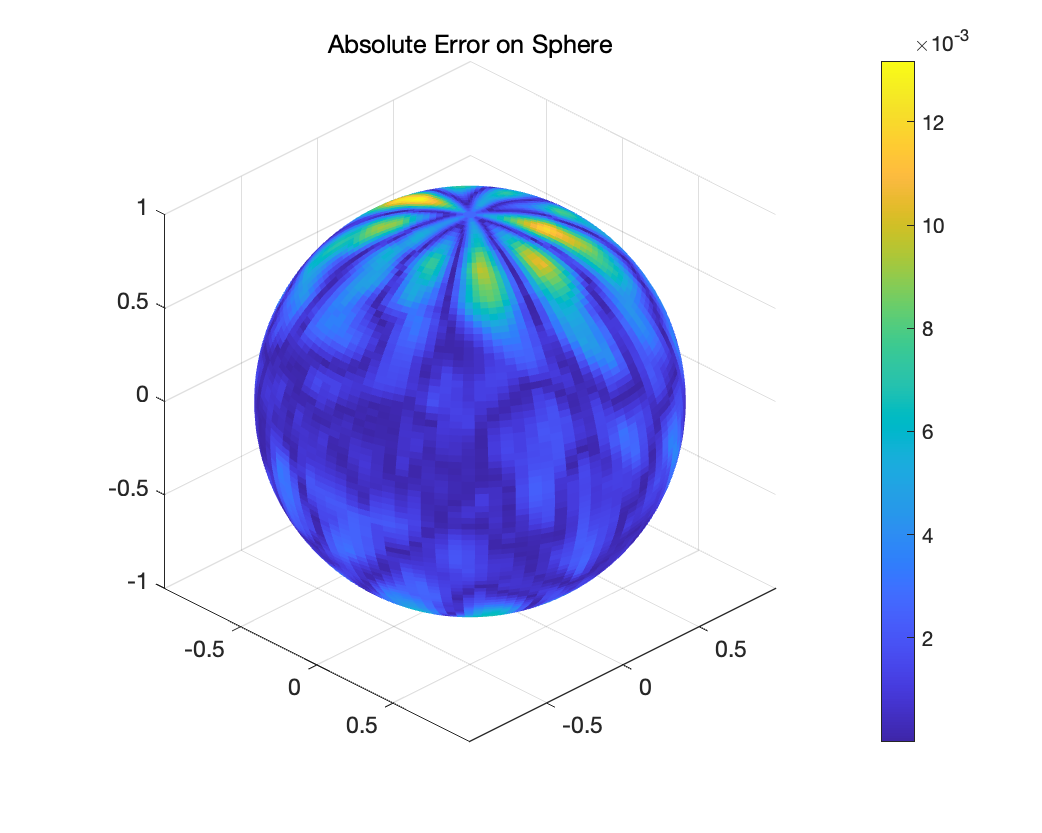}}
    \caption{Spatial comparison revealing partial frequency learning.}
    \label{trig1testvisual}
\end{figure}

The network exhibits complex learning dynamics with a final loss of 5.7e-3 (Figure \ref{trig1test}). The harmonic error analysis in Figure \ref{trig1testharmonic} reveals simultaneous learning of multiple frequency components, particularly evident in the coupling between the $\sin(\tau)$ and $\cos(3\phi)$ terms. This behavior suggests that the natural coupling of frequency components in the target function influences the network's learning trajectory.

The spatial distribution of errors (Figure \ref{trig1testvisual}) shows localized regions of a higher discrepancy, particularly where the frequency components of $\tau$ and $\phi$ interact strongly. This pattern indicates that the network struggles to fully decouple the mixed-frequency components during the learning process, leading to a partial adherence to the frequency principle.

\textbf{Case 2: Contradiction to Frequency Principle}
\begin{figure}
    \centering
    \includegraphics[width=6cm, height=4cm]{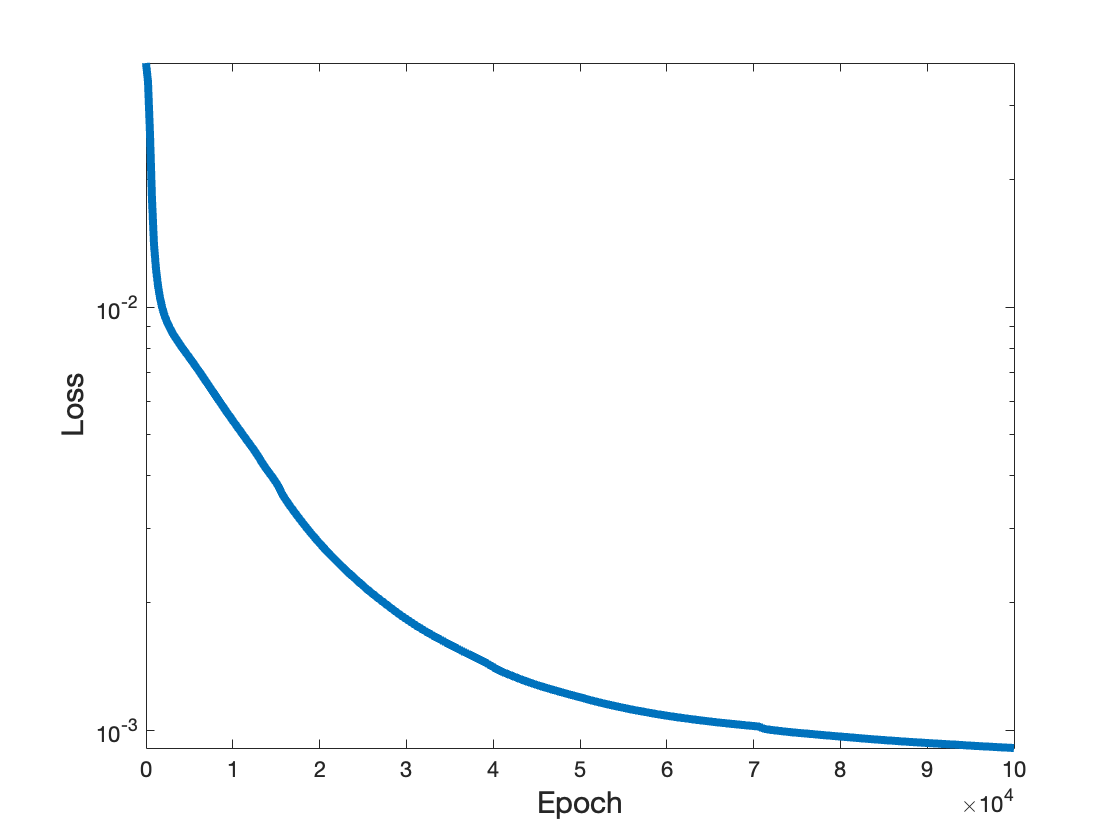}
    \caption{Image of Loss of the fifth test (Best Loss: 9.1e-4).}
    \label{trig3test}
\end{figure}

\begin{figure}
    \centering
    \subfigure[$l = 1,3,5$, $j=0$]{\includegraphics[width=6cm, height=4cm]{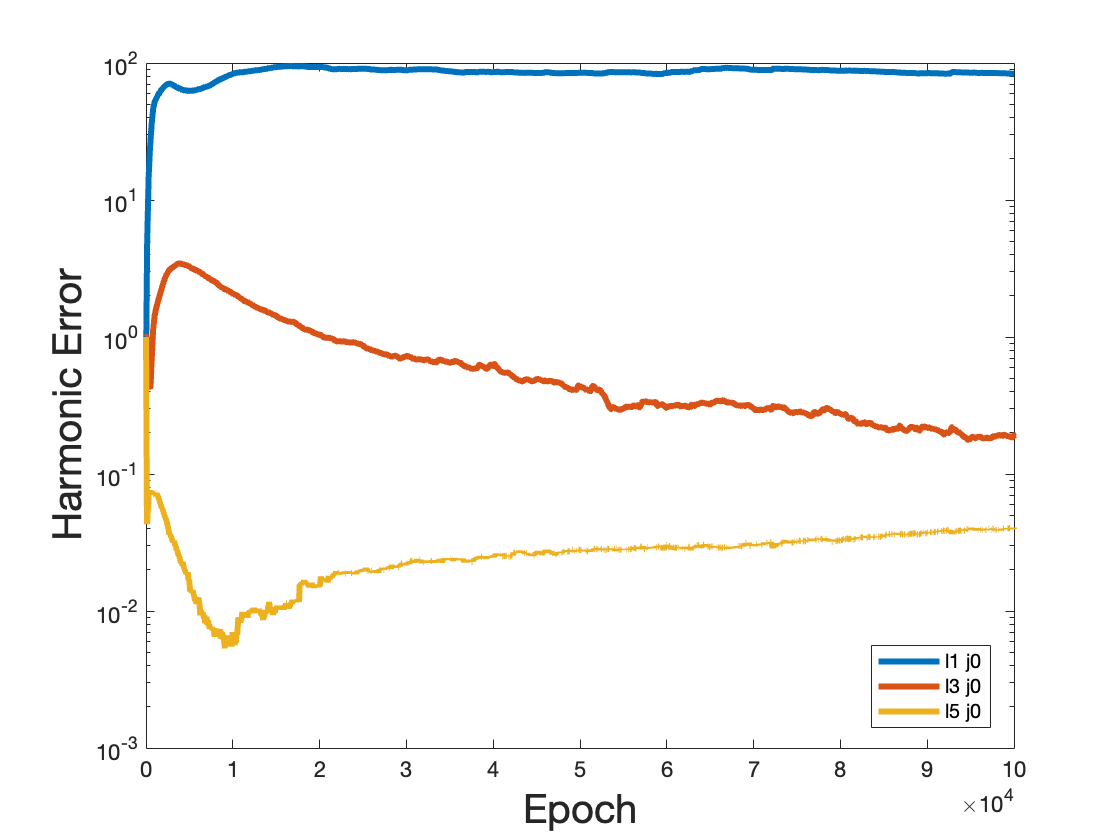}}
    
    \subfigure[$l = 1,2,\ldots,5$, $j=0$]
    {\includegraphics[width=6cm, height=4cm]{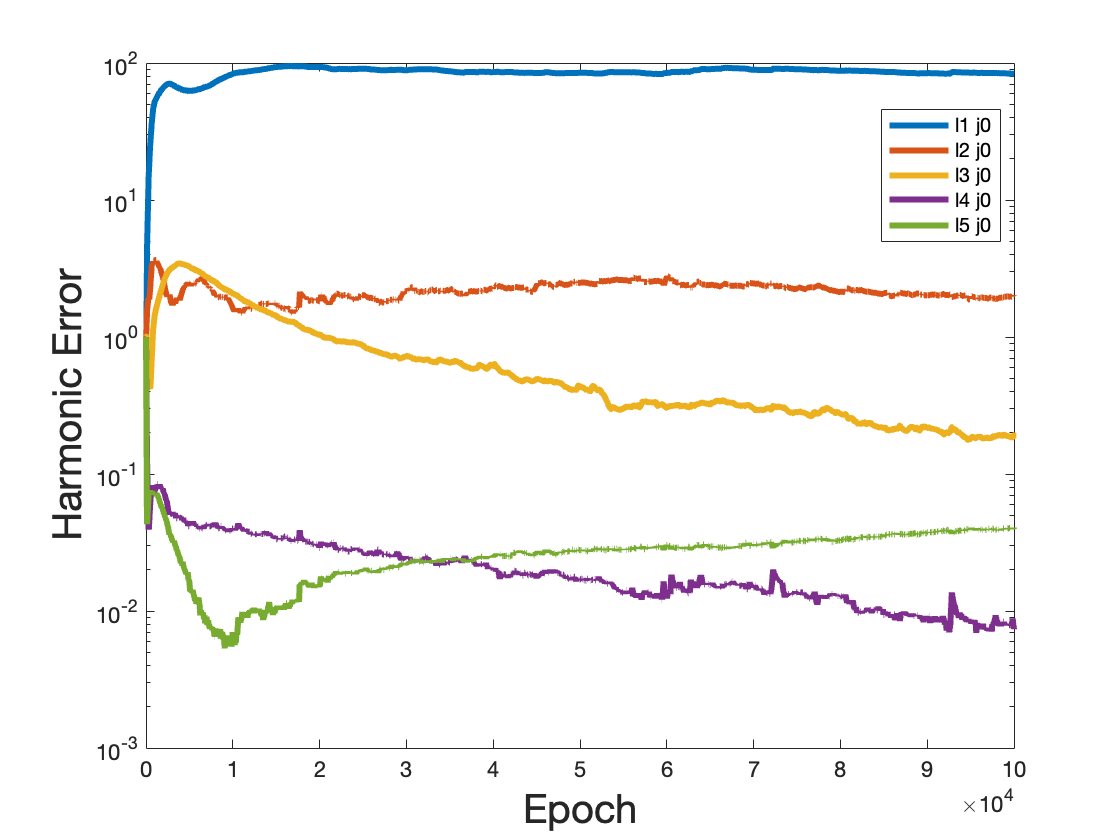}}
    \subfigure[$l = 6,7,\ldots,10$, $j=0$]{\includegraphics[width=6cm, height=4cm]{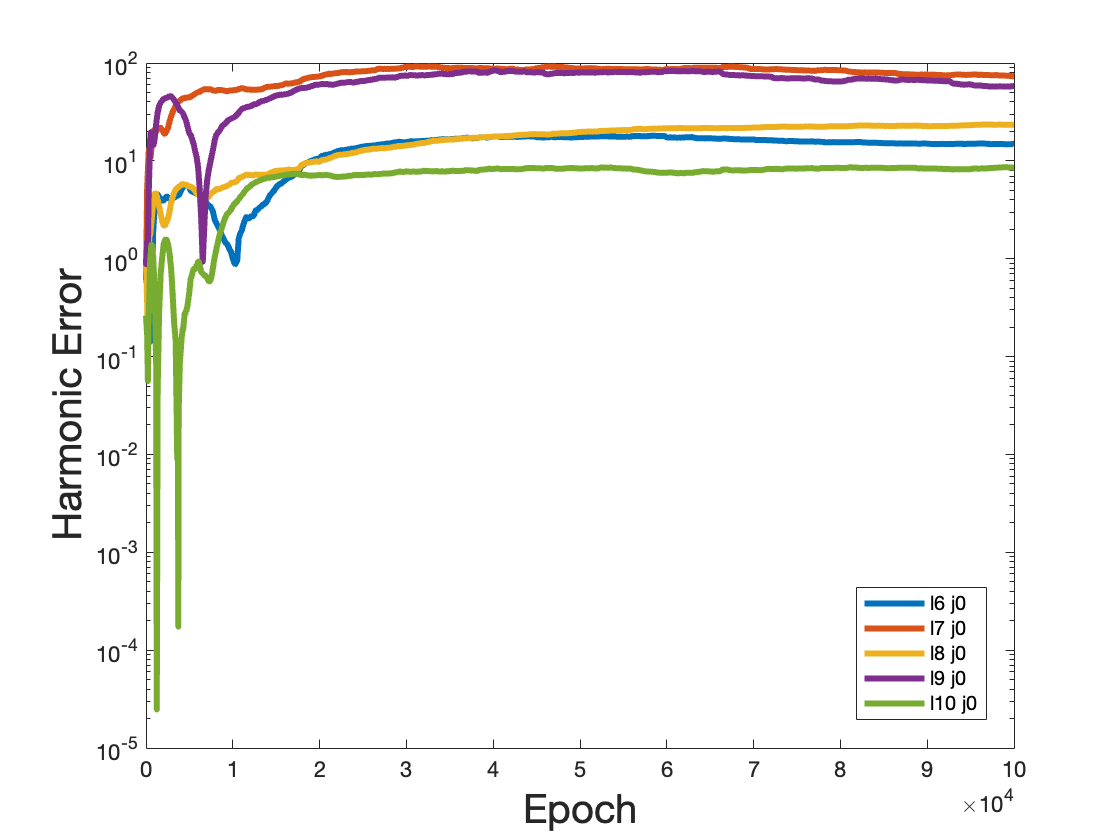}}
    \caption{Harmonic Errors demonstrating inverse frequency learning}
    \label{trig3testharmonic}
\end{figure}

\begin{figure}
    \centering
    \subfigure[Target function]{\includegraphics[width=6cm, height=4cm]{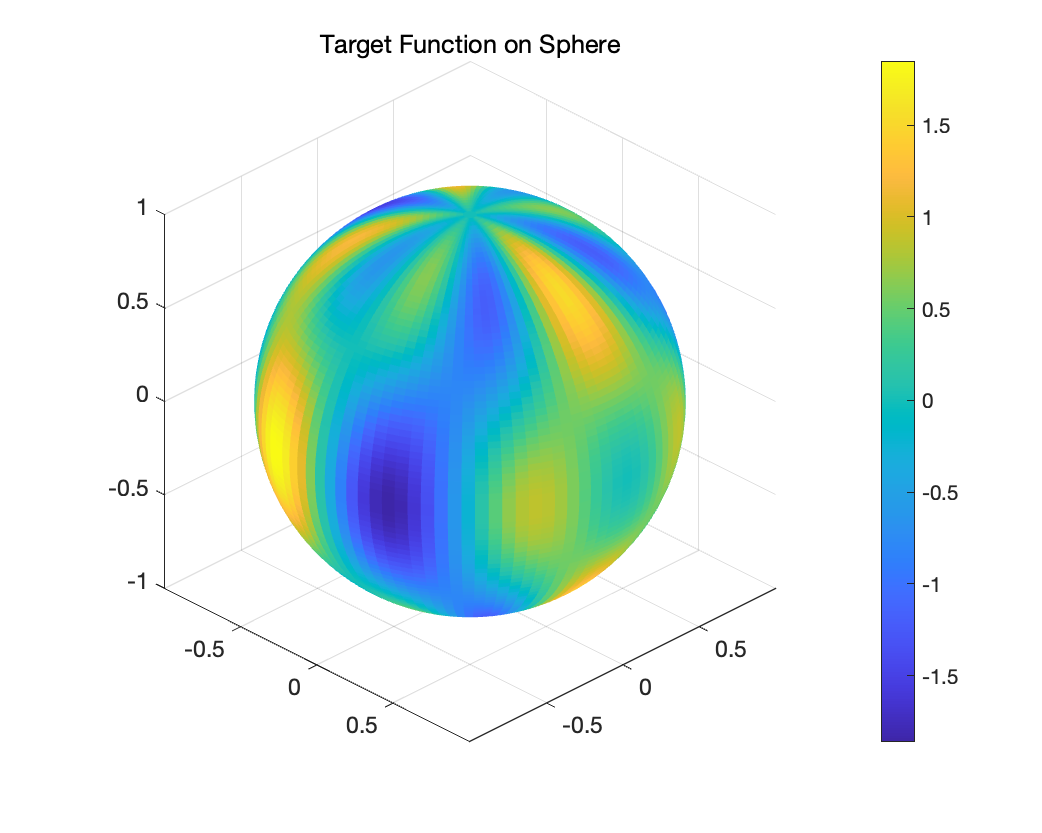}}
    \subfigure[Neural network output]{\includegraphics[width=6cm, height=4cm]{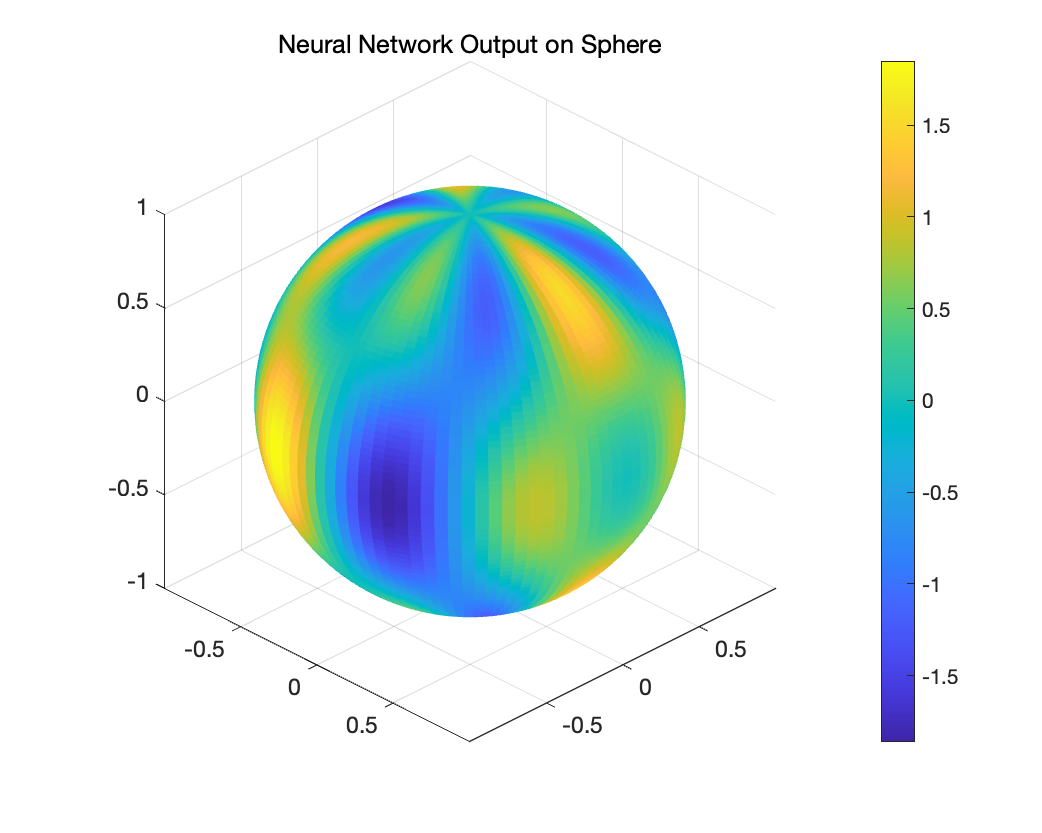}}
    \subfigure[Absolute error distribution]{\includegraphics[width=6cm, height=4cm]{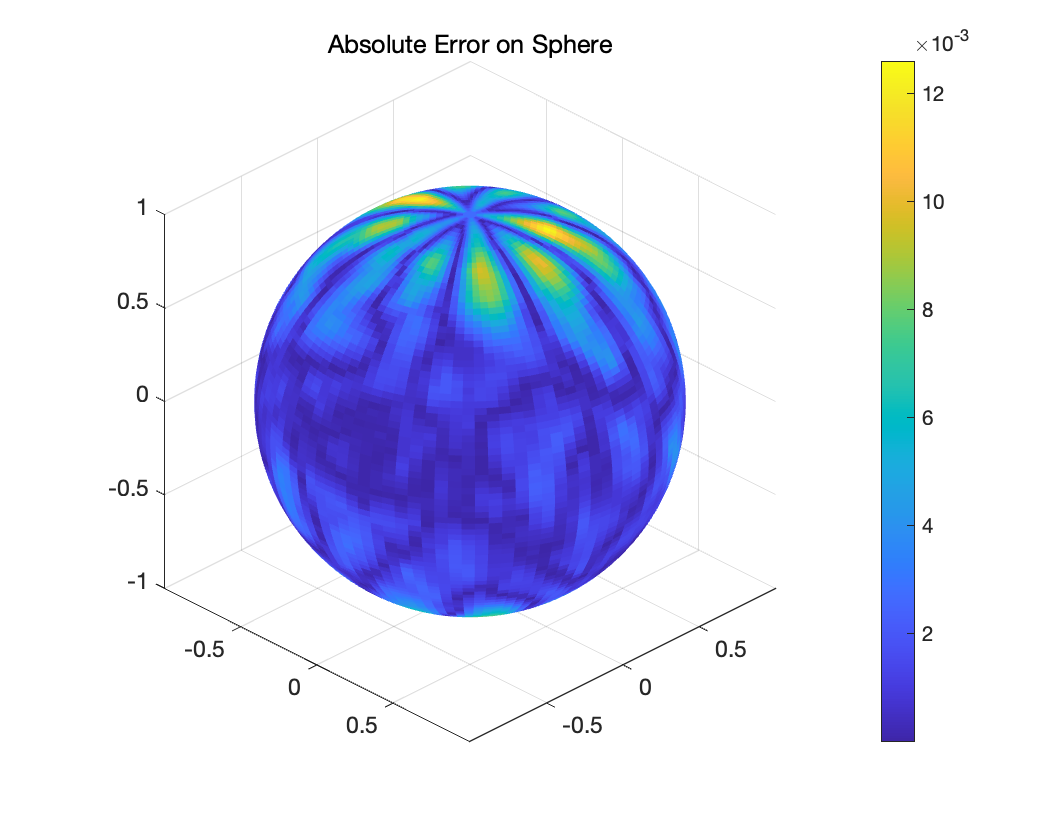}}
    \caption{Spatial comparison showing inverse frequency learning patterns}
    \label{trig3testvisual}
\end{figure}

In this case, we also give a high-frequency initialization $\sin(10\tau)\cos(10\phi)$ as Figure \ref{3testhighinitial} shown and we observe a striking contradiction to the frequency principle, with the network achieving a final loss of 9.1e-4 (Figure \ref{trig3test}). The harmonic error analysis (Figure \ref{trig3testharmonic}) reveals that higher-frequency components, particularly those associated with $\sin(3\tau)$ and $\cos(5\phi)$, are learned before their lower-frequency counterparts. This inverse learning pattern persists throughout the training process, demonstrating a clear violation of the traditional frequency principle.

The spatial error distribution (Figure \ref{trig3testvisual}) exhibits a distinctive pattern where errors are concentrated in regions dominated by low-frequency components, while high-frequency features are captured with surprising accuracy. This behavior suggests that the network's initialization and optimization trajectory have fundamentally altered the usual frequency-dependent learning dynamics.

\textbf{Comparative Analysis}
The analysis of this trigonometric target function yields several significant insights:
\begin{itemize}
    \item The coupling of different frequency components in $\tau$ and $\phi$ introduces complex learning dynamics that can disrupt the traditional frequency principle
    \item Network initialization plays a crucial role in determining whether the learning process follows, partially adheres to, or contradicts the frequency principle
    \item The presence of mixed frequencies in the target function may create local optima in the loss landscape that influence the learning trajectory
\end{itemize}

These findings extend our understanding of neural network learning dynamics in the sphere, particularly for functions with coupled frequency components. The observed behaviors suggest that the frequency principle, while often observed in simpler cases, may not fully capture the learning dynamics of neural networks when approximating functions with intricate frequency interactions. This observation has important implications for both theoretical analysis and practical applications of neural networks on spherical domains.

\section{Frequency Analysis on the Unit Sphere with Trained Weights}
\label{Sec:SphericalAnalysisTW}

In this section, we analyze the frequency behavior of shallow neural networks (SNNs) defined on the unit sphere \( S^2 \) when both the weights \( w_i \) and the coefficients \( a_i \) can change during training. As before, the unit sphere provides a natural domain for spherical harmonics, allowing us to investigate the frequency components of ReLU-activated functions without the complications introduced by periodic extensions in one-dimensional analyses.

We also consider SNNs of the form \eqref{neuralnetwork-sphere}, but now \(\theta = ((a_i)_{i=1}^m, (w_i)_{i=1}^m)\) represents both the output weights \( a_i \in \mathbb{R} \) and the direction vectors \( w_i \in S^2 \). Unlike the fixed-weight scenario, here we allow \( w_i \) to evolve during training. This introduces a new layer of complexity, as changes in \( w_i \) can alter the spherical harmonic representation dynamically.

\subsection{Expansion of a Single Neuron with Variable Direction}
\label{subsec:NeuronExpansionVariableDir}

We now turn to the scenario in which the direction \(w_i \in S^2\) of each ReLU neuron 
in \eqref{nnrepresent-sphere} \emph{changes} during training.  In this case, the local axis of symmetry 
for each neuron evolves over time.  Below, we first recap the axisymmetric expansion in the \emph{local} 
coordinates aligned with \(w_i\), and then relate it to the \emph{global} coordinates \((\tau,\phi)\).

Recall from Section~\ref{Sec:SphericalAnalysisFW} that if \(w_i\) is fixed and aligned with the polar axis in 
some local coordinate frame, then for a point \(\tilde{x}(\tau',\phi)\) on the sphere (with 
\(\tau'\in[0,\pi]\), \(\phi\in[0,2\pi)\)), we have
\[
w_i^\top \tilde{x}
\;=\;
\cos(\tau'),
\quad
\mathrm{ReLU}(w_i^\top \tilde{x})
\;=\;
\max\{0,\cos(\tau')\}.
\]

Because \(\max\{0,\cos(\tau')\}\) depends only on \(\tau'\) (and is independent of the azimuth \(\phi\)), 
its spherical harmonic expansion in that \emph{local} frame involves only the \(j=0\) modes:
\[
\mathrm{ReLU}(w_i^\top \tilde{x})
\;=\;
\sum_{\ell=0}^{\infty}
c_{\ell}\,
Y_\ell^0(\tau',\phi),
\]
where \(\bigl\{c_{\ell}\bigr\}\) is given explicitly by
\[
c_{\ell}
\;=\;
\int_{S^2}
\max\!\bigl(0,\cos(\tau')\bigr)
\,\overline{Y_\ell^0(\tau',\phi)}
\;d\Omega(\tau',\phi).
\]

Carrying out this integral over \(\tau'\in[0,\tfrac{\pi}{2}]\) and \(\phi\in[0,2\pi]\) 
results in the known closed-form expressions:
\[
c_{\ell}
=
\begin{cases}
\displaystyle
\frac{\pi}{2}, & \ell=0,\\[0.75em]
\displaystyle
\frac{\sqrt{3\pi}}{3}, & \ell=1,\\[0.75em]
\displaystyle
\frac{\sqrt{\pi}}{24}\,\bigl(\tfrac12\bigr)^\ell\,\sqrt{2\ell+1}\,\bigl(\ell^2 + 3\ell +2\bigr),
& \ell\ge2.
\end{cases}
\]

In particular, \(\bigl|c_{\ell}\bigr|\approx O(\ell^{5/2}/2^\ell)\) for large \(\ell\), indicating that 
\(\mathrm{ReLU}\bigl(\cos(\tau')\bigr)\) is dominated by low-frequency modes in the local frame.

When \(w_i\) \emph{varies} during training, we cannot assume a single, static local frame.  Instead, we fix a 
\emph{global} coordinate system on the sphere, described by \((\tau,\phi)\) with \eqref{spherecordinate}. If $w_i$ points in an arbitrary direction, define $R_{w_i}$ to be the rotation (in $\mathbb{R}^3$) that 
takes the north pole $(0,0,1)$ to $w_i$.  In practice, if the direction \( w_i \) evolves during training, we cannot assume a single, fixed coordinate system. Instead, let \(R_{w_i}\) be the rotation that takes the global frame's north pole to align with \(w_i\). For \(w_i = (w_x, w_y, w_z)\) on the unit sphere \(S^2\), we can explicitly construct \(R_{w_i}\) using Rodrigues' rotation formula:
\[
R_{w_i} = I + \sin\theta K + (1-\cos\theta)K^2,
\]
where \(\theta = \arccos(w_z)\) is the angle between \(w_i\) and the north pole \((0,0,1)\), and \(K\) is the skew-symmetric matrix formed from the normalized axis of rotation \(k = (-w_y, w_x, 0)/\sqrt{w_x^2 + w_y^2}\):
\[
K = \begin{pmatrix} 
0 & -k_z & k_y \\
k_z & 0 & -k_x \\
-k_y & k_x & 0
\end{pmatrix}.
\]

Then any point $x(\tau,\phi)$ can be mapped into the local 
axis-aligned frame via
\[
\tilde{x}(\tau,\phi) 
\;=\;
R_{w_i}^{-1}\,x(\tau,\phi).
\]

In that local frame, $\mathrm{ReLU}\bigl(w_i^\top \tilde{x}\bigr)$ has the pure $j=0$ expansion 
discussed above.  However, from the \emph{global} viewpoint, the rotation $R_{w_i}$ \emph{mixes} the 
spherical harmonics within each degree $\ell$.  Concretely, via the Wigner $D$-matrices, we have
\[
Y_{\ell}^0\bigl(\tilde{x}(\tau,\phi)\bigr)
\;=\;
\sum_{j=-\ell}^{\ell}
D_{j0}^{(\ell)}\bigl(R_{w_i}\bigr)
\,Y_{\ell}^j\bigl(x(\tau,\phi)\bigr),
\]
so we have
\begin{equation}\label{reluglobalform}
    \mathrm{ReLU}\bigl(w_i^\top x(\tau,\phi)\bigr)
\;=\;
\sum_{\ell=0}^{\infty}\sum_{j=-\ell}^{\ell}
\Bigl[
c_{\ell}
D_{j0}^{(\ell)}\!\bigl(R_{w_i}\bigr)
\,Y_{\ell}^j(\tau,\phi)
\Bigr].
\end{equation}

The training implications are threefold. \textbf{First, invariant local structure:} In the coordinate system where $w_i$ defines the polar axis, the activation function $\mathrm{ReLU}\!\bigl(w_i^\top x\bigr)$ exhibits axial symmetry, yielding a spherical harmonic expansion containing only $j=0$ terms. The expansion coefficients $c_{\ell}$ are determined purely by the geometry of $\max\{0,\cos(\tau')\}$ and remain independent of the specific orientation of $w_i$. \textbf{Second, global mode coupling:} When expressed in fixed global coordinates $(\tau,\phi)$, the local $j=0$ mode couples to multiple global angular momentum states $\{Y_{\ell}^j\}$. As $w_i$ evolves during training, the corresponding rotation matrix $R_{w_i}$ varies, causing the energy distribution across global $j$ indices to change dynamically. \textbf{Third, dynamic frequency control:} Through optimization of $w_i\in S^2$, the network repositions the axisymmetric $\mathrm{ReLU}$ activation pattern across the sphere $S^2$. This mechanism enables selective amplification of specific global harmonics $Y_{\ell}^j(\tau,\phi)$ at different training stages, extending beyond the low-frequency dominance observed with fixed weight vectors.

The fundamental insight is that while individual ReLU neurons possess a fixed 
spectral decay profile $\{c_{\ell}\}_{\ell\ge0}$ (where $|c_{\ell}|\approx O(\ell^{5/2}/2^\ell)$) 
in their local reference frame, the global harmonic content becomes orientation-dependent. 
The evolution of $w_i$ induces corresponding changes in the rotation operator $R_{w_i}$, 
which transforms the local expansion into global coordinates. This transformation 
provides the network with the capacity to redistribute spectral energy across 
different global frequencies throughout the training process.

\subsection{From Single Neurons to the Entire Network}

Consider the shallow neural network \eqref{nnrepresent-sphere},  from Section~\ref{subsec:NeuronExpansionVariableDir} \eqref{reluglobalform}, each single-neuron term 
\(\mathrm{ReLU}\!\bigl(w_i^\top x(\tau,\phi)\bigr)\) can be expanded in \emph{global} spherical harmonics by mixing the 
local axisymmetric modes (indexed by \(\ell\) with \(j=0\)) via the rotation \(R_{w_i}\).

Substituting these single-neuron expansions into $u(\tau,\phi;\theta)$, we obtain:
\[
\begin{aligned}
u\bigl(\tau,\phi;\theta\bigr)
&=\;
\sum_{i=1}^{m} 
a_i 
\sum_{\ell=0}^{\infty}
c_{\ell}
\sum_{j=-\ell}^{\ell} 
D_{j0}^{(\ell)}\bigl(R_{w_i}\bigr)\,
Y_{\ell}^j(\tau,\phi)
\\[5pt]
&=\;
\sum_{\ell=0}^{\infty}
\sum_{j=-\ell}^{\ell}
\Bigl[
\sum_{i=1}^{m} 
\bigl(a_i\,c_{\ell}\,D_{j0}^{(\ell)}(R_{w_i})\bigr)
\Bigr]
\,Y_{\ell}^j(\tau,\phi).
\end{aligned}
\]
Hence, the coefficient in front of each global harmonic $Y_{\ell}^j(\tau,\phi)$ is 
\(\sum\limits_{i=1}^m a_i\,c_{\ell}\,D_{j0}^{(\ell)}(R_{w_i})\).  Because \(c_{\ell}\approx O(\ell^{5/2}/2^\ell)\) 
indicates rapid decay for large \(\ell\) in the \emph{local} axis frame, the network inherits a natural bias toward 
lower-frequency modes.  However, the rotation terms $R_{w_i}$ (and thus $D_{j0}^{(\ell)}(R_{w_i})$) can evolve 
during training, altering how that local bias manifests \emph{globally}.

When each $w_i$ is free to move on $S^2$ during training, the local axis alignment (itself tied to $R_{w_i}$) changes over time. This means the same intrinsic mode $c_{\ell}\,Y_{\ell}^0(\tilde{x})$ may be mapped differently to global modes $Y_{\ell}^j(\tau,\phi)$ at different training steps. This dynamic remapping operates through two complementary mechanisms. \textbf{Each neuron maintains its intrinsic low-frequency bias:} In any local coordinate system aligned with $w_i$, the expansion of $\max\{0,\cos(\tilde{\tau})\}$ heavily favors small $\ell$ values due to the inherent spectral decay. \textbf{However, collective rotational effects enable global frequency redistribution:} By updating each $w_i$, the network can redirect how that low-frequency local pattern is projected onto the global $\{\ell,j\}$ modes. A direction $w_i$ that initially favored one set of global modes could, upon rotation, enhance or suppress different harmonic components.

In some regimes, this flexibility may \emph{still} preserve an overall low-frequency bias, as all neurons collectively 
reinforce lower modes.  In others, certain neurons can rotate in a way that boosts high-frequency modes 
(i.e., $j\ne0$ or large $\ell$), partially overriding the intrinsic decay of $\ell^{5/2}/2^\ell$.  
Thus, to fully appreciate the frequency behavior of a trained network, one must analyze:

\begin{enumerate}
\item The \emph{intrinsic} coefficients \(\{c_{\ell}\}\) of each neuron’s axis-aligned ReLU expansion, and
\item The evolving orientation \(\{w_i(t)\}\) (or equivalently the rotation \(\{R_{w_i}(t)\}\)) under training.
\end{enumerate}

By considering both factors---the natural low-frequency emphasis of each ReLU \emph{and} the ability to reorient 
that emphasis globally---one gains a clearer picture of how (and under what conditions) the network might deviate 
from purely low-frequency-first learning and sustain or amplify higher-frequency components.

\subsection{Temporal Evolution of the Error with Trained Weights}
\label{subsec:TemporalEvolutionTrainedWeights}

We now analyze the case in which \emph{both} the output-layer coefficients $\{a_i\}$ and the directions $\{w_i\}\subset S^2$ evolve under gradient descent.   To track how $D(\tau,\phi)$  defined in \eqref{errorxi-sphere} changes over time, note that
\[
\frac{\partial D(\tau,\phi)}{\partial t}
\;=\;
\frac{\partial}{\partial t}
\bigl[u(\tau,\phi;\theta) - h(\tau,\phi)\bigr]
\;=\;
\frac{\partial u(\tau,\phi;\theta)}{\partial \theta}
\;\cdot\;
\frac{d\theta}{dt}.
\]
Since $\theta$ now includes both $a_i$ and $w_i$, we partition:
\[
\frac{\partial D(\tau,\phi)}{\partial t}
\;=\;
\sum_{i=1}^m
\Bigl[
  \underbrace{\frac{\partial u(\tau,\phi;\theta)}{\partial a_i}
  \;\frac{\partial L(\theta)}{\partial a_i}}_{\text{update in }a_i}
  \;+\;
  \underbrace{\frac{\partial u(\tau,\phi;\theta)}{\partial w_i}
  \;\frac{\partial L(\theta)}{\partial w_i}}_{\text{update in }w_i}
\Bigr].
\]
Recall
\[
\frac{\partial u(\tau,\phi;\theta)}{\partial a_i}
\;=\;
\mathrm{ReLU}\bigl(w_i^\top x(\tau,\phi)\bigr),
\quad
\frac{\partial u(\tau,\phi;\theta)}{\partial w_i}
\;=\;
a_i\,\nabla_{w_i}\!\mathrm{ReLU}\bigl(w_i^\top x(\tau,\phi)\bigr).
\]
In turn, the loss gradients are
\[
\frac{\partial L(\theta)}{\partial a_i}
\;=\;
-\int_{0}^{2\pi}\!\!\int_{0}^{\pi}
D(\tau',\phi')\,
\mathrm{ReLU}\bigl(w_i^\top x(\tau',\phi')\bigr)
\,\sin\tau'\,d\tau'\,d\phi',
\]
and
\[
\frac{\partial L(\theta)}{\partial w_i}
\;=\;
-\int_{0}^{2\pi}\!\!\int_{0}^{\pi}
D(\tau',\phi')\,a_i
\,\nabla_{w_i}\!\mathrm{ReLU}\bigl(w_i^\top x(\tau',\phi')\bigr)
\,\sin\tau'\,d\tau'\,d\phi'.
\]
Hence,
\[
\begin{aligned}
    \frac{\partial D(\tau,\phi)}{\partial t}
&\;=\;
-\sum_{i=1}^m
\Bigl\{
  \mathrm{ReLU}\bigl(w_i^\top x(\tau,\phi)\bigr)
  \int_{S^2} D(\tau',\phi')\,\mathrm{ReLU}\bigl(w_i^\top x(\tau',\phi')\bigr)\,d\Omega \\
&\;+\;
  a_i^2\,\nabla_{w_i}\!\mathrm{ReLU}\bigl(w_i^\top x(\tau,\phi)\bigr)
  \int_{S^2} D(\tau',\phi')\,\nabla_{w_i}\!\mathrm{ReLU}\bigl(w_i^\top x(\tau',\phi')\bigr)\,d\Omega
\Bigr\}.
\end{aligned}
\]

Observe that
\[
\nabla_{w_i}\!\mathrm{ReLU}\bigl(w_i^\top x\bigr)
\;=\;
\begin{cases}
x(\tau,\phi), & \text{if } w_i^\top x(\tau,\phi)>0,\\
0, & \text{if } w_i^\top x(\tau,\phi)\le 0.
\end{cases}
\]
Thus the term 
\(\mathrm{ReLU}\bigl(w_i^\top x(\tau,\phi)\bigr)\)
affects how $a_i$ updates (the scalar “output weights”), whereas
\(\nabla_{w_i}\!\mathrm{ReLU}\bigl(w_i^\top x(\tau,\phi)\bigr)\)
affects how $w_i$ itself rotates or shifts.  Concretely:
\begin{itemize}
\item 
\emph{Direct Output-Weight Adjustment}:
\(\int_{S^2}D(\tau',\phi')\,\mathrm{ReLU}(w_i^\top x(\tau',\phi'))\,d\Omega\)
measures how the error $D$ correlates with the half-space $w_i^\top x(\tau',\phi')\ge0$.  
\item 
\emph{Direction Rotation}:
\(\int_{S^2}D(\tau',\phi')\,\nabla_{w_i}\!\mathrm{ReLU}(w_i^\top x(\tau',\phi'))\,d\Omega\) 
captures changes in $w_i$ can reorient which region $\{w_i^\top x\ge0\}$ is active, thus “chasing” the parts of $S^2$ where $D$ is large.
\end{itemize}

Focus, for instance, on the integral
\(\int_{S^2}D(\tau',\phi')\,\mathrm{ReLU}(w_i^\top x(\tau',\phi'))\,d\Omega\).
Defining the upper hemisphere 
\(\Omega_i=\bigl\{(\tau',\phi'): w_i^\top x(\tau',\phi')\ge0\bigr\}\), one can explicitly evaluate or 
invoke a 2D Mean Value Theorem argument on \(\tau'\in[0,\tfrac{\pi}{2}]\).  Analogous ideas apply to the 
gradient term $\nabla_{w_i}\!\mathrm{ReLU}$.  In each case, one obtains a representative point 
\((\tau_i^*,\phi_i^*)\) or \((\tau_i^\dagger,\phi_i^\dagger)\) in the active region.  Summarizing, 
\[
\int_{\Omega_i}
D(\tau',\phi')\cos(\tau')\sin(\tau')\,d\tau'\,d\phi'
\;=\;
\pi\,D(\tau_i^*,\phi_i^*)\,\sin\bigl(2\,\tau_i^*\bigr),
\]
and
\[
\int_{\Omega_i}
D(\tau',\phi')\,y(\tau',\phi')
\,d\tau'\,d\phi'
\;=\;
\pi\,D(\tau_i^\dagger,\phi_i^\dagger)\,\bar{y}_i,
\]
for some average vector $\bar{y}_i = \Bigl(\sin(2\tau_i^\dagger),
2\sin^2(\tau_i^\dagger)\cos(\phi_i^\dagger),
2\sin^2(\tau_i^\dagger)\sin(\phi_i^\dagger)\Bigr)$ in $\Omega_i$.  

Putting these pieces together, one obtains a schematic evolution law:
\[
\begin{aligned}
\frac{\partial D(\tau,\phi)}{\partial t}
&=
-\sum_{i=1}^m
\Bigl[
  \mathrm{ReLU}\bigl(w_i^\top x(\tau,\phi)\bigr)\,
  \Bigl(\pi\,D\bigl(\tau_i^*,\phi_i^*\bigr)\,\sin\bigl(2\tau_i^*\bigr)\Bigr)
\\[-3pt]
&\qquad\qquad
+\;
  a_i^2\,
  \Bigl(\pi\,D\bigl(\tau_i^\dagger,\phi_i^\dagger\bigr)\,\bar{y}_i\Bigr)
  \,\cdot\,
  \nabla_{w_i}\!\mathrm{ReLU}\bigl(w_i^\top x(\tau,\phi)\bigr)
\Bigr].
\end{aligned}
\]
The first term updates the scalar coefficients $\{a_i\}$, while the second (through $\nabla_{w_i}$) reorients $w_i$.  
Hence the network can reconfigure both the amplitude and the direction of each neuron’s half-space activation.  
Compared to the fixed-weight scenario, this added flexibility in rotating $w_i$ can drastically reshape how 
the error $D(\tau,\phi)$ develops in frequency space.  Higher modes may be selectively enhanced or suppressed 
depending on how the $w_i$ moves, providing a potential pathway for either preserving (or breaking) a 
low-frequency bias in the SNN’s training dynamics.

\subsection{Implications for the Frequency Principle with Trained Weights}
\label{Subsec:FPTrainedWeights}

When \emph{both} the output coefficients \(\{a_i\}\) and the directions \(\{w_i\}\subset S^2\) evolve under gradient descent, the time evolution of the error 
\[
D(\tau,\phi)
\;=\;
u\bigl(\tau,\phi;\theta\bigr)\;-\;h(\tau,\phi),
\]
becomes more intricate for the reason that the additional ``directional updates'' in \(\nabla_{w_i}\mathrm{ReLU}\bigl(w_i^\top x(\tau,\phi)\bigr)\).  From Section~\ref{subsec:TemporalEvolutionTrainedWeights}, the partial derivative of \(D(\tau,\phi)\) under gradient descent is
\[
\begin{aligned}
\frac{\partial D(\tau,\phi)}{\partial t}
&\;=\;
-\sum_{i=1}^{m}
\Bigl(
  \mathrm{ReLU}\bigl(w_i^\top x(\tau,\phi)\bigr)\,\underbrace{\pi\,D\bigl(\tau_i^*,\phi_i^*\bigr)\,\sin\bigl(2\tau_i^*\bigr)}_{\text{from } \int D(y)\,\mathrm{ReLU}(w_i^\top y)\,dy}
\\[-3pt]
&\qquad\qquad
+\;a_i^2\,\nabla_{w_i}\,\mathrm{ReLU}\!\bigl(w_i^\top x(\tau,\phi)\bigr)\,
\underbrace{\pi\,D\bigl(\tau_i^\dagger,\phi_i^\dagger\bigr)\,\bar{y}_i}_{\text{from } \int D(y)\,\nabla_{w_i}\mathrm{ReLU}(w_i^\top y)\,dy}
\Bigr),
\end{aligned}
\]
where the points \(\bigl(\tau_i^*,\phi_i^*\bigr)\) and \(\bigl(\tau_i^\dagger,\phi_i^\dagger\bigr)\) arise from a 2D Mean Value Theorem argument over the hemisphere \(\{(\tau',\phi'): w_i^\top x(\tau',\phi')\ge0\}\).  Additionally, each rotation \(R_{w_i}\) mapping the north pole to \(w_i\) can be parametrized by angles \((\tau,\phi)\).

Recalling the \emph{global} expansion of each neuron’s ReLU term (see \eqref{reluglobalform}) \eqref{reluglobalform}, we substitute this into the gradient-based update for $\tfrac{\partial D}{\partial t}$.  Consequently, we collect the resulting terms into spherical harmonic modes \(Y_{\ell}^j(\tau,\phi)\).  For example, define the \emph{direct neuron-output} contribution:
\begin{equation}\label{Cellj}
    C_{\ell}^j 
    \;:=\; 
    -\sum_{i=1}^m 
    \Bigl[\pi\,D\bigl(\tau_i^*,\phi_i^*\bigr)\,\sin\bigl(2\tau_i^*\bigr)\Bigr]\;
    c_{\ell}
    \,D_{j0}^{(\ell)}\bigl(R_{w_i}\bigr),
\end{equation}
and the \emph{weight rotation} contribution:
\begin{equation}\label{Gellj}
    G_{\ell}^j 
    \;:=\; 
    -\sum_{i=1}^m 
    a_i^2
    \Bigl[\pi\,D\bigl(\tau_i^\dagger,\phi_i^\dagger\bigr)\,\bar{y}_i\Bigr]
    \;\cdot\;
    \nabla_{w_i}\Bigl(
      c_{\ell}\,D_{j0}^{(\ell)}\!\bigl(R_{w_i}\bigr)
    \Bigr).
\end{equation}

 To make $G_{\ell}^j$ explicit, we need to calculate
\[
\nabla_{w_i}\Bigl(c_{\ell} D_{j0}^{(\ell)}(R_{w_i})\Bigr),
\]
which expands as
$$
\begin{aligned}
 \nabla_{w_i}\Bigl(c_{\ell} D_{j0}^{(\ell)}(R_{w_i})\Bigr) =\; c_{\ell}\;\nabla_{w_i}\Bigl(D_{j0}^{(\ell)}(R_{w_i})\Bigr) =\; c_{\ell}\Bigl(\frac{\partial D_{j0}^{(\ell)}(R_{w_i})}{\partial R_{w_i}}\;\frac{\partial R_{w_i}}{\partial w_i}\Bigr).
\end{aligned}
$$

The rotation \(R_{w_i}\) depends on \((x_i,y_i,z_i)\) through angles \(\tau\) (polar) and \(\phi\) (azimuthal), with \(\cos(\tau)=z_i/\|w_i\|\) and \(\tan(\phi)=y_i/x_i\).  Thus
$$
\frac{\partial R_{w_i}}{\partial w_i} 
\;=\; 
\Bigl(
  \frac{\partial R_{w_i}}{\partial x_i},\;
  \frac{\partial R_{w_i}}{\partial y_i},\;
  \frac{\partial R_{w_i}}{\partial z_i}
\Bigr)^\top,
$$
and the Wigner \(D\)-matrix elements \(D_{j0}^{(\ell)}\bigl(R_{w_i}\bigr)\) can be written as 
\(\sqrt{\tfrac{4\pi}{2\ell+1}}\,Y_{\ell}^j(\tau,\phi)\). Consequently,
$$
\begin{aligned}
 \nabla_{w_i}\Bigl(c_{\ell} D_{j0}^{(\ell)}(R_{w_i})\Bigr)&=\; c_{\ell}\sqrt{\frac{4\pi}{2\ell+1}}\Biggl(
\frac{\partial Y_{\ell}^j(\tau,\phi)}{\partial \tau}\,\frac{\partial \tau}{\partial w_i} \;+\;
\frac{\partial Y_{\ell}^j(\tau,\phi)}{\partial \phi}\,\frac{\partial \phi}{\partial w_i}
\Biggr)
\end{aligned}
$$
with
$$
\frac{\partial \tau}{\partial w_i}
=
\begin{pmatrix}
-\frac{\cos\tau\sin\tau\cos\phi}{\|w_i\|^2\sqrt{1-\tfrac{\cos^2\tau}{\|w_i\|^2}}}\\[4pt]
-\frac{\cos\tau\sin\tau\sin\phi}{\|w_i\|^2\sqrt{1-\tfrac{\cos^2\tau}{\|w_i\|^2}}}\\[4pt]
\frac{\sin\tau}{\|w_i\|^2\sqrt{1-\tfrac{\cos^2\tau}{\|w_i\|^2}}}
\end{pmatrix},
\qquad
\frac{\partial \phi}{\partial w_i}
=
\begin{pmatrix}
-\frac{\sin\phi}{\sin\tau},\\[3pt]
\frac{\cos\phi}{\sin\tau},\\[3pt]
0
\end{pmatrix}.
$$

Recall that 
\[
Y_{\ell}^j(\tau,\phi) 
\;=\; 
N_{\ell}^j\;P_{\ell}^j\!\bigl(\cos\tau\bigr)\,e^{ij\phi},
\]
where \(N_{\ell}^j=\sqrt{\tfrac{2\ell+1}{4\pi}\,\tfrac{(\ell-j)!}{(\ell+j)!}}\) is the normalization and \(P_{\ell}^j\) are the associated Legendre polynomials.  Then we have
$$
\begin{aligned}
\frac{\partial Y_{\ell}^j(\tau,\phi)}{\partial \tau}
&=\; N_{\ell}^j e^{ij\phi}(-\sin\tau)\Bigl[
  \tfrac{\ell\cos\tau\,P_{\ell}^j(\cos\tau)-(\ell+j)\,P_{\ell-1}^j(\cos\tau)}{\sin^2\tau}
\Bigr],\\
\frac{\partial Y_{\ell}^j(\tau,\phi)}{\partial \phi}
&=\; ij\,Y_{\ell}^j(\tau,\phi).
\end{aligned}
$$

Substituting these derivatives back yields
\begin{equation}\label{gradiectclidlrw}
    \begin{aligned}
& \nabla_{w_i}\Bigl(c_{\ell} D_{j0}^{(\ell)}(R_{w_i})\Bigr) \\
&=\; c_{\ell}\sqrt{\frac{4\pi}{2\ell+1}}
\Biggl\{
  N_{\ell}^j e^{ij\phi}\sin\tau
  \Bigl[
    \frac{\ell\cos\tau\,P_{\ell}^j(\cos\tau)}{\sin^2\tau}
    \;-\;
    \frac{(\ell+j)\,P_{\ell-1}^j(\cos\tau)}{\sin^2\tau}
  \Bigr]\\
&\quad \begin{pmatrix}
-\frac{\cos\tau\sin\tau\cos\phi}{\|w_i\|^2\sqrt{1-\tfrac{\cos^2\tau}{\|w_i\|^2}}}\\[4pt]
-\frac{\cos\tau\sin\tau\sin\phi}{\|w_i\|^2\sqrt{1-\tfrac{\cos^2\tau}{\|w_i\|^2}}}\\[4pt]
\frac{\sin\tau}{\|w_i\|^2\sqrt{1-\tfrac{\cos^2\tau}{\|w_i\|^2}}}
\end{pmatrix}
+\;ij\,N_{\ell}^j P_{\ell}^j(\cos\tau)\,e^{ij\phi}
\begin{pmatrix}
-\frac{\sin\phi}{\sin\tau},\\[3pt]
\frac{\cos\phi}{\sin\tau},\\[3pt]
0
\end{pmatrix}
\Biggr\}.
\end{aligned}
\end{equation}

Hence the gradient of the spherical harmonic expansion depends on the angles \(\tau,\phi\) and the spherical harmonic degrees \(\ell\) and orders \(j\).  The first part (derivative wrt.\ \(\tau\)) captures how changes in the polar angle affect associated Legendre polynomials, while the second part (derivative wrt.\ \(\phi\)) tracks azimuthal dependence via the factor \(e^{ij\phi}\). Putting~\eqref{gradiectclidlrw} into \eqref{Gellj} and including the dot product with \(\bar{y}_i\), one obtains the final explicit form of \(G_{\ell}^j\):

\begin{equation}\label{Gelljexact}
\begin{aligned}
    G_{\ell}^j &:= -2\pi\sum_{i=1}^m a_i^2\pi\,D\bigl(\tau_i^\dagger,\phi_i^\dagger\bigr)\,c_{\ell}\sqrt{\frac{4\pi}{2\ell+1}} 
    \Biggl\{
    N_{\ell}^j e^{ij\phi_i^\dagger}\sin\tau_i^\dagger \Bigl[\frac{\ell \cos\tau_i^\dagger\,P_{\ell}^j(\cos\tau_i^\dagger)}{\sin^2\tau_i^\dagger}
    \;-\;\\&\quad
    \frac{ (\ell+j)\,P_{\ell-1}^j(\cos\tau_i^\dagger)}{\sin^2\tau_i^\dagger}\Bigr] 
 \bar{y}_i \;\cdot\;
\begin{pmatrix}
-\frac{\cos\tau_i^\dagger\sin\tau_i^\dagger\cos\phi_i^\dagger}{\|w_i\|^2\sqrt{1-\tfrac{\cos^2\tau_i^\dagger}{\|w_i\|^2}}}\\[4pt]
-\frac{\cos\tau_i^\dagger\sin\tau_i^\dagger\sin\phi_i^\dagger}{\|w_i\|^2\sqrt{1-\tfrac{\cos^2\tau_i^\dagger}{\|w_i\|^2}}}\\[4pt]
\frac{\sin\tau_i^\dagger}{\|w_i\|^2\sqrt{1-\tfrac{\cos^2\tau_i^\dagger}{\|w_i\|^2}}}
\end{pmatrix}\\
&\quad +\,ij\,N_{\ell}^j P_{\ell}^j(\cos\tau_i^\dagger)e^{ij\phi_i^\dagger}\,\bar{y}_i \cdot
\begin{pmatrix}
-\frac{\sin\phi_i^\dagger}{\sin\tau_i^\dagger},\\[3pt]
\frac{\cos\phi_i^\dagger}{\sin\tau_i^\dagger},\\[3pt]
0
\end{pmatrix}
\Biggr\}.
\end{aligned}
\end{equation}

Finally, collecting all terms, the error’s spherical harmonic components evolve according to
\begin{equation}\label{dDelljt}
    \frac{\partial D(x)}{\partial t}
    \;=\;
    \sum_{\ell = 0}^{\infty}\sum_{j = -\ell}^{\ell}\bigl(C_{\ell}^j + G_{\ell}^j\bigr)\;Y_{\ell}^j(x).
\end{equation}

This shows how both the direct amplitude updates (via \(C_{\ell}^j\)) and the rotation effects (via \(G_{\ell}^j\)) shape the frequency distribution in the trained-weights regime.

Unlike the fixed-weight setting (Section~\ref{Sec:SphericalAnalysisFW}), each coefficient $G_{\ell}^j$ now provides 
a mechanism for rotating $w_i$ to activate different regions of $S^2$.  As $w_i$ changes, the rotation $R_{w_i}$ in 
\eqref{reluglobalform} continuously re-mixes the local axisymmetric modes across the global $j$ modes.  This adds 
substantial flexibility, allowing higher-frequency components to be selectively enhanced or suppressed based on the 
error distribution $D(\tau,\phi)$ over time.  Whether a low-frequency bias persists (the standard Frequency Principle) 
thus depends on:
\begin{itemize}
    \item The \emph{intrinsic} decay of the single-neuron coefficients $c_{\ell}$ in $\ell$, and
    \item How each neuron’s direction $w_i$ evolves to redirect these local expansions into the global harmonics.
\end{itemize}
In certain regimes, the network may still predominantly favor lower frequencies, while in others, strategic rotations 
of $w_i$ can shift energy into higher modes, possibly violating the usual FP.  The precise outcome depends on the 
relative magnitudes of $C_{\ell}^j$ and $G_{\ell}^j$ as well as the dynamics of $w_i(t)$.  

Overall, \emph{both} amplitude updates ($C_{\ell}^j$) and rotation updates ($G_{\ell}^j$) jointly shape the error’s 
frequency content in the fully-trained-weight scenario.  This underscores the importance of examining how directional 
degrees of freedom permit further reconfiguration of the SNN’s spherical harmonic representation beyond the simpler 
fixed-weight analyses.

Having established the updated expressions for \(C_{\ell}^j\) and \(G_{\ell}^j\), we now turn to analyze how they shape the Frequency Principle in the trained-weights setting.  Below, we present a series of theorems and corollaries that clarify when and why low-frequency dominance is preserved, and under what circumstances it may fail.  We begin by formalizing the \emph{decay rates} of these harmonic coefficients, followed by a result on \emph{immediate violation} of the FP if certain initial error conditions are satisfied.  We then conclude with two corollaries highlighting the precise conditions for FP preservation or violation when both \(a_i\) and \(w_i\) are trained.

\begin{lemma}[Decay Rates of Error Evolution Terms]
\label{thm:decayrates-trainedweights}
Consider the spherical harmonic coefficients \(C_{\ell}^j\) and \(G_{\ell}^j\) in the error evolution equation \eqref{dDelljt}, where \(C_{\ell}^j\) and \(G_{\ell}^j\) are defined by \eqref{Cellj} and \eqref{Gellj}, respectively.  Under the assumption that the geometric/error factors remain bound, the following estimates hold:
\begin{enumerate}
\item 
\(\bigl|C_{\ell}^j\bigr| = O\!\Bigl(\dfrac{\ell^{5/2}}{2^\ell}\Bigr)\).
\item 
\(\bigl|G_{\ell}^j\bigr| = O\!\Bigl(\dfrac{\ell^{7/2}}{2^\ell}\Bigr)\).
\end{enumerate}
\end{lemma}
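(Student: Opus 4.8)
The plan is to reduce both estimates to three ingredients that are already in hand: the single-neuron decay $c_\ell = O(\ell^{5/2}/2^\ell)$ from Section~\ref{Sec:SingleNeuronExpansion}, a uniform pointwise control of the Wigner element $D_{j0}^{(\ell)}(R_{w_i})$ together with its gradient, and the standing assumption that the finitely many geometric/error/amplitude factors (the mean-value points, the values $D(\tau_i^*,\phi_i^*)$ and $D(\tau_i^\dagger,\phi_i^\dagger)$, the vectors $\bar y_i$, the coefficients $a_i$, and the Jacobians $\partial\tau/\partial w_i$, $\partial\phi/\partial w_i$) stay bounded. The essential observation is that the entire gap between the exponents $5/2$ and $7/2$ comes from the \emph{single} extra power of $\ell$ produced by differentiating a degree-$\ell$ spherical harmonic; everything else contributes only bounded constants or the common factor $c_\ell$.

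First I would dispose of $C_\ell^j$. From $D_{j0}^{(\ell)}(R_{w_i}) = \sqrt{4\pi/(2\ell+1)}\,Y_\ell^j$ and the addition theorem $\sum_{j=-\ell}^\ell |Y_\ell^j|^2 = (2\ell+1)/(4\pi)$ one gets the uniform bound $|D_{j0}^{(\ell)}(R_{w_i})|\le 1$ (equivalently, unitarity of the Wigner matrices). Applying the triangle inequality to \eqref{Cellj} and bounding each factor $\pi D(\tau_i^*,\phi_i^*)\sin(2\tau_i^*)$ by a constant then yields $|C_\ell^j| \le \text{const}\cdot m\,|c_\ell| = O(\ell^{5/2}/2^\ell)$, which is claim (1).

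The main work is claim (2). Using $D_{j0}^{(\ell)} = \sqrt{4\pi/(2\ell+1)}\,Y_\ell^j$ and the chain rule through the angles $(\tau,\phi)$ of $w_i$, as in \eqref{gradiectclidlrw}, I would write $\nabla_{w_i}\!\bigl(c_\ell D_{j0}^{(\ell)}\bigr) = c_\ell\sqrt{4\pi/(2\ell+1)}\bigl(\partial_\tau Y_\ell^j\,\nabla_{w_i}\tau + \partial_\phi Y_\ell^j\,\nabla_{w_i}\phi\bigr)$. The two angular derivatives are the crux. For the azimuthal one, $\partial_\phi Y_\ell^j = ij\,Y_\ell^j$, so its modulus is at most $|j|\,|Y_\ell^j|\le \ell\cdot O(\ell^{1/2}) = O(\ell^{3/2})$, where $|Y_\ell^j|\le \sqrt{(2\ell+1)/(4\pi)} = O(\ell^{1/2})$ is again the addition-theorem bound, uniform in $j$ and in position. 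For the polar one I would bound $|\partial_\tau Y_\ell^j|$ by the full surface-gradient norm and invoke the Bernstein-type estimate $\|\nabla_{S^2}Y_\ell^j\|_\infty = O(\ell)\,\|Y_\ell^j\|_\infty = O(\ell^{3/2})$. Combining, $|\nabla_{w_i}(c_\ell D_{j0}^{(\ell)})| = |c_\ell|\,O(\ell^{-1/2})\,O(\ell^{3/2})\,O(1) = O(\ell)\,|c_\ell| = O(\ell^{7/2}/2^\ell)$, and summing the $m$ bounded terms of \eqref{Gellj}, each weighted by the bounded $a_i^2\,\pi D(\tau_i^\dagger,\phi_i^\dagger)\,\bar y_i$, preserves this order.

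I expect the delicate point to be the uniform-in-$j$ control of $\partial_\tau Y_\ell^j$. Carried out term by term on \eqref{gradiectclidlrw}, this derivative contains $N_\ell^j P_{\ell-1}^j(\cos\tau)$, and rewriting it naively as $(N_\ell^j/N_{\ell-1}^j)\,Y_{\ell-1}^j$ produces a normalization ratio $\sqrt{\tfrac{2\ell+1}{2\ell-1}\cdot\tfrac{\ell-j}{\ell+j}}$ that can grow like $\ell^{1/2}$ as $|j|\to\ell$, which would spoil the power count. The cleanest way around this is to sidestep the individual Legendre terms entirely and bound every first angular derivative at once through $\|\nabla_{S^2}Y_\ell^j\|_\infty = O(\ell^{3/2})$, as above. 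I would also record that the Jacobians $\nabla_{w_i}\tau,\nabla_{w_i}\phi$ and the $1/\sin\tau$ appearing in them stay bounded precisely because the lemma assumes the directions $w_i$ remain away from the poles, which is exactly the content of the ``geometric factors remain bounded'' hypothesis.
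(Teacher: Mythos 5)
Your proof is correct and reaches the same conclusion by the same overall skeleton as the paper (factor out $c_\ell$, bound the remaining factors, count one extra power of $\ell$ from differentiation), but the way you establish the crux is genuinely different. The paper's proof of part (1) simply asserts that the Wigner element $D_{j0}^{(\ell)}(R_{w_i})$ is $O(1)$, and its proof of part (2) consists of ``inspecting'' \eqref{gradiectclidlrw} and declaring that the angular derivatives introduce an extra factor of $O(\ell)$ --- with no verification that this factor is uniform in $j$. You instead prove the boundedness via the addition theorem (equivalently, unitarity of the Wigner matrix), and you replace the formula inspection by the Bernstein-type estimate $\|\nabla_{S^2}Y_\ell^j\|_\infty = O(\ell)\,\|Y_\ell^j\|_\infty = O(\ell^{3/2})$, which gives the uniform-in-$j$ control in one stroke. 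Your flagged ``delicate point'' is real: the naive decoupled bound on $N_\ell^j P_{\ell-1}^j$ (ratio $O(\ell^{1/2})$ times sup bound $O(\ell^{1/2})$ times $(\ell+j)=O(\ell)$) would give $O(\ell^2)$ rather than $O(\ell^{3/2})$ and spoil the exponent $7/2$; it is worth noting, though, that the paper's term-by-term route is salvageable, since the offending factors combine as $(\ell+j)\sqrt{\tfrac{\ell-j}{\ell+j}} = \sqrt{\ell^2-j^2}\le \ell$, so the cancellation rescues the inspection argument --- it is just never carried out in the paper. In short: the paper's proof is an assertion-level sketch tied to the explicit derivative formula; yours supplies the missing uniformity via standard harmonic-analysis tools and makes explicit (away-from-poles boundedness of the Jacobians and of $1/\sin\tau$) what the lemma's vague ``geometric/error factors remain bounded'' hypothesis must actually mean.
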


\begin{proof}
\textbf{Estimate for \(\bigl|C_{\ell}^j\bigr|\).}
From \eqref{Cellj}, each coefficient \(C_{\ell}^j\) is a linear combination of the single-neuron coefficients \(c_{\ell}\), multiplied by bounded factors \(\sin\bigl(2\tau_i^*\bigr)\), \(\pi\,D(\tau_i^*,\phi_i^*)\), and the Wigner element \(D_{j0}^{(\ell)}\bigl(R_{w_i}\bigr)\).  Since \(\bigl|c_{\ell}\bigr|\) decays on the order of \(O(\ell^{5/2}/2^\ell)\) and all other terms are \(O(1)\), we deduce
\[
\bigl|C_{\ell}^j\bigr|
\;=\;
O\!\Bigl(\tfrac{\ell^{5/2}}{2^\ell}\Bigr).
\]

\noindent
\textbf{Estimate for \(\bigl|G_{\ell}^j\bigr|\).}
From \eqref{Gellj} and \eqref{Gelljexact}, the term \(G_{\ell}^j\) involves an additional derivative 
\(\nabla_{w_i}\bigl(c_{\ell}D_{j0}^{(\ell)}(R_{w_i})\bigr)\).  
Inspecting \eqref{gradiectclidlrw}, we see that partial derivatives with respect to the angles \(\tau\) and \(\phi\) introduce an extra factor of \(O(\ell)\).  
Hence multiplying this additional factor by the intrinsic \(\bigl|c_{\ell}\bigr|\approx O(\ell^{5/2}/2^\ell)\) yields 
\[
\bigl|G_{\ell}^j\bigr|
\;=\;
O\!\Bigl(\tfrac{\ell^{7/2}}{2^\ell}\Bigr).
\]
Thus, although \(G_{\ell}^j\) still decays exponentially with \(\ell\), it carries an extra factor of \(\ell\) relative to \(C_{\ell}^j\).
\end{proof}

\begin{corollary}
    [Immediate FP Violation in the Trained-Weights Scenario]
\label{thm:fp-fails-trainedweights}
Let $\{w_i\}_{i=1}^m\subset S^2$ be directions updated by gradient descent (so $w_i=w_i(t)$) and $\{a_i\}$ be scalar output weights (also evolving in time).  Consider the shallow ReLU network.
\[
u(\tau,\phi;\theta)\;=\;\sum_{i=1}^m a_i\,\mathrm{ReLU}\!\bigl(w_i^\top x(\tau,\phi)\bigr),
\]
on the sphere, and define the error function
\[
D(\tau,\phi)\;=\;u(\tau,\phi;\theta)\;-\;h(\tau,\phi)
\]
 and suppose $D(\tau,\phi)$ is continuous on $S^2$.  Let
\[
(\tau_i^*,\phi_i^*)\in \Bigl[0,\tfrac{\pi}{2}\Bigr]\times[0,2\pi]
\quad\text{and}\quad
(\tau_i^\dagger,\phi_i^\dagger)\in \Bigl[0,\tfrac{\pi}{2}\Bigr]\times[0,2\pi]
\]
be the representative points for the integrals 
\(\int D(\tau',\phi')\,\mathrm{ReLU}\!\bigl(w_i^\top x(\tau',\phi')\bigr)\,d\Omega\)
and 
\(\int D(\tau',\phi')\,\nabla_{w_i}\mathrm{ReLU}\!\bigl(w_i^\top x(\tau',\phi')\bigr)\,d\Omega\),
respectively.   From \eqref{Cellj} and \eqref{Gellj}, the error’s spherical harmonic components evolve according to
\[
C_{\ell}^j
\;=\;
-\sum_{i=1}^m 
\Bigl[
  \pi\,D\bigl(\tau_i^*,\phi_i^*\bigr)\,\sin\bigl(2\,\tau_i^*\bigr)
\Bigr]\,
c_{\ell}\,
D_{j0}^{(\ell)}\!\bigl(R_{w_i}\bigr),
\]\[
G_{\ell}^j
\;=\;
-\sum_{i=1}^m 
a_i^2\,\Bigl[\pi\,D\bigl(\tau_i^\dagger,\phi_i^\dagger\bigr)\,\bar{y}_i\Bigr]
\,\cdot\,
\nabla_{w_i}\Bigl(
  c_{\ell} D_{j0}^{(\ell)}\!\bigl(R_{w_i}\bigr)
\Bigr).
\]

\noindent
\emph{If at any training time $t$,} 
\[
D\bigl(\tau_i^*,\phi_i^*\bigr)\;=\;0
\quad\text{and}\quad
D\bigl(\tau_i^\dagger,\phi_i^\dagger\bigr)\;=\;0
\quad
\forall\,i=1,\dots,m,
\]
\emph{then}
\[
C_{\ell}^j=0
\quad
\text{and}
\quad
G_{\ell}^j=0
\quad
\text{for every }\ell\in\{0,1,2,\dots\},\; j=-\ell,\dots,\ell.
\]
\end{corollary}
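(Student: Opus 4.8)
The plan is to prove the statement by direct substitution into the defining formulas \eqref{Cellj} and \eqref{Gellj}, exploiting the structural fact that the error value $D$ enters each summand as an overall multiplicative scalar that is entirely independent of the harmonic indices $\ell$ and $j$. The argument is therefore purely algebraic once the two coefficient expressions are read carefully.

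First I would treat $C_{\ell}^j$. Inspecting \eqref{Cellj}, each term in the sum over $i$ has the form
\[
-\,\pi\,D\bigl(\tau_i^*,\phi_i^*\bigr)\,\sin\bigl(2\tau_i^*\bigr)\;c_{\ell}\,D_{j0}^{(\ell)}\bigl(R_{w_i}\bigr),
\]
so $D(\tau_i^*,\phi_i^*)$ sits as a prefactor multiplying the quantities $\sin(2\tau_i^*)$, $c_{\ell}$, and the Wigner element $D_{j0}^{(\ell)}(R_{w_i})$. Under the hypothesis $D(\tau_i^*,\phi_i^*)=0$ for every $i$, each summand equals $0$ times a finite quantity and hence vanishes, so the finite sum is zero. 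The key observation is that this holds for all $(\ell,j)$ at once, precisely because the vanishing factor does not depend on $\ell$ or $j$.

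Next I would handle $G_{\ell}^j$ in the same spirit. From \eqref{Gellj}, each summand carries the bracket $[\pi\,D(\tau_i^\dagger,\phi_i^\dagger)\,\bar{y}_i]$, in which the scalar $D(\tau_i^\dagger,\phi_i^\dagger)$ multiplies the entire vector $\bar{y}_i$. Setting $D(\tau_i^\dagger,\phi_i^\dagger)=0$ collapses this bracket to the zero vector, so its dot product with the gradient $\nabla_{w_i}\bigl(c_{\ell}\,D_{j0}^{(\ell)}(R_{w_i})\bigr)$ is zero regardless of the gradient's value. Summing over $i$ then yields $G_{\ell}^j=0$ for every $\ell,j$.

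The only point requiring care—and the one I would flag as the main (albeit mild) obstacle—is the legitimacy of the cancellation $0\times(\text{factor})=0$ in the $G_{\ell}^j$ term, because the explicit gradient displayed in \eqref{gradiectclidlrw} and \eqref{Gelljexact} contains factors such as $1/\sin\tau_i^\dagger$ that could diverge at the poles $\tau_i^\dagger\in\{0,\pi\}$, turning the product into an indeterminate $0\cdot\infty$. To exclude this, I would invoke the boundedness hypothesis already imposed in Lemma~\ref{thm:decayrates-trainedweights} (that the geometric and error factors remain bounded), together with the fact that the representative points lie in $[0,\tfrac{\pi}{2}]\times[0,2\pi]$ by the mean-value construction over the active hemisphere $\{w_i^\top x\ge 0\}$, so the accompanying factors are finite and the products genuinely vanish.
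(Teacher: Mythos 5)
Your proof is correct and matches the paper's (implicit) argument: the paper treats this corollary as following immediately from \eqref{Cellj} and \eqref{Gellj} by direct substitution, since $D(\tau_i^*,\phi_i^*)$ and $D(\tau_i^\dagger,\phi_i^\dagger)$ enter every summand as multiplicative prefactors independent of $\ell$ and $j$, which is exactly your reasoning. Your extra care about the potential $0\cdot\infty$ issue from the $1/\sin\tau_i^\dagger$ factors goes beyond what the paper says and is a reasonable refinement, resolved as you do by the boundedness assumption of Lemma~\ref{thm:decayrates-trainedweights}.
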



\begin{remark}
If $D\bigl(\tau_i^*,\phi_i^*\bigr)=D\bigl(\tau_i^\dagger,\phi_i^\dagger\bigr)=0$ at precisely those mean-value points driving ReLU and rotation signals, the initialization kills all updates at every $\ell,j$.  Thus no preference for low frequencies arises at training onset, violating the FP in a degenerate scenario.
\end{remark}

\begin{remark}
    Let 
\(
u(\tau,\phi; \theta) 
= 
\sum_{i=1}^{m} a_i\,\mathrm{ReLU}\!\bigl(w_i^\top x(\tau,\phi)\bigr)
\)
be a shallow ReLU network on the unit sphere \(S^2\), where both \(\{a_i\}\) and \(\{w_i\}\) are updated by gradient descent.  Suppose:
\begin{itemize}
\item 
For each \(i\), the terms \(D\bigl(\tau_i^*,\phi_i^*\bigr)\) and \(D\bigl(\tau_i^\dagger,\phi_i^\dagger\bigr)\) remain bounded and do not oscillate in sign over training (no sign-flips cause systematic cancellation).
\item 
The coefficients \(\bigl|c_{\ell}\bigr|\) and their derivatives follow the exponential decay property in Lemma~\ref{thm:decayrates-trainedweights}.
\end{itemize}
Then for sufficiently large \(\ell\), the Frequency Principle holds: namely, lower-frequency modes (smaller \(\ell\)) are learned faster than higher-frequency modes (\(\ell\gg1\)).
\end{remark}




\begin{remark}
    Under the same setting and notation as Corollary~\ref{thm:fp-fails-trainedweights}, assume:
\begin{itemize}
\item 
The sign of \(D\bigl(\tau_i^*,\phi_i^*\bigr)\) and \(D\bigl(\tau_i^\dagger,\phi_i^\dagger\bigr)\) remains consistent over training, but
\item 
For certain lower \(\ell\), the rotation-driven term \(\bigl|G_{\ell}^j\bigr|\) can exceed (or be comparable to) \(\bigl|C_{\ell}^j\bigr|\), and can grow in a way that competes with or surpasses some higher \(\ell'\).  
\end{itemize}
Then there exists a regime in which the error update for \(\ell\)-modes is at least as large as that for certain higher \(\ell'\)-modes, thereby violating the usual low-frequency dominance.  In other words, the Frequency Principle can fail for those modes.
\end{remark}




\begin{figure}
    \centering
    \subfigure[$|C_\ell^{j}|$]{\includegraphics[width=6cm, height =4cm]{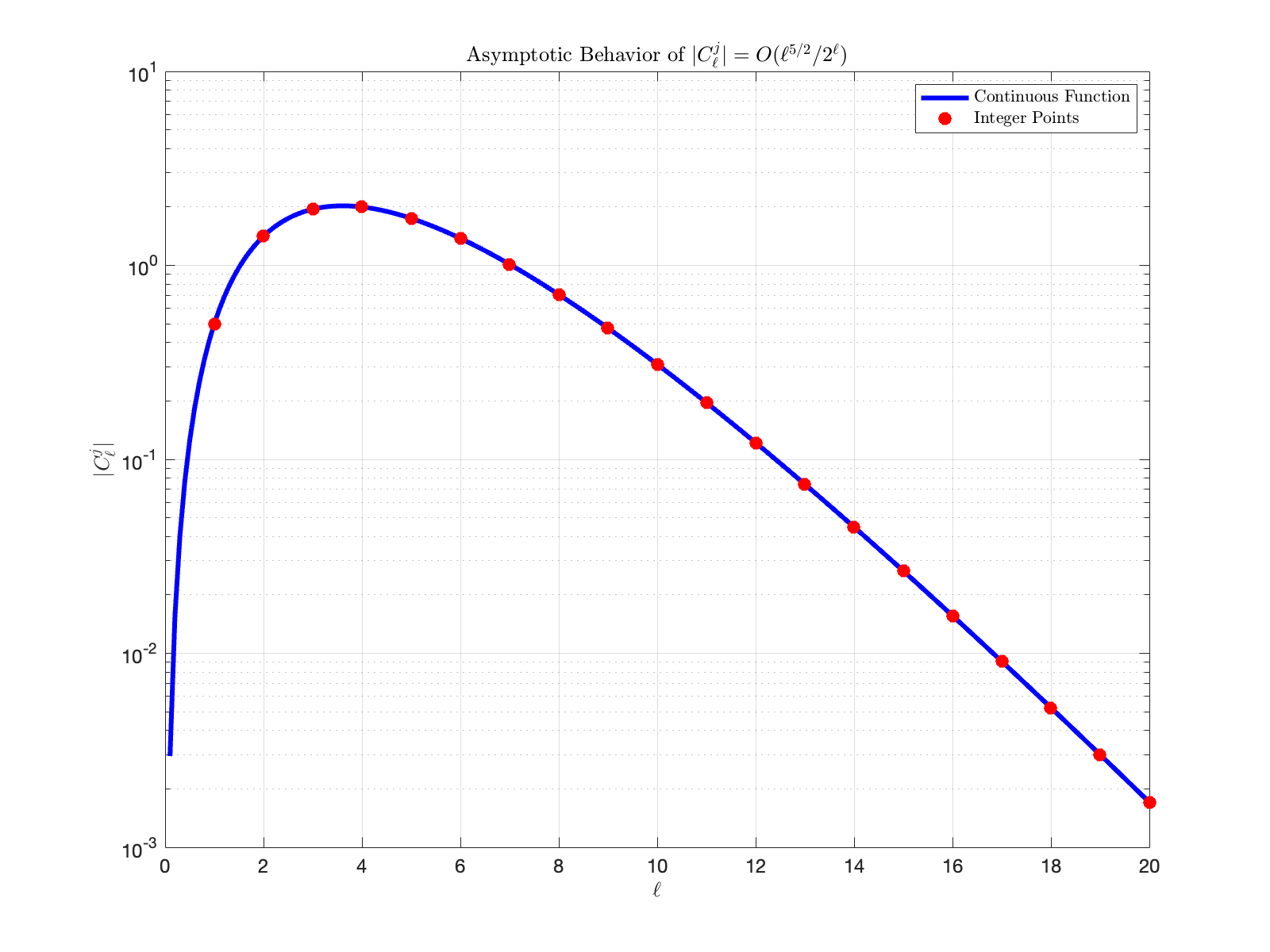}}
    \subfigure[$|G_\ell^{j}|$]{\includegraphics[width=6cm, height =4cm]{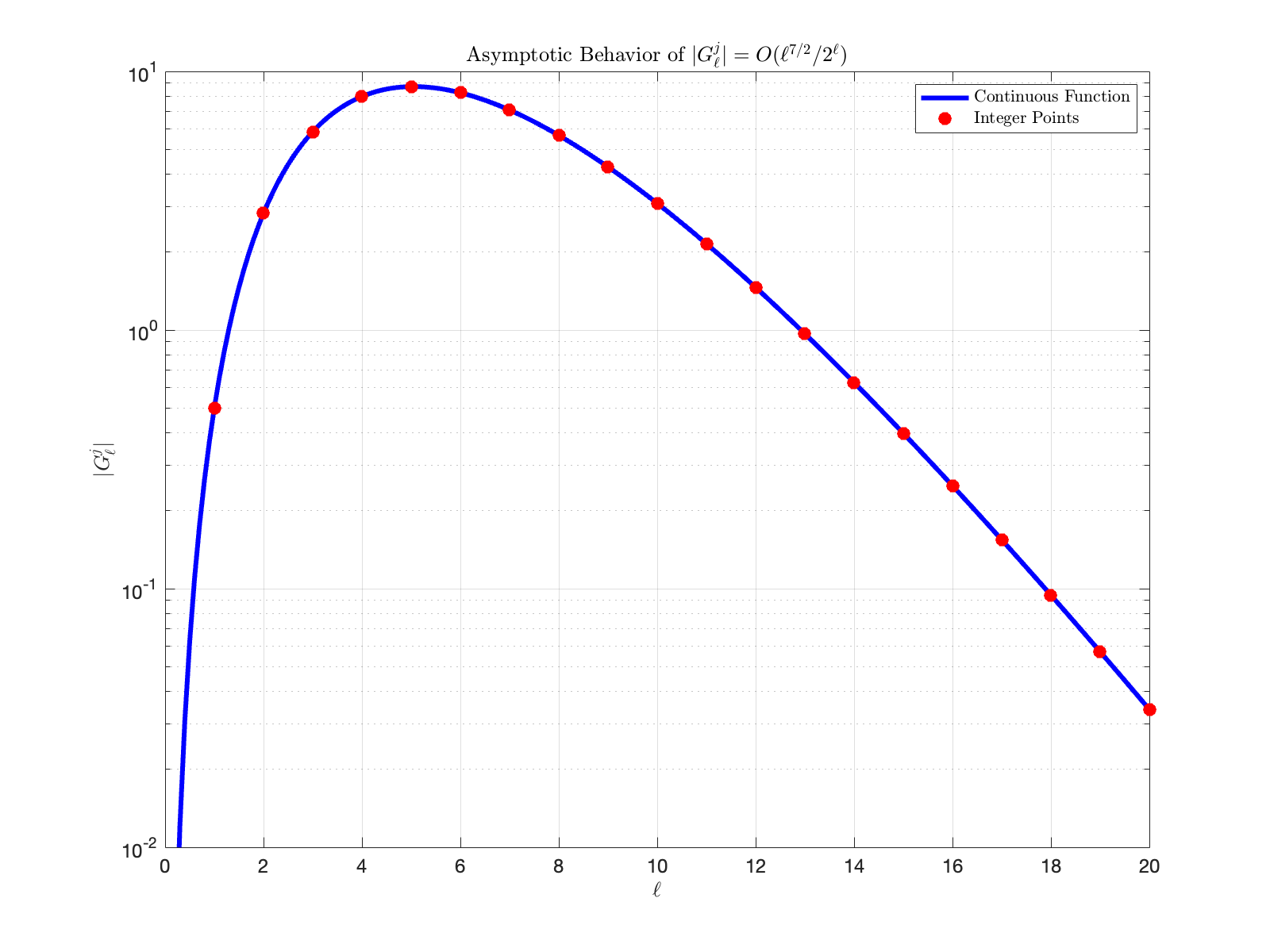}}
    \caption{Asymptotic Behavior of $|C_\ell^{j}|$ and $|G_\ell^{j}|$}
    \label{AsymptoticBehaviorofCG}
\end{figure}

In summary, these results clarify how weight rotation adds a polynomial factor \(\ell\) to the exponential decay but does not remove it.  For very large \(\ell\), both \(C_{\ell}^j\) and \(G_{\ell}^j\) (showed in Figure \ref{AsymptoticBehaviorofCG}) remain exponentially small in \(\ell\), so the FP is preserved.  However, at lower or moderate frequencies, geometric effects in \(G_{\ell}^j\) can become significant, potentially allowing certain modes to increase faster, leading to FP violations.  This framework provides a detailed account of when (and how) trained weights alter frequency-dependent learning dynamics on the sphere.

\subsection{Implications for the Frequency Principle: Numerical Example}
Like Section \ref{NumericalExampleFixedweight}, our investigation employs a shallow neural network architecture consisting of 100 neurons in the hidden layer to approximate the zero function $u(\tau,\phi)$ on $\mathbb{S}^2$. The network is trained using SGD with a learning rate of $1e{-3}$ over 10,000 epochs, for numerical example $u(\tau,\phi)=0$ and a learning rate of $1e{-3}$ over 100,000 epochs for numerical example $u(\tau,\phi) = \sin(\tau)\cos(3\phi) + \sin(3\tau)\cos(5\phi)$. Training data comprises 100 uniformly distributed sampling points on the sphere. 

\subsubsection{$u(\tau,\phi)=0$}
\textbf{Case 1: Weight-Induced Frequency Learning: Canonical Case}
\begin{figure}
    \centering
    \includegraphics[width=6cm, height =4cm]{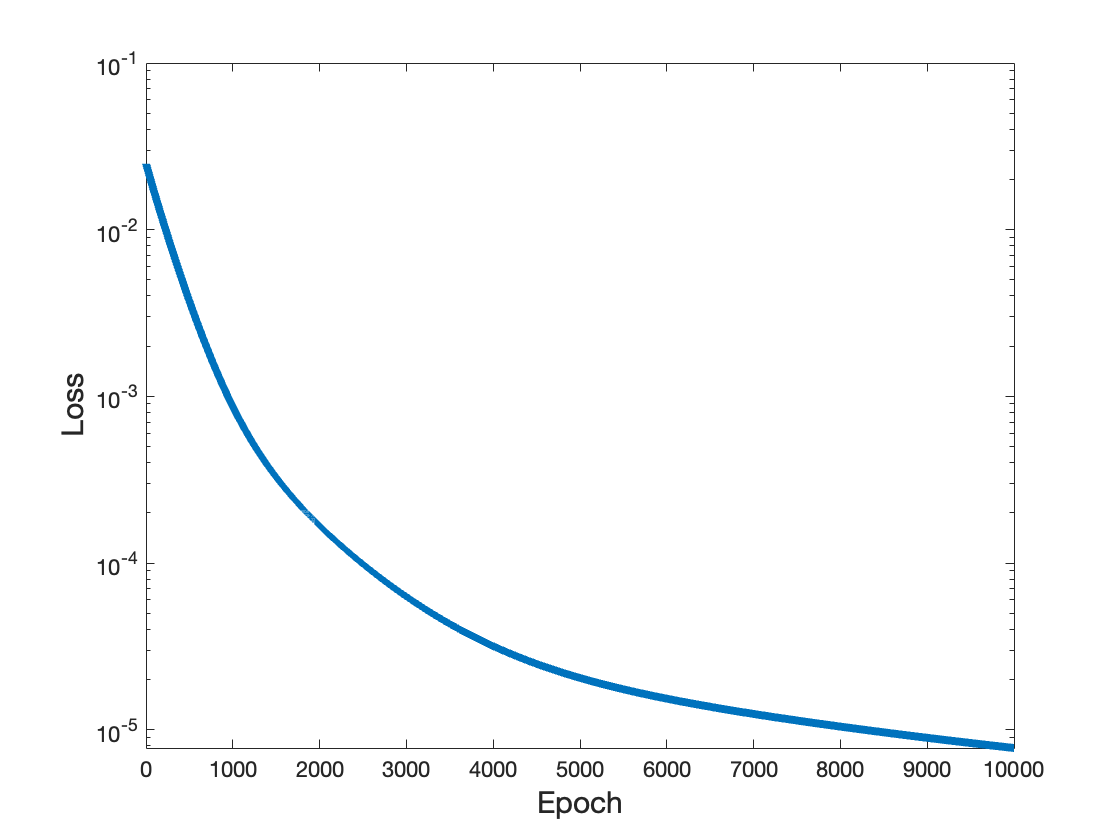}
    \caption{Image of Loss of the sixth test (Best Loss: 7.76e-6).}
    \label{6test}
\end{figure}
\begin{figure}
    \centering
    \subfigure[$l = 1,2,\ldots, 5$, $j=0$]{\includegraphics[width=6cm, height =4cm]{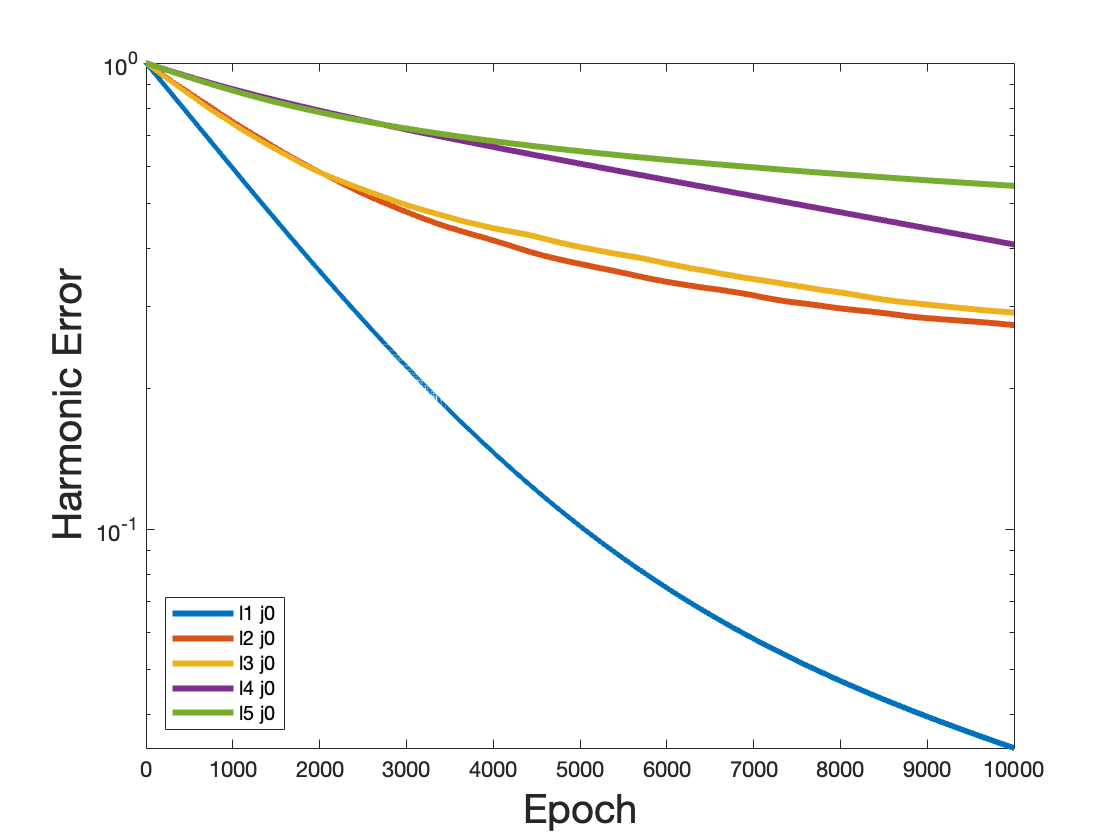}}
    \subfigure[$l = 6,7,\ldots, 10$, $j=0$]{\includegraphics[width=6cm, height =4cm]{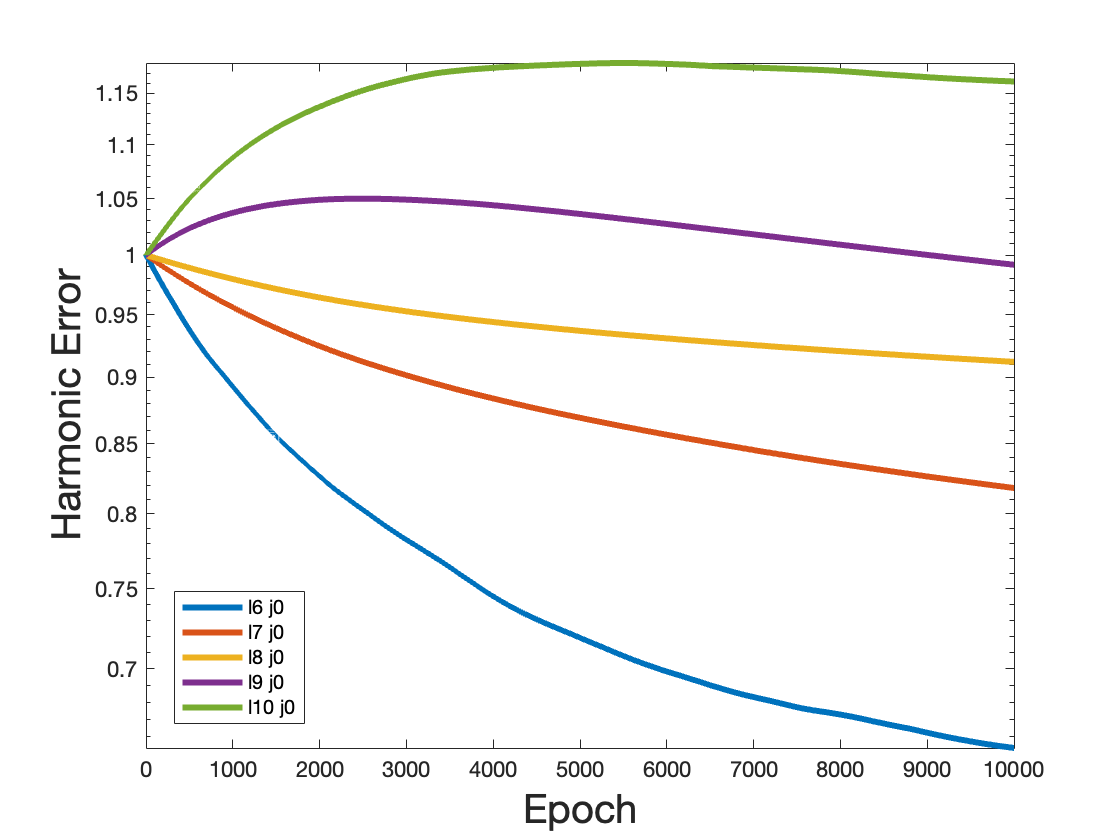}}
    \caption{Harmonic Errors of $c_{lj}$ When $l = 1,2,\ldots, 10$, $j=0$}
    \label{6testalphabeta}
\end{figure}
\begin{figure}
    \centering
    \subfigure[$l = 1,2,\ldots, 5$, $j=1$]{\includegraphics[width=6cm, height =4cm]{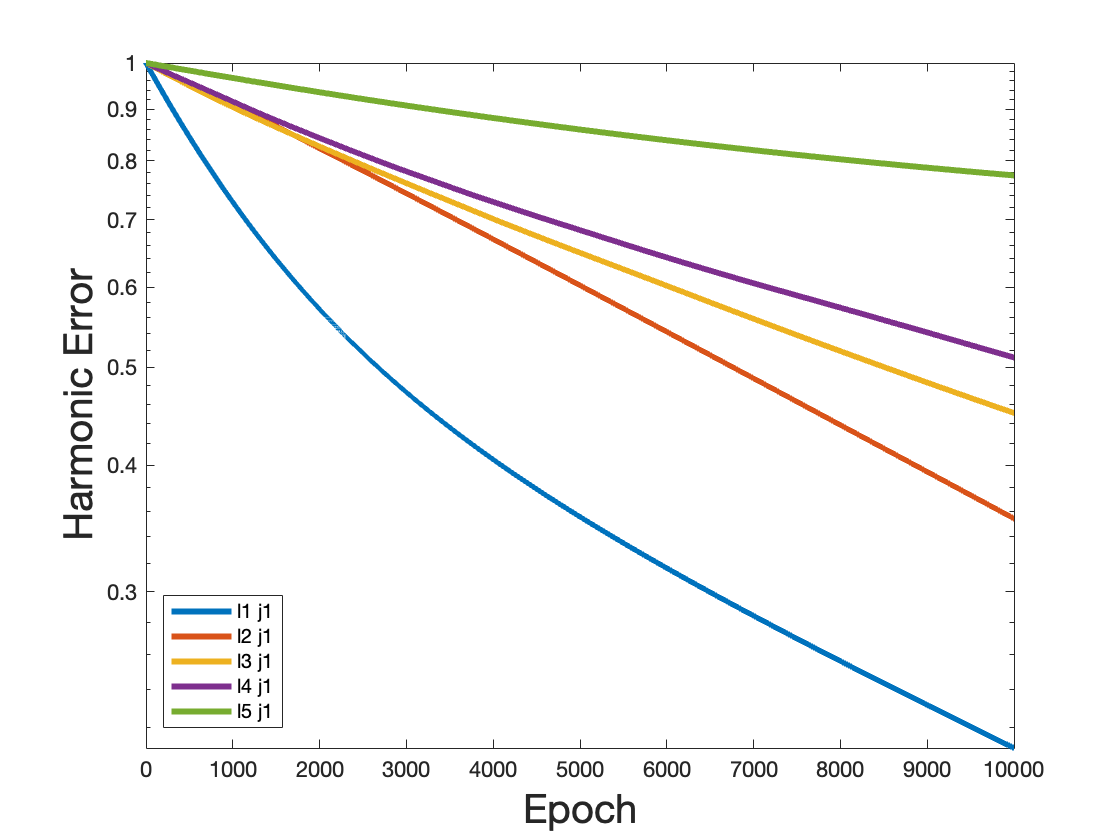}}
    \subfigure[$l = 6,7,\ldots, 10$, $j=1$]{\includegraphics[width=6cm, height =4cm]{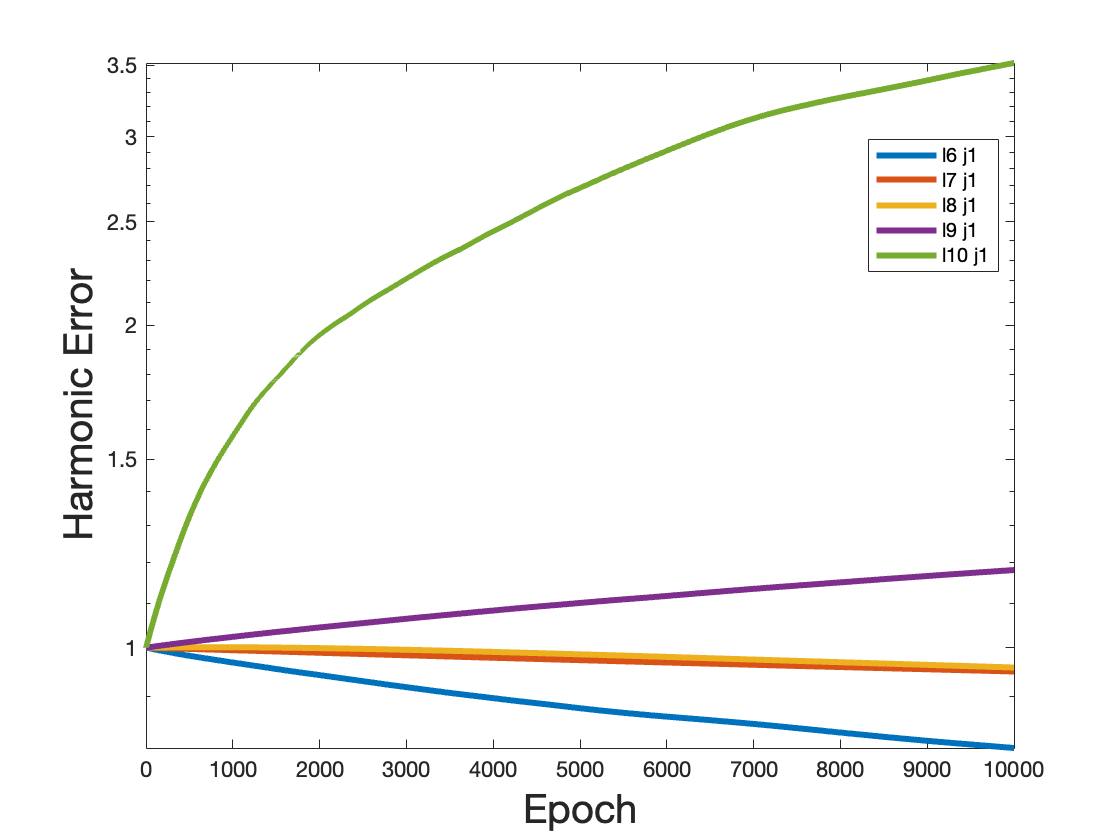}}
    \caption{Harmonic Errors of $c_{lj}$ When $l = 1,2,\ldots, 10$, $j=1$.}
    \label{6testalphabetam1}
\end{figure}

\begin{figure}
    \centering
    \subfigure[Image of the target function]{\includegraphics[width=6cm, height =4cm]{fqsphere/target_function_u0fp.png}}
    \subfigure[Image of the SNN output]{\includegraphics[width=6cm, height =4cm]{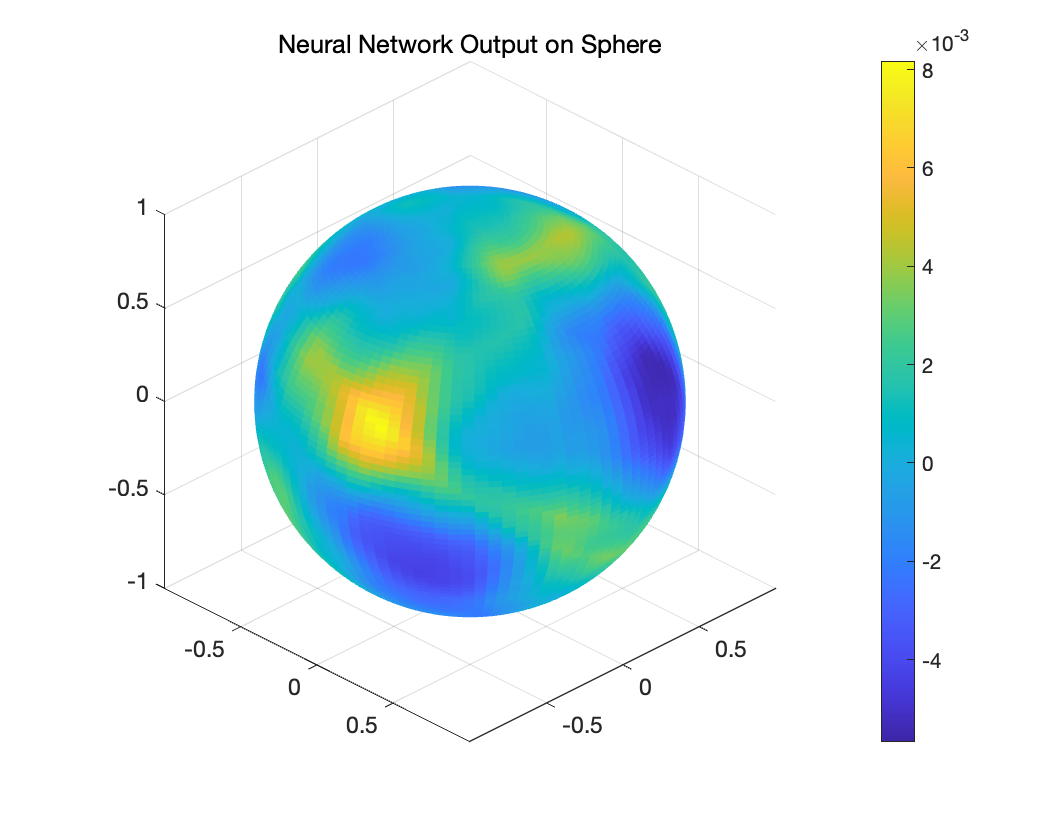}}

   \subfigure[Image of the error between the target function and SNN]{\includegraphics[width=6cm, height =4cm]{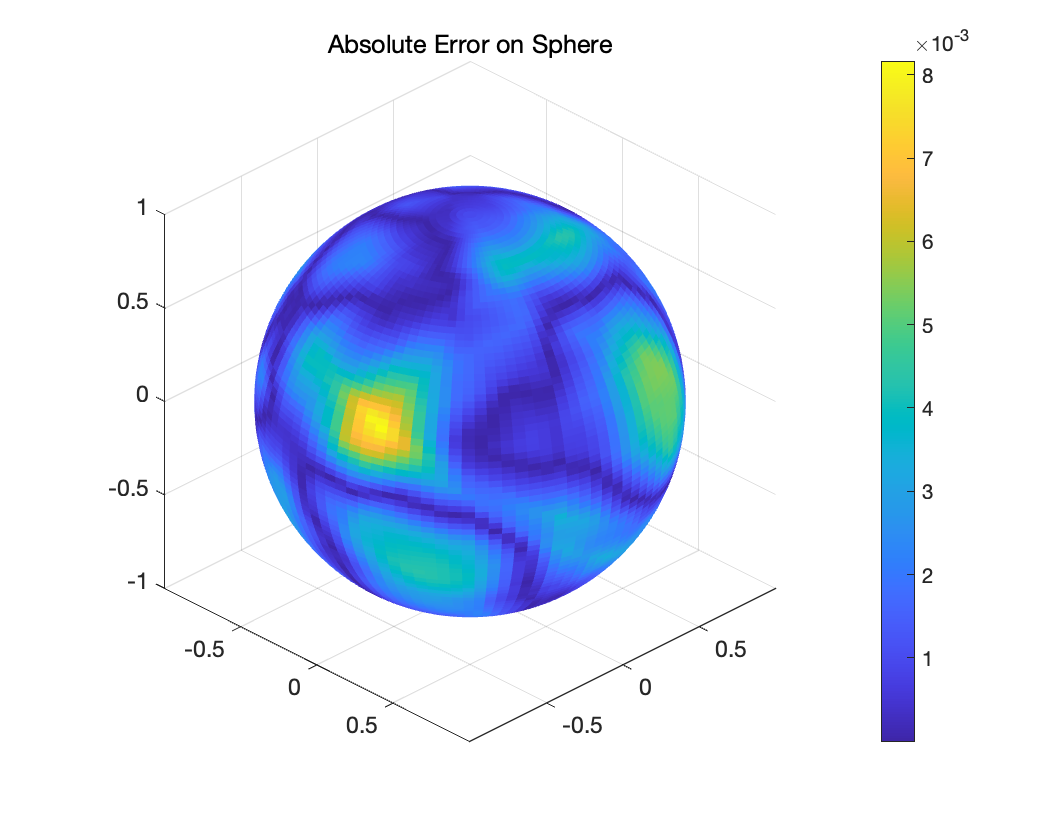}}
    \caption{The target function, SNN output, and error of the sixth test.}
    \label{6testdnntarget}
\end{figure}

In the canonical case, analysis of the trained weights reveals a direct correlation with the frequency principle. The weight distribution in the first layer exhibits a hierarchical structure that naturally favors lower-frequency components. Specifically, the spherical harmonic decomposition of the network output shows (Figure \ref{6testalphabeta} and Figure \ref{6testalphabetam1}):
\begin{itemize}
    \item  The magnitude of weights corresponding to low-frequency components dominates the spectrum.
    \item A systematic decay in weight magnitudes as the frequency increases.
    \item Clear separation between frequency bands in the learning process.
\end{itemize}
This weight configuration results in optimal approximation properties, as evidenced by the uniform distribution of residual errors across the sphere's surface (Figure \ref{6testdnntarget}).

\textbf{Case 2: Hybrid Weight Distribution Pattern}
\begin{figure}
    \centering
    \includegraphics[width=6cm, height =4cm]{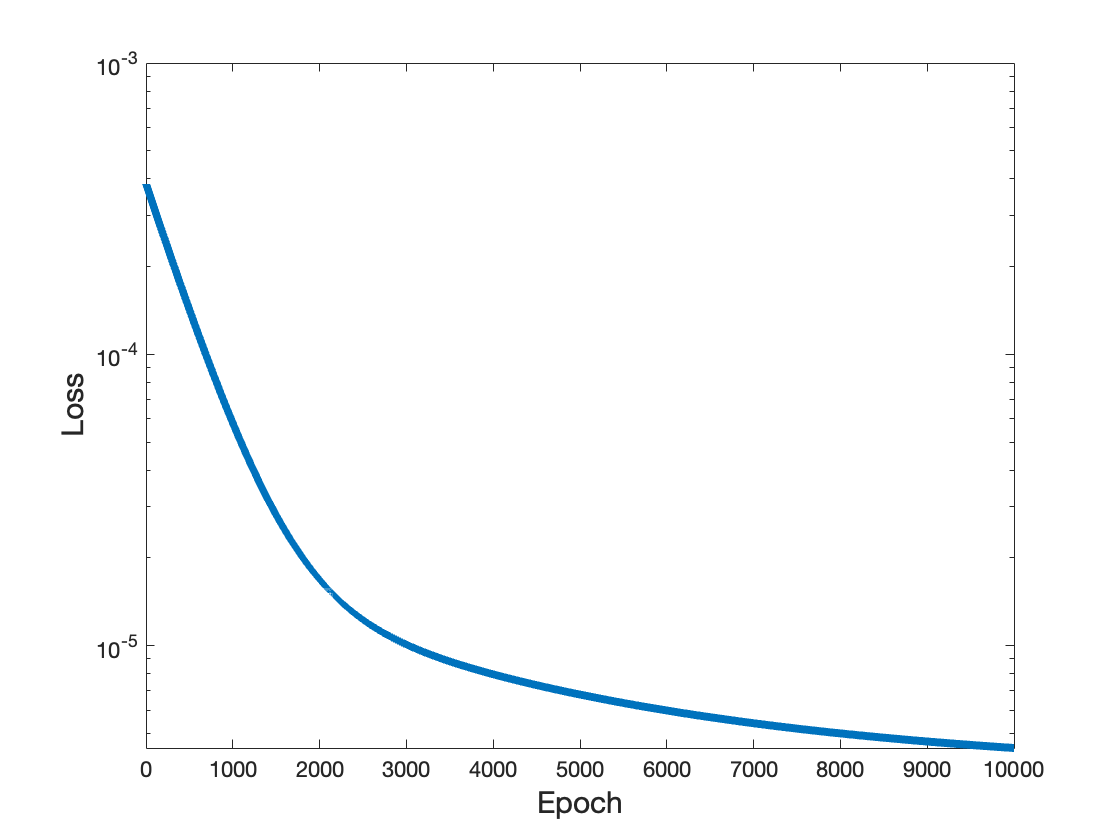}
    \caption{Image of Loss of the seventh test (Best Loss: 4.45e-6).}
    \label{7test}
\end{figure}
\begin{figure}
    \centering
    \subfigure[$l = 1,2,\ldots, 5$, $j=0$]{\includegraphics[width=6cm, height =4cm]{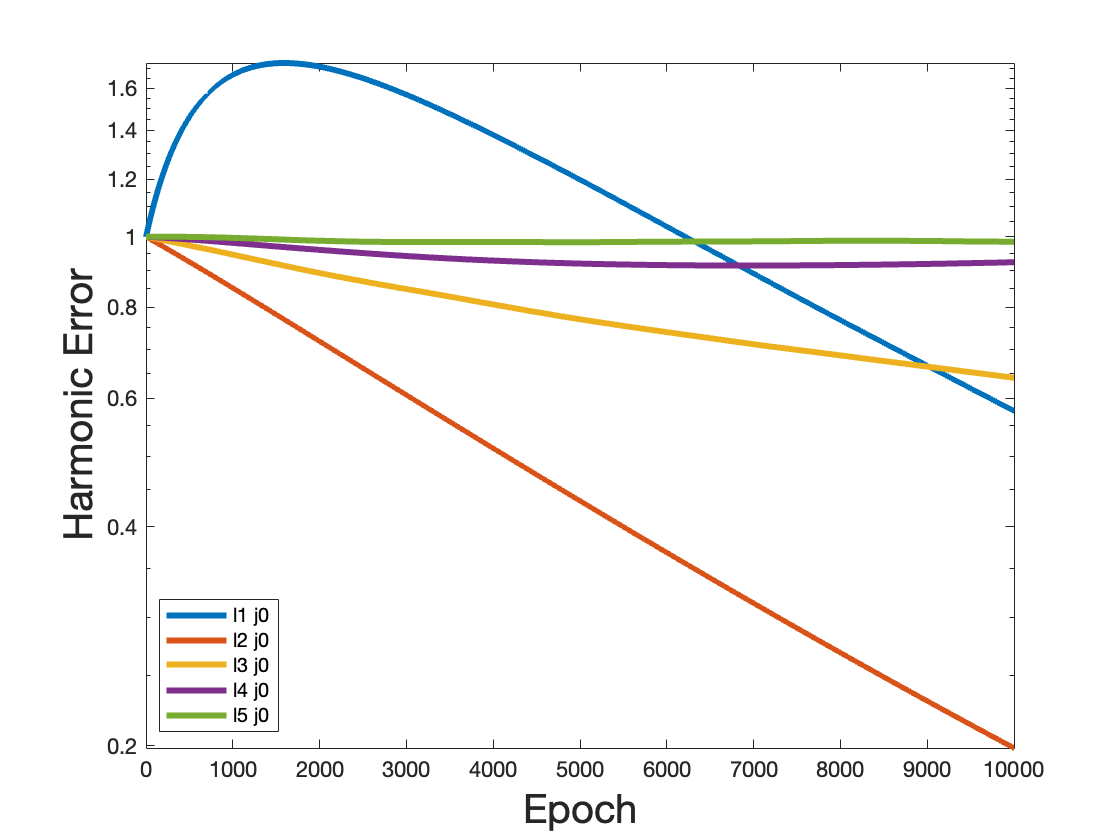}}
    \subfigure[$l = 6,7,\ldots, 10$, $j=0$]{\includegraphics[width=6cm, height =4cm]{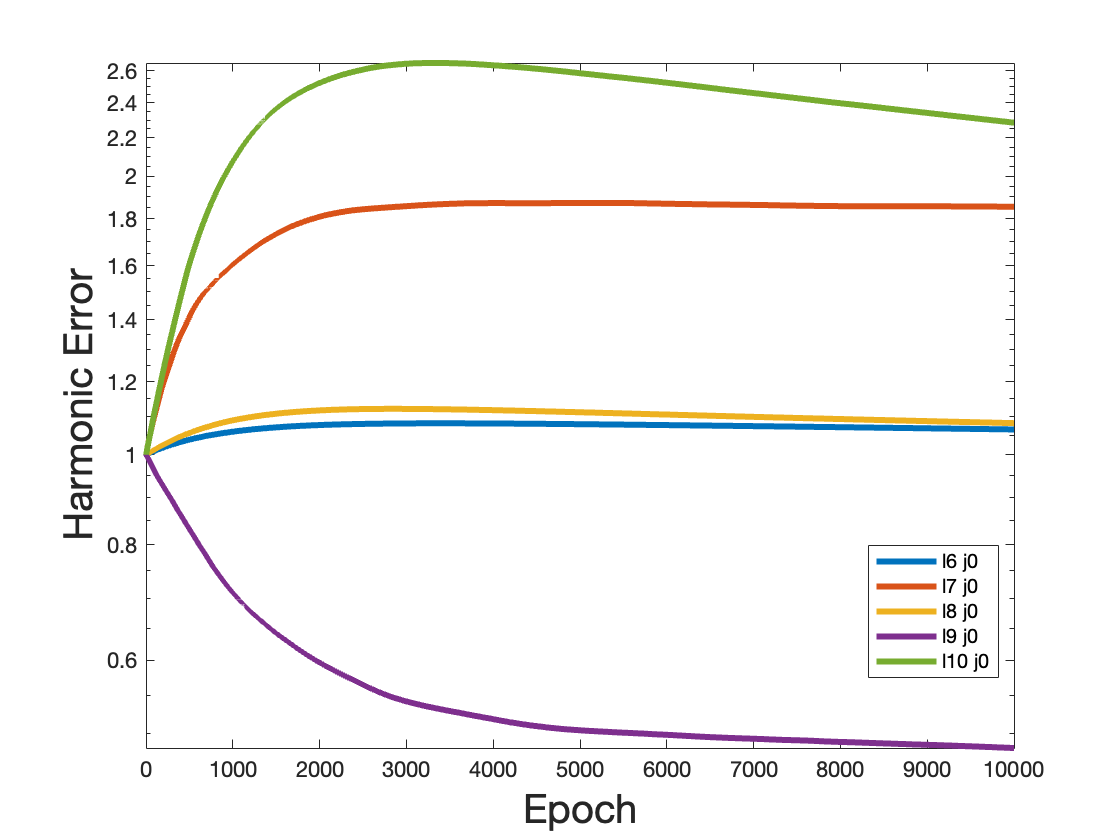}}
    \caption{Harmonic Errors of $c_{lj}$ When $l = 1,2,\ldots, 10$, $j=0$.}
    \label{7testalphabeta}
\end{figure}
\begin{figure}
    \centering
    \subfigure[$l = 1,2,\ldots, 5$, $j=1$]{\includegraphics[width=6cm, height =4cm]{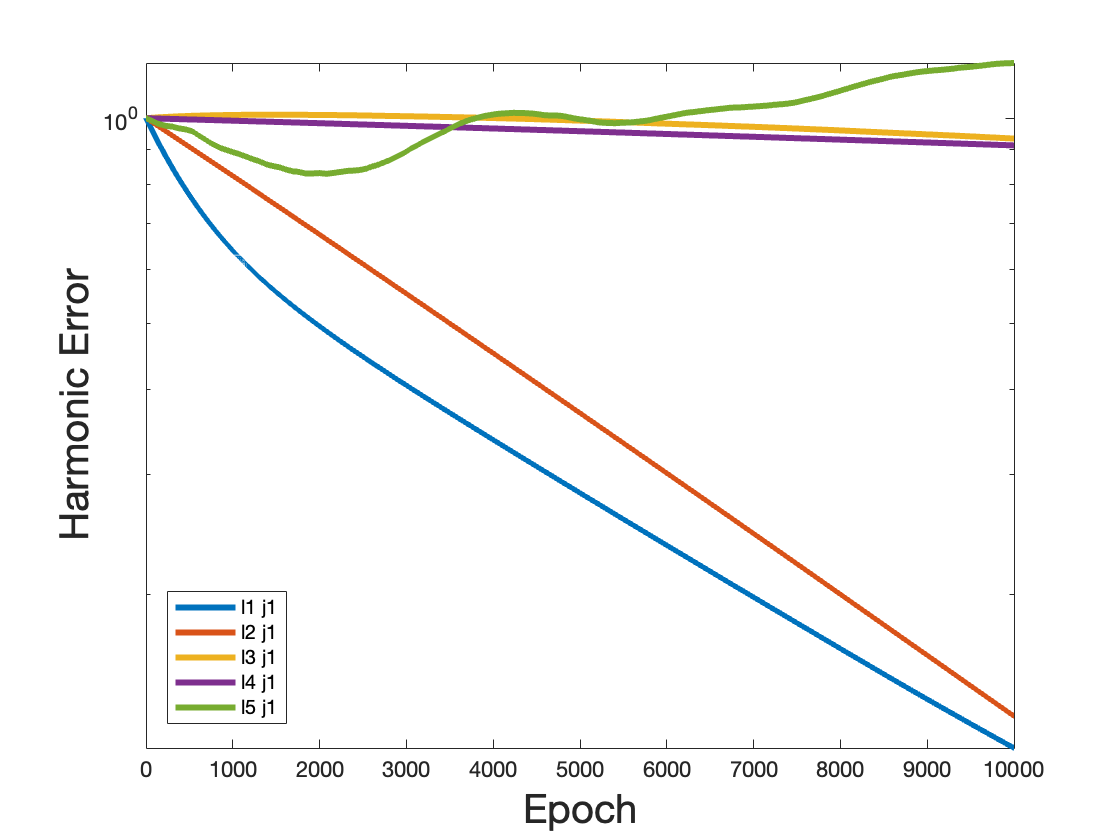}}
    \subfigure[$l = 6,7,\ldots, 10$, $j=1$]{\includegraphics[width=6cm, height =4cm]{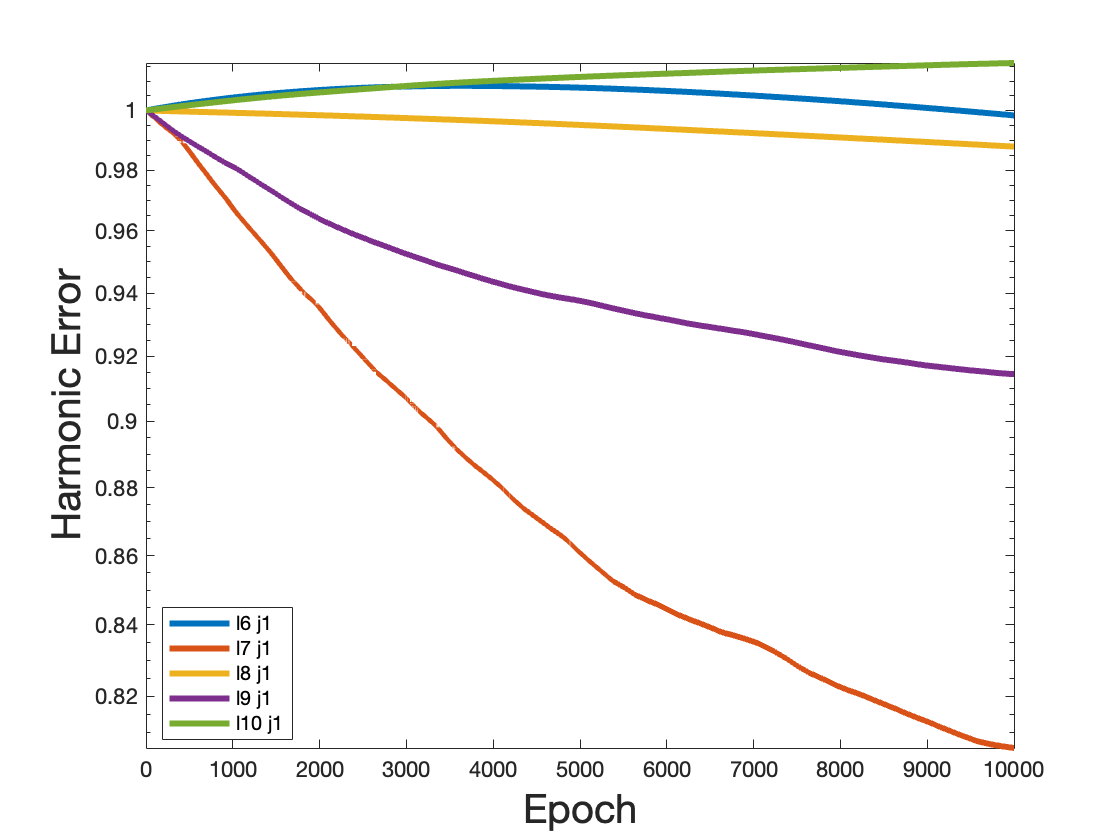}}
    \caption{Harmonic Errors of $c_{lj}$ When $l = 1,2,\ldots, 10$, $j=1$.}
    \label{7testalphabetam1}
\end{figure}

\begin{figure}
    \centering
    \subfigure[Image of the target function]{\includegraphics[width=6cm, height =4cm]{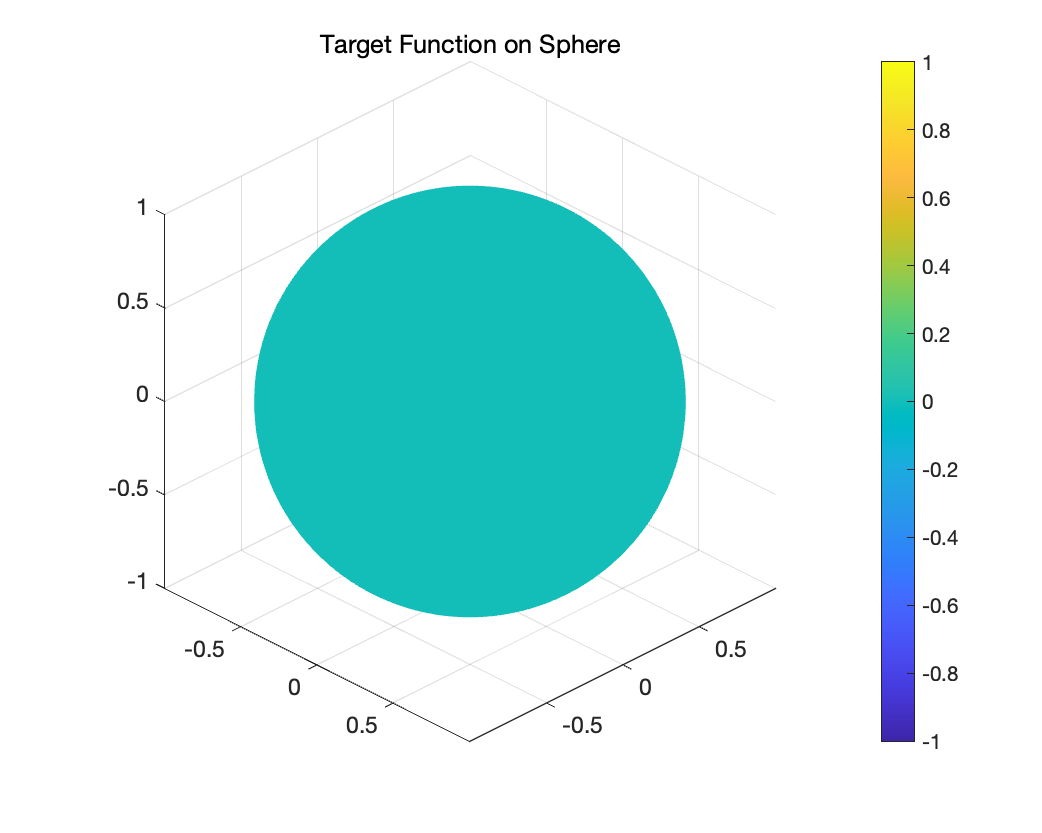}}
    \subfigure[Image of the SNN output]{\includegraphics[width=6cm, height =4cm]{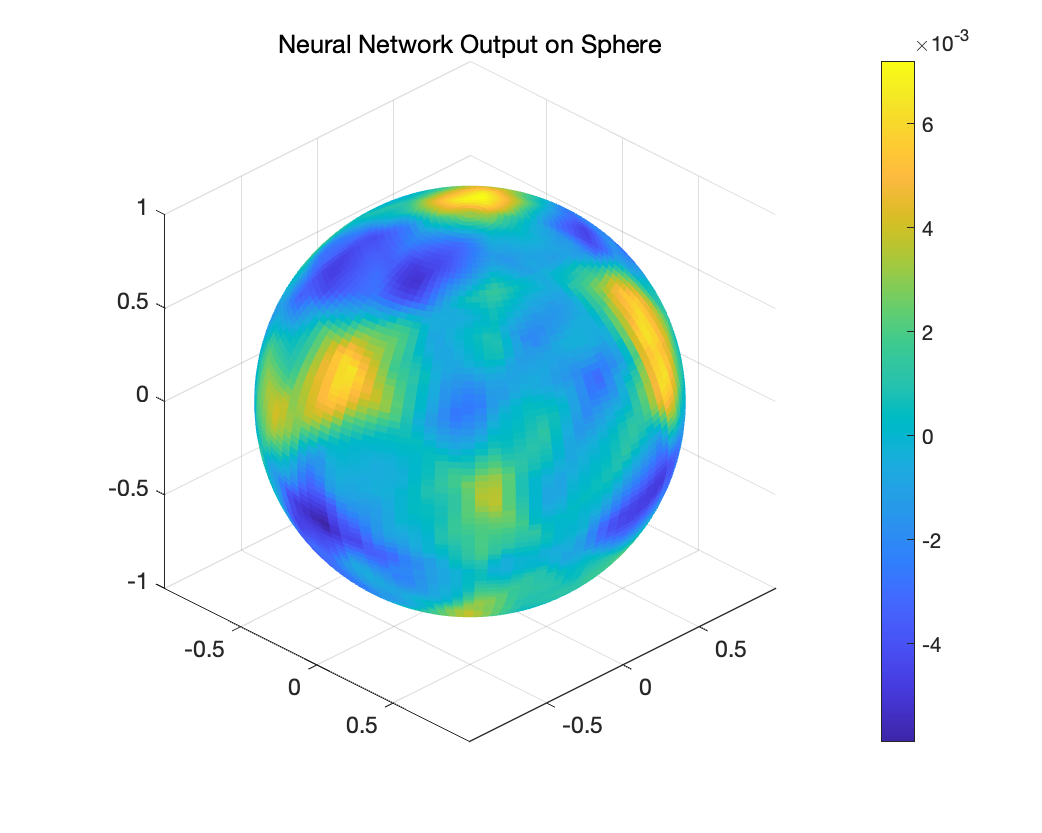}}

   \subfigure[Image of the error between the target function and SNN]{\includegraphics[width=6cm, height =4cm]{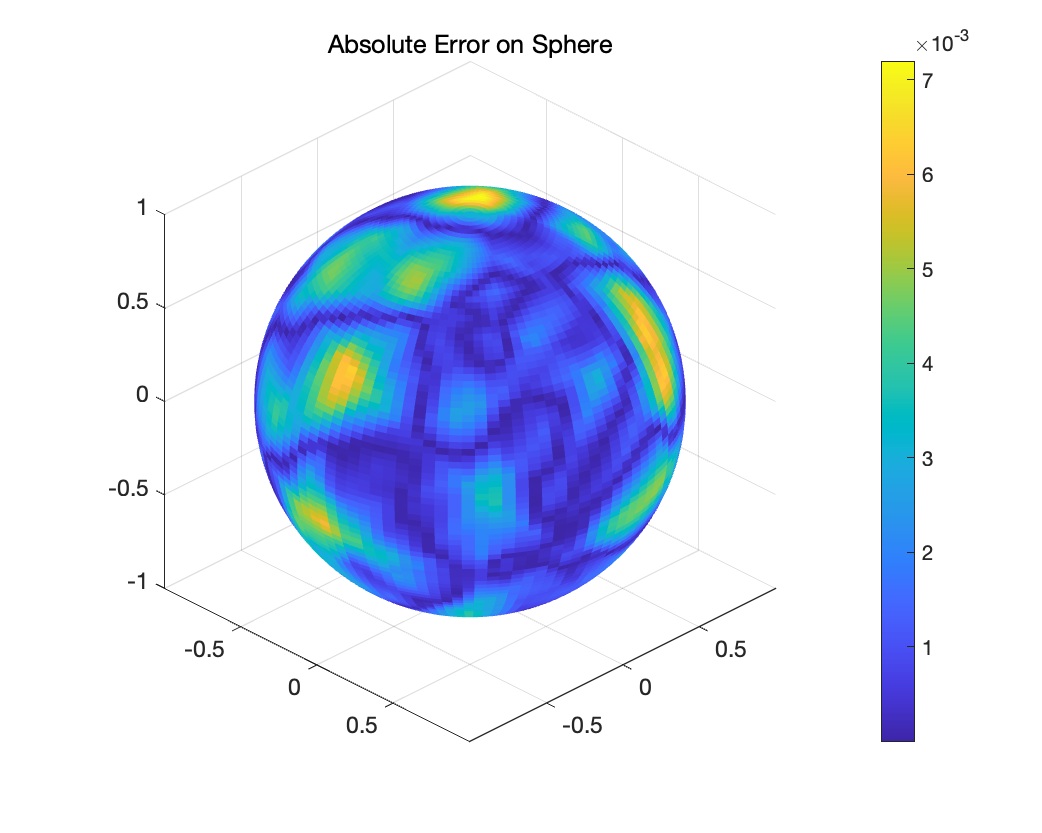}}
    \caption{The target function, SNN output, and error of the seventh test.}
    \label{7testdnntarget}
\end{figure}

The hybrid case presents an intriguing intermediate scenario where the weight distribution exhibits mixed frequency characteristics. Analysis of the trained weights reveals (Figure \ref{7testalphabeta} and Figure \ref{7testalphabetam1}):
\begin{itemize}
    \item Non-monotonic distribution of weight magnitudes across frequency components.
    \item Simultaneous activation of both low and high-frequency modes.
    \item Localized clustering of weights in frequency space.
\end{itemize}

This configuration demonstrates how weight initialization and training dynamics can lead to a partial adherence to the frequency principle, resulting in more complex learning patterns than previously theorized (Figure \ref{7testdnntarget}).

\textbf{Case 3: Anomalous Frequency Learning Pattern}
\begin{figure}
    \centering
    \includegraphics[width=6cm, height =4cm]{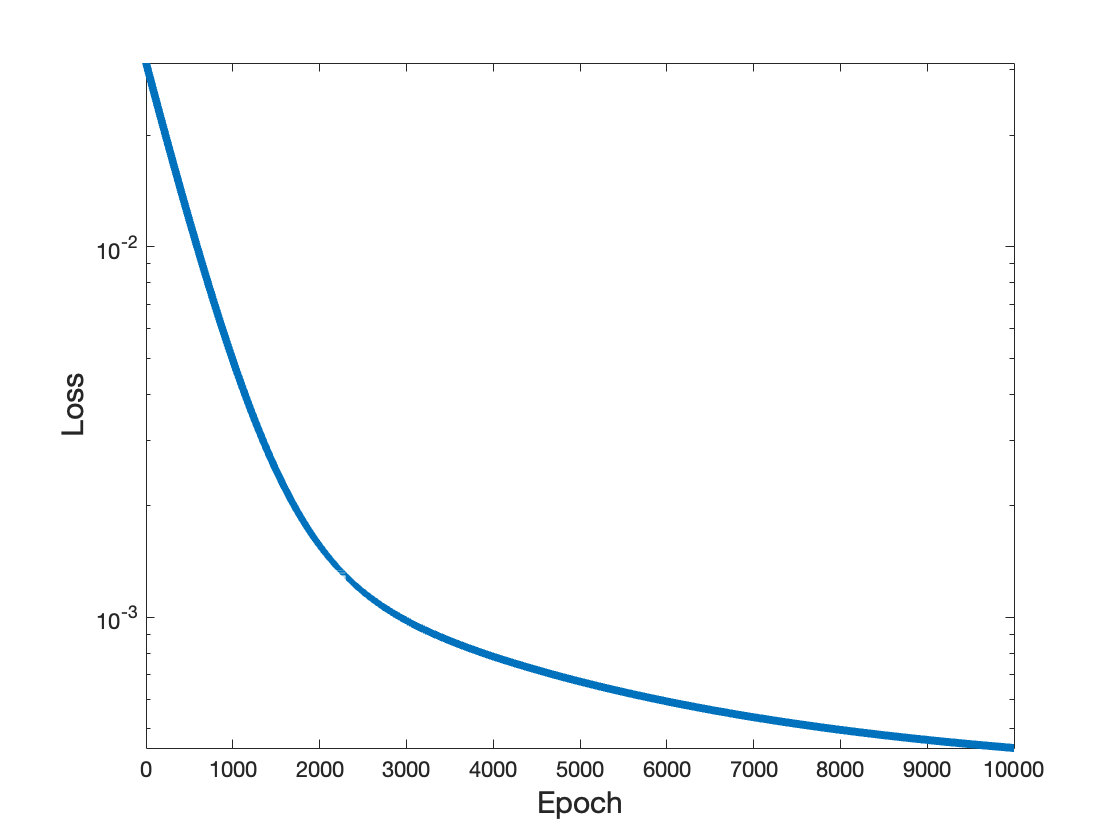}
    \caption{Image of Loss of the eighth test (Best Loss: 4.45e-4).}
    \label{8test}
\end{figure}
\begin{figure}
    \centering
    \subfigure[$l = 1,2,\ldots, 5$, $j=0$]{\includegraphics[width=6cm, height =4cm]{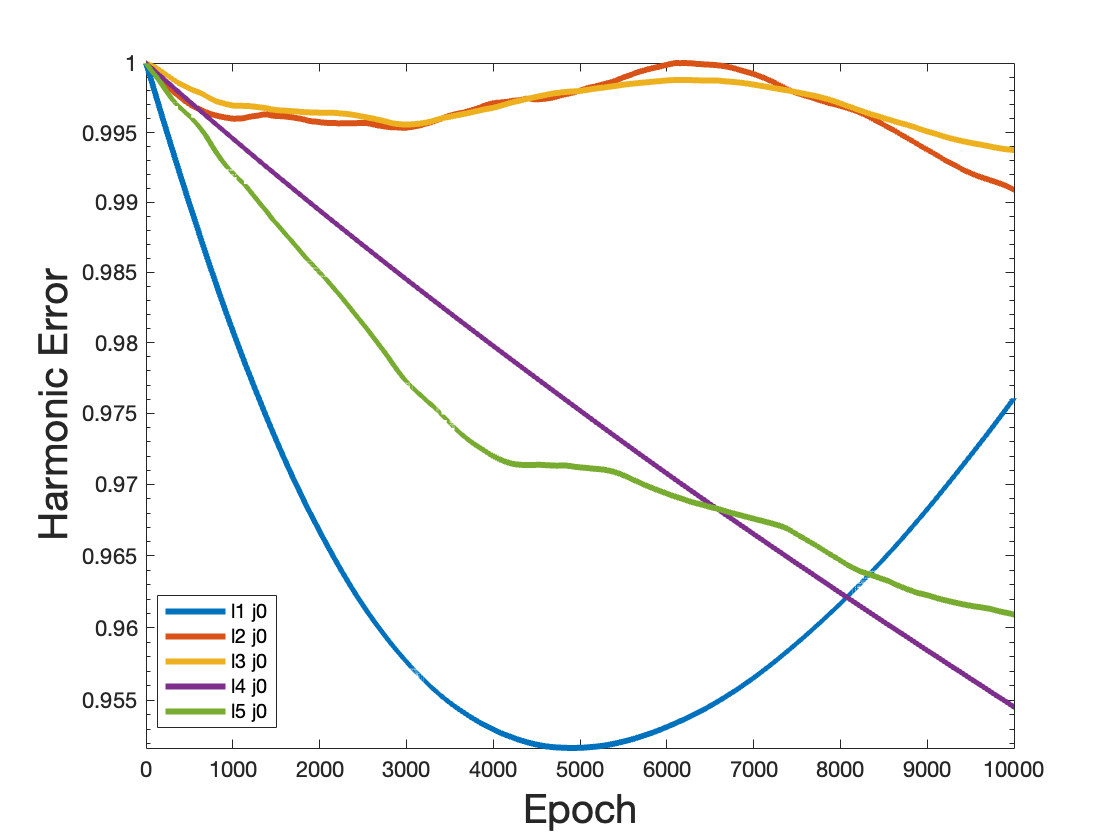}}
    \subfigure[$l = 6,7,\ldots, 10$, $j=0$]{\includegraphics[width=6cm, height =4cm]{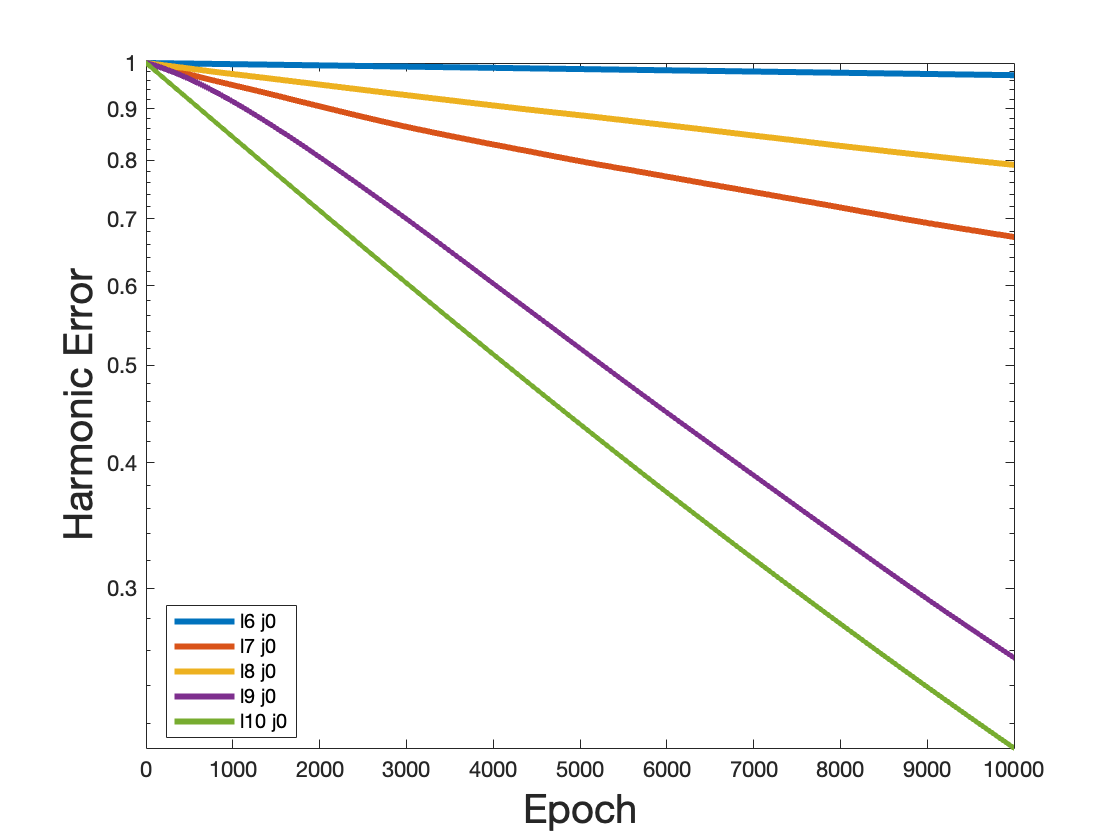}}
    \caption{Harmonic Errors of $c_{lj}$ When $l = 1,2,\ldots, 10$, $j=0$.}
    \label{8testalphabeta}
\end{figure}
\begin{figure}
    \centering
    \subfigure[$l = 1,2,\ldots, 5$, $j=1$]{\includegraphics[width=6cm, height =4cm]{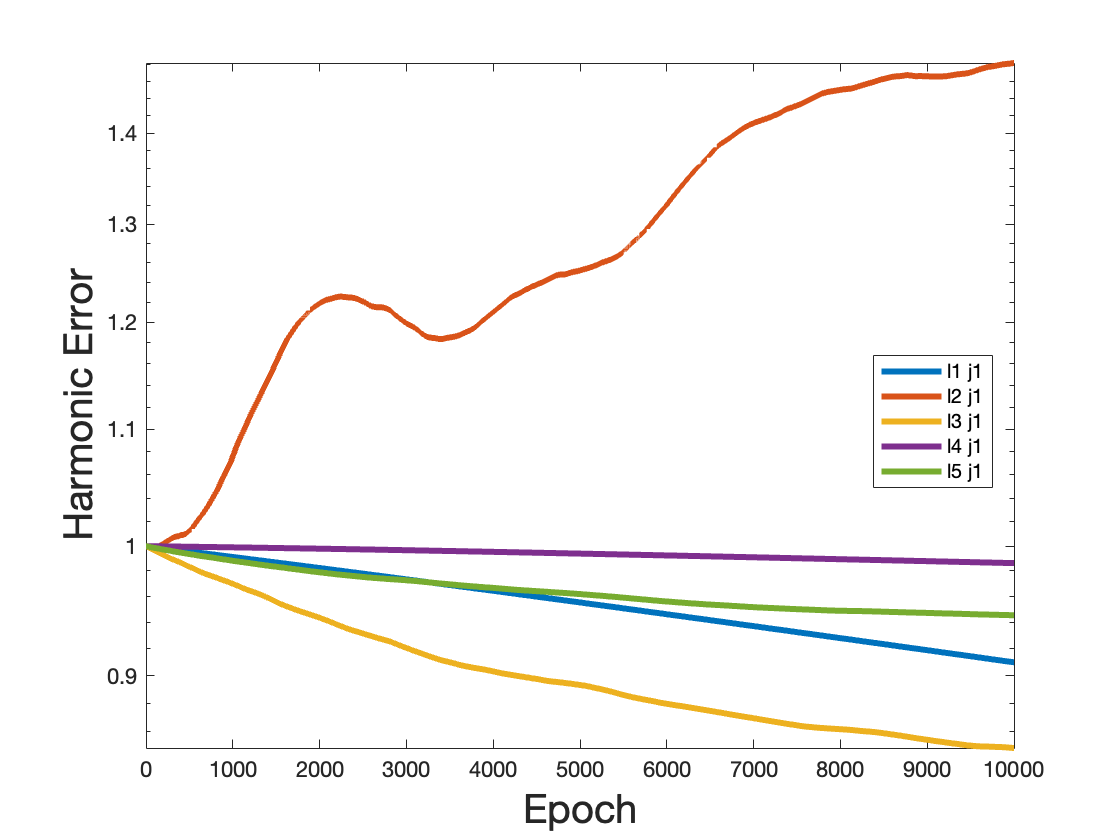}}
    \subfigure[$l = 6,7,\ldots, 10$, $j=1$]{\includegraphics[width=6cm, height =4cm]{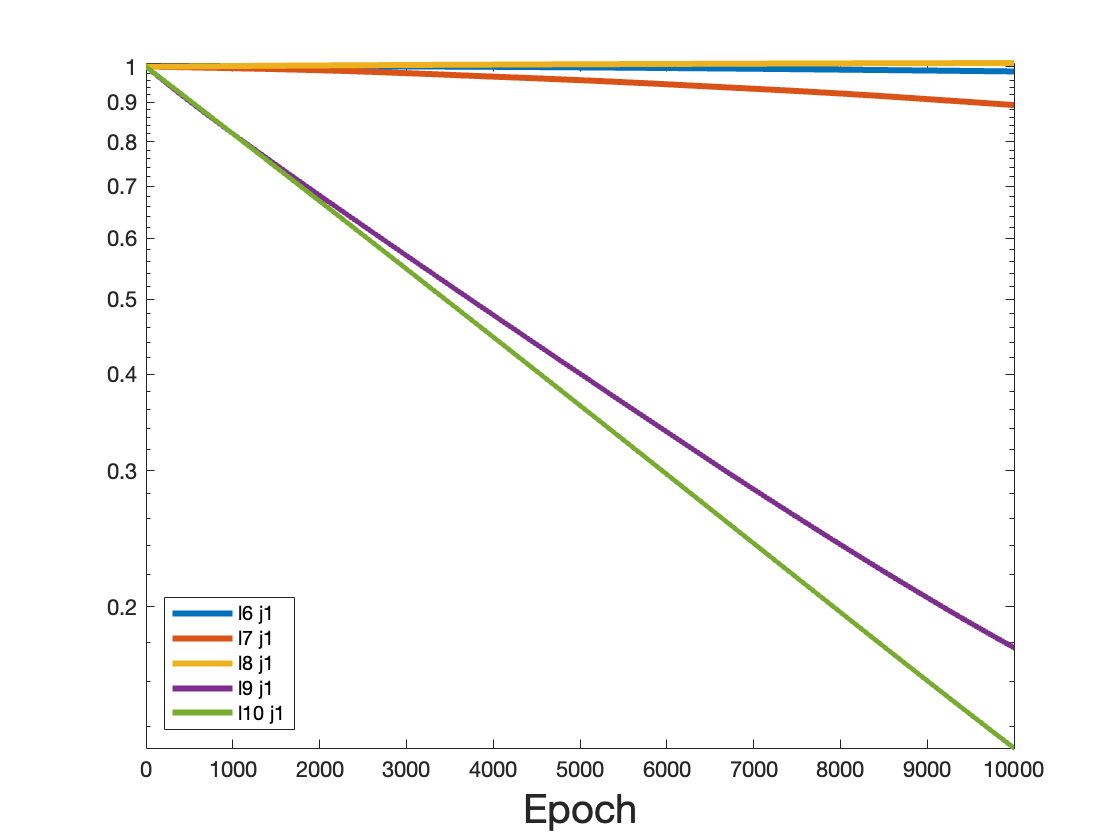}}
    \caption{Harmonic Errors of $c_{lj}$ When $l = 1,2,\ldots, 10$, $j=1$.}
    \label{8testalphabetam1}
\end{figure}

\begin{figure}
    \centering
    \subfigure[Image of the target function]{\includegraphics[width=6cm, height =4cm]{fqsphere/target_function_u0fp.png}}
    \subfigure[Image of the SNN output]{\includegraphics[width=6cm, height =4cm]{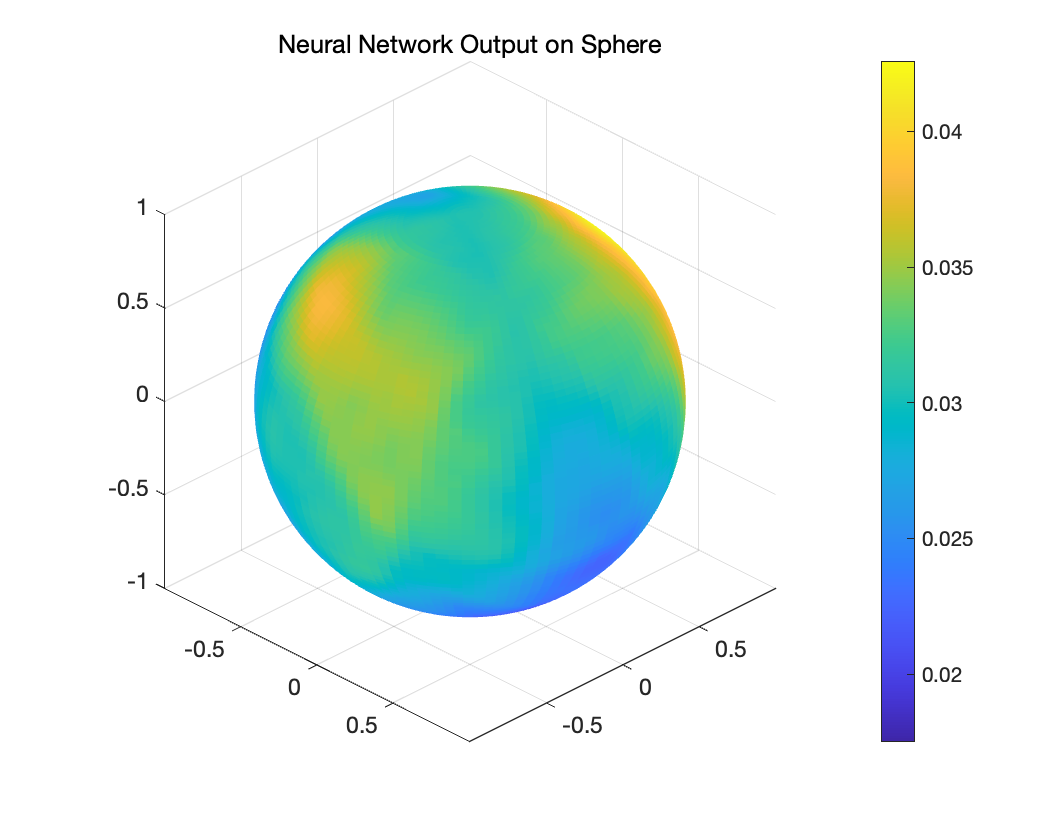}}

   \subfigure[Image of the error between the target function and SNN]{\includegraphics[width=6cm, height =4cm]{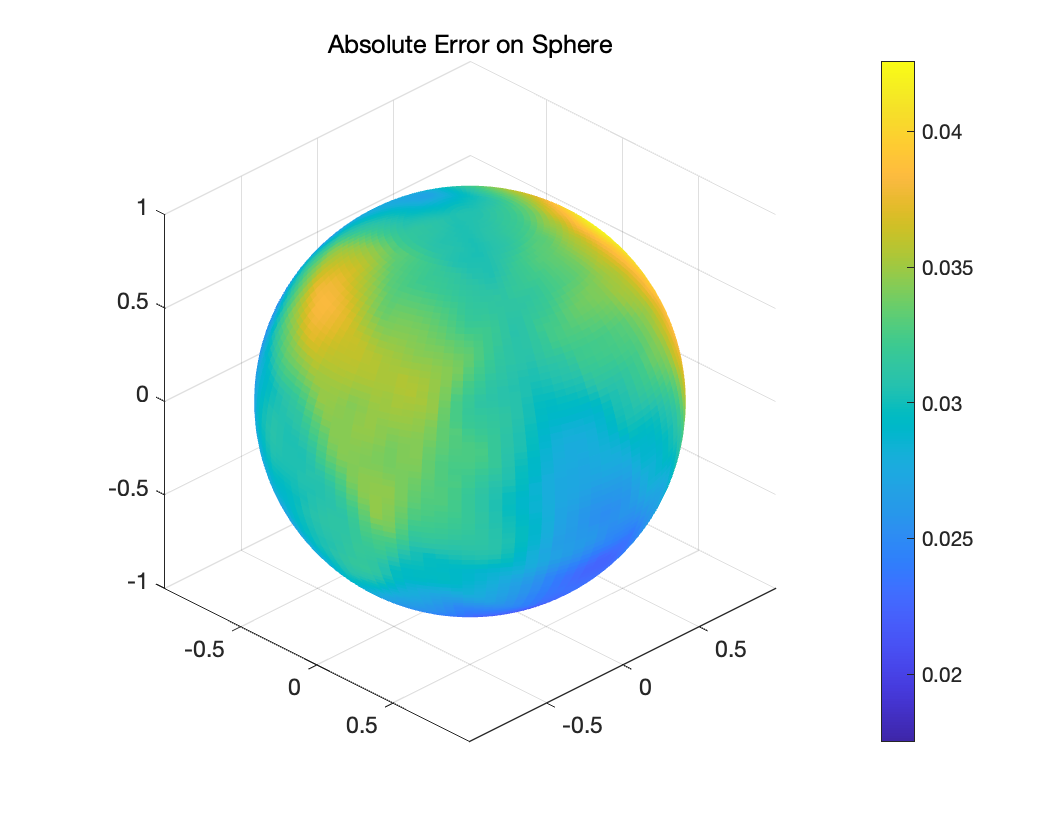}}
    \caption{The target function, SNN output, and error of the eighth test.}
    \label{8testdnntarget}
\end{figure}

Like Section \ref{Sec:SphericalAnalysisFW}, we give a high-frequency initialization $\sin(10\tau)\cos(10\phi)$ as Figure \ref{3testhighinitial} shown. In the most striking case, the trained weights generate a frequency response that contradicts conventional frequency principle expectations. Detailed analysis shows (Figure \ref{8testalphabeta} and Figure \ref{8testalphabetam1}):
\begin{itemize}
    \item Preferential weighting of high-frequency components.
    \item Suppression of low-frequency modes despite their fundamental importance.
\end{itemize}

This unexpected weight configuration challenges our understanding of neural network learning dynamics on manifolds and suggests the existence of alternative stable solutions in the optimization landscape.

The observed weight distributions and their corresponding frequency characteristics provide several key insights:
\begin{itemize}
    \item The relationship between network weights and frequency components is more complex than previously understood.
    \item Initial weight configurations significantly influence the final frequency learning pattern.
    \item The optimization landscape contains multiple stable solutions with distinct frequency characteristics.
\end{itemize}

These findings extend classical frequency principle theory by demonstrating how weight configurations directly influence the network's frequency learning behavior. Furthermore, they suggest that the frequency principle might be better understood as a property of typical optimization trajectories rather than an inherent characteristic of neural networks.

The analysis of weight-induced frequency patterns raises important computational considerations:
\begin{itemize}
    \item The choice of weight initialization schemes significantly impacts the final frequency distribution.
    \item Training dynamics exhibit sensitivity to both learning rate and batch size.
    \item  The stability of different frequency learning patterns varies with network architecture.
\end{itemize}
These observations provide practical guidelines for controlling frequency learning behavior through careful selection of network parameters and training protocols.

\subsubsection{$u(\tau,\phi) = \sin(\tau)\cos(3\phi) + \sin(3\tau)\cos(5\phi)$}

We extend our analysis to examine the learning dynamics when both weights and biases are trained simultaneously. The target function remains $u(\tau,\phi) = \sin(\tau)\cos(3\phi) + \sin(3\tau)\cos(5\phi)$, maintaining consistency with our fixed-weight analysis. The neural network architecture employs 100 neurons in the hidden layer with ReLU activation functions, but now all parameters are updated during training. We utilize SGD with a learning rate of $1e{-3}$ over 10,000 epochs, training on 100 uniformly distributed sampling points on the sphere.

\textbf{Case 1: Partial Adherence to Frequency Principle}
\begin{figure}
    \centering
    \includegraphics[width=6cm, height=4cm]{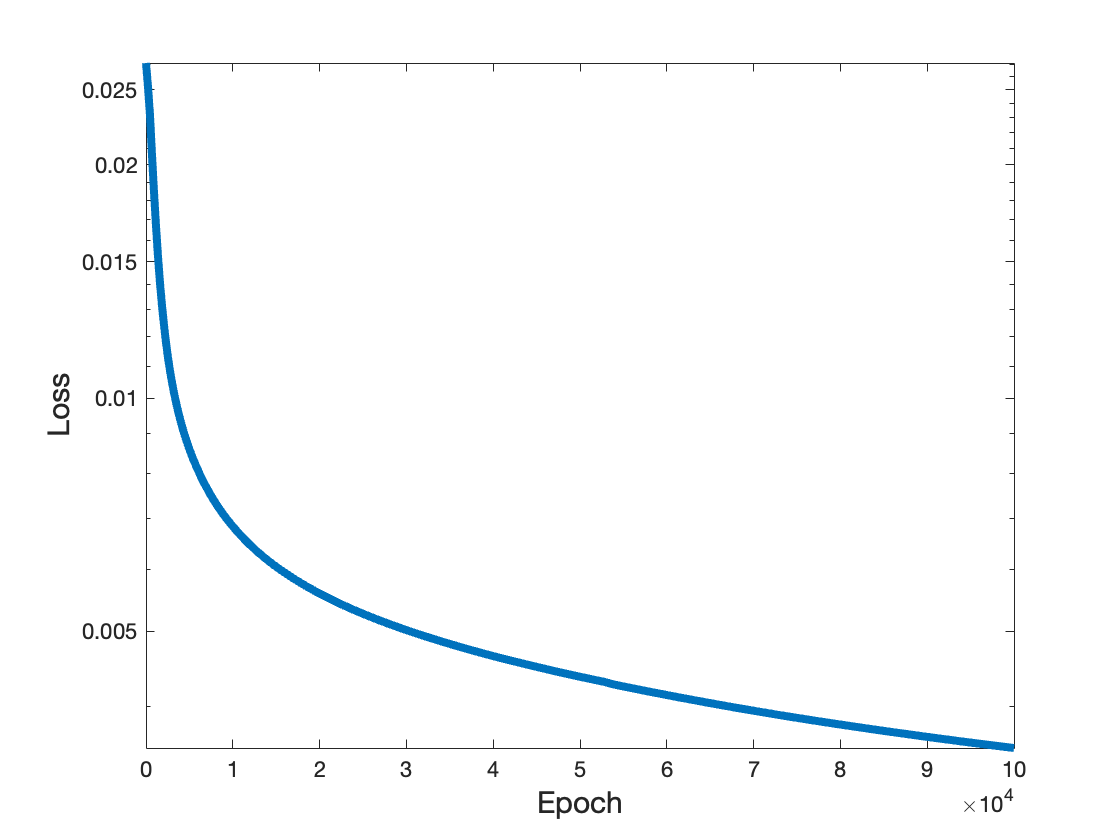}
    \caption{Training loss evolution for trainable weights showing mixed convergence patterns}
    \label{trigtrain1test}
\end{figure}

\begin{figure}
    \centering
    \subfigure[$l = 1,3,5$, $j=0$]{\includegraphics[width=6cm, height=4cm]{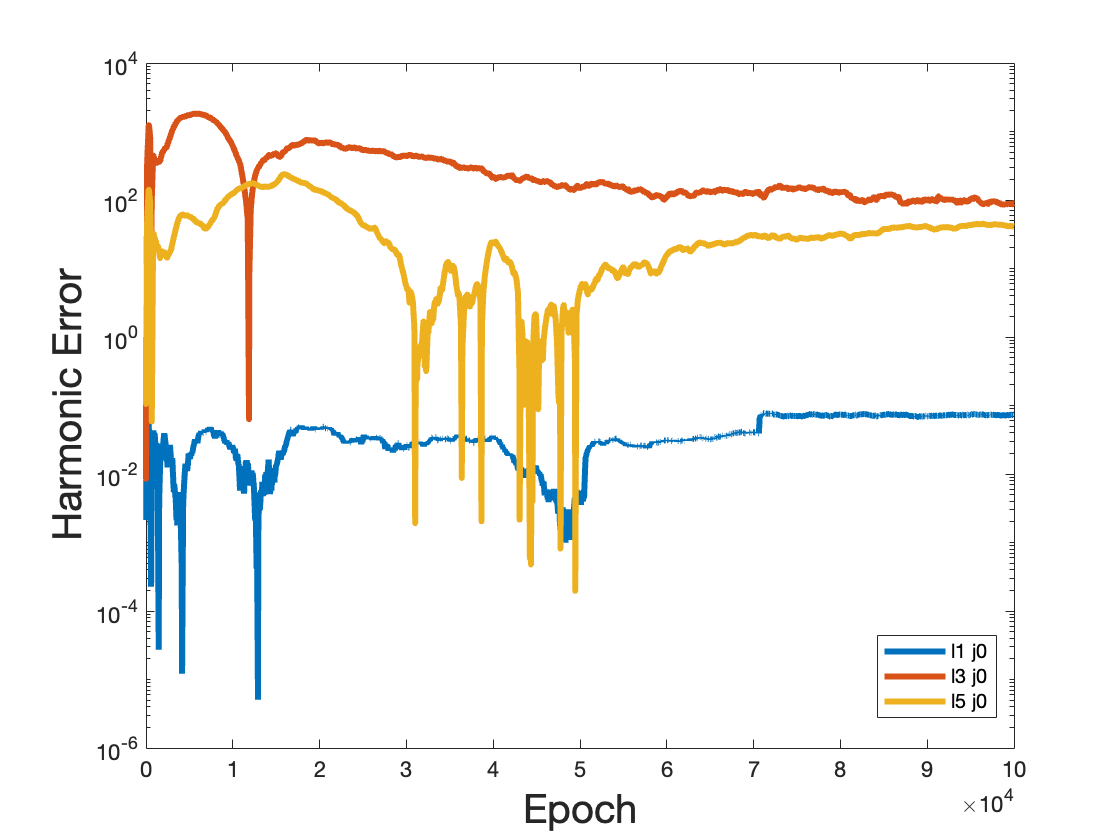}}
    
    \subfigure[$l = 1,2,\ldots,5$, $j=0$]{\includegraphics[width=6cm, height=4cm]{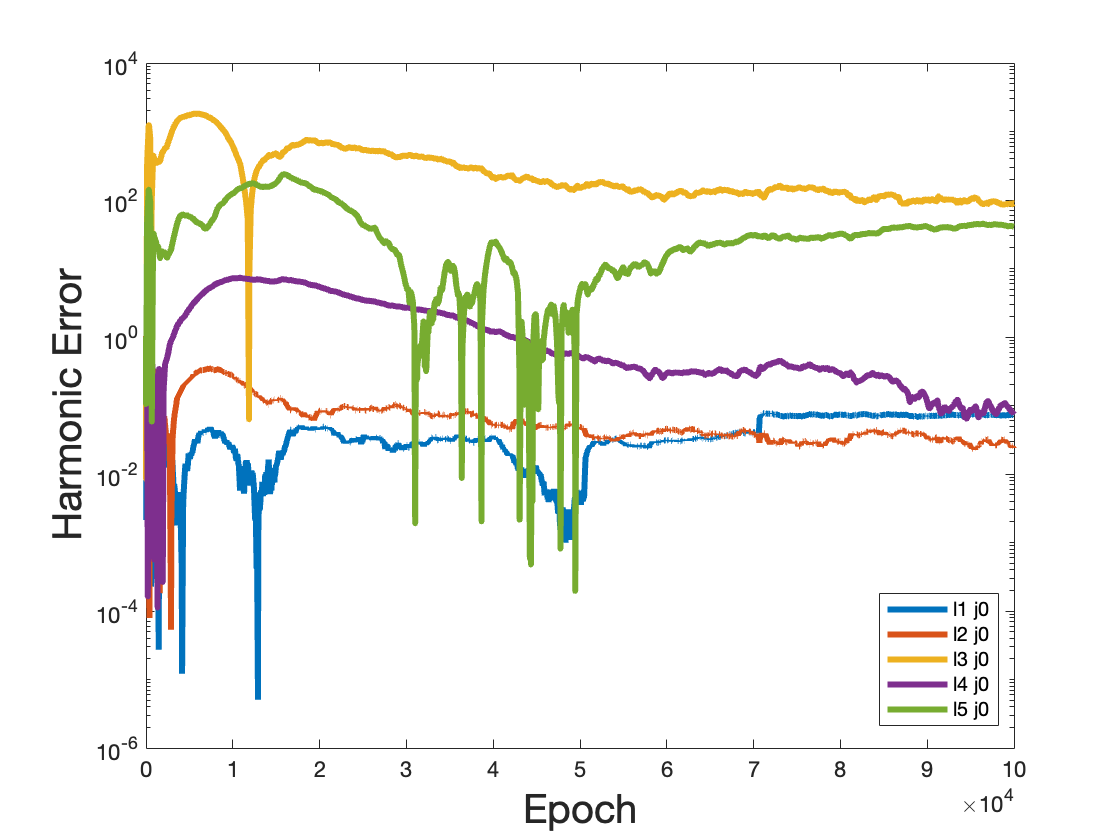}}
    \subfigure[$l = 6,7,\ldots,10$, $j=0$]{\includegraphics[width=6cm, height=4cm]{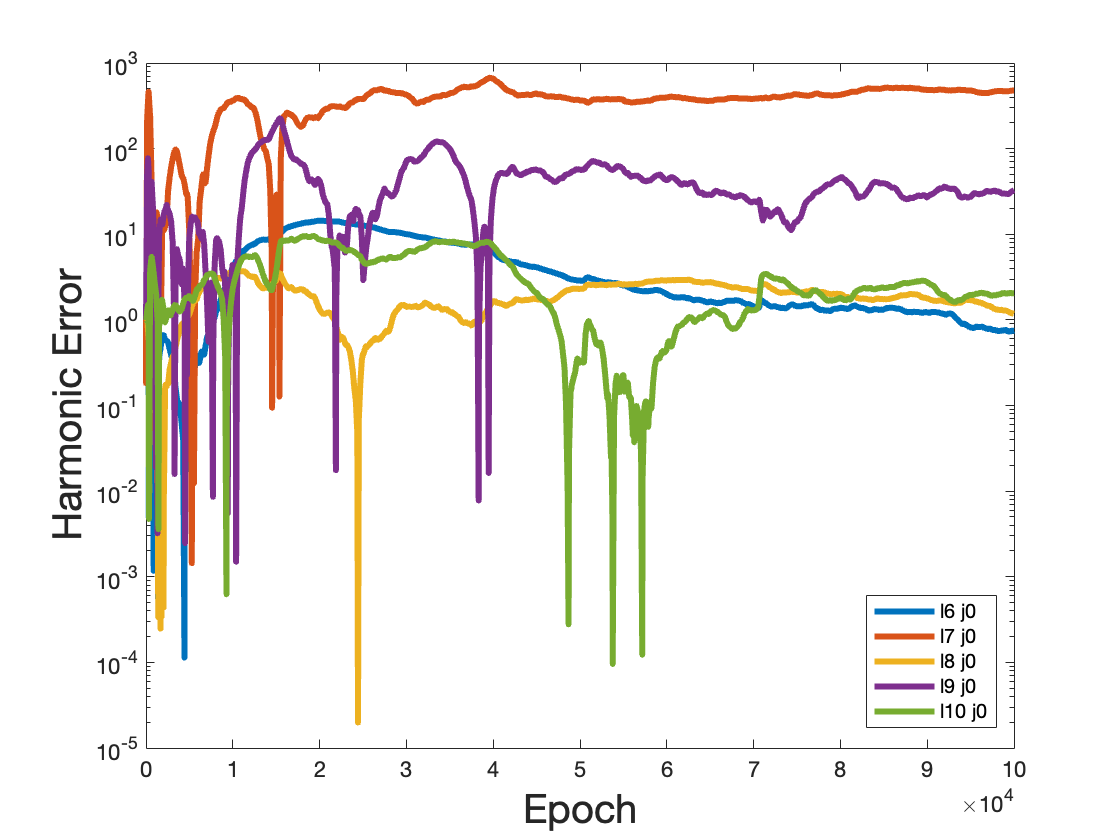}}
    \caption{Harmonic Errors Revealing Interleaved Frequency Learning}
    \label{trigtrain1testharmonic}
\end{figure}

\begin{figure}
    \centering
    \subfigure[$l = 1,3,5$, $j=1$]{\includegraphics[width=6cm, height=4cm]{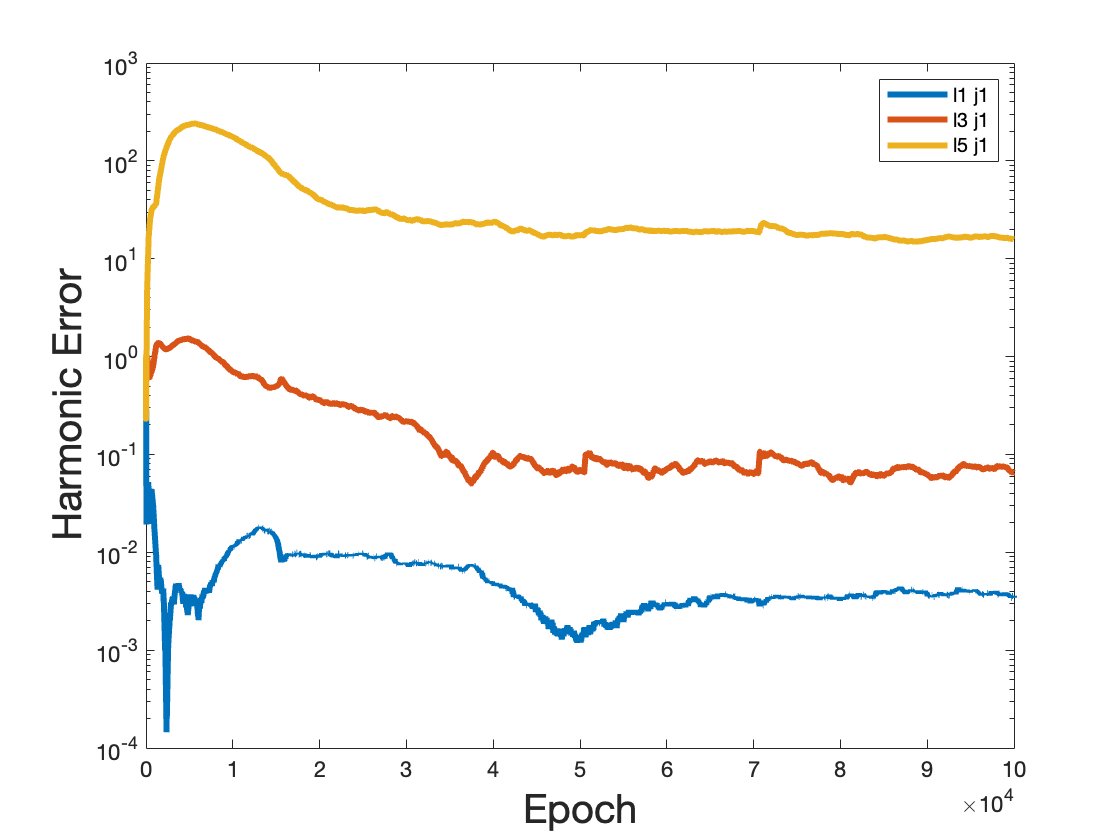}}
    
    \subfigure[$l = 1,2,\ldots,5$, $j=1$]{\includegraphics[width=6cm, height=4cm]{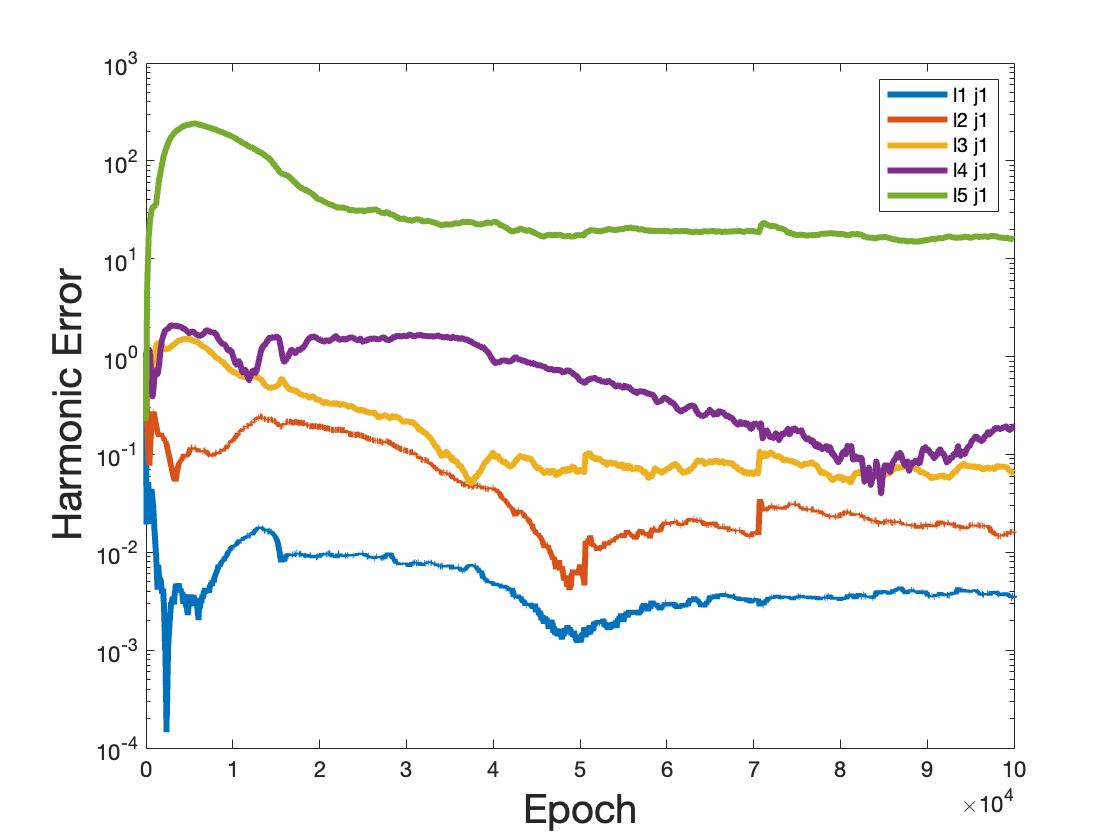}}
    \subfigure[$l = 6,7,\ldots,10$, $j=1$]{\includegraphics[width=6cm, height=4cm]{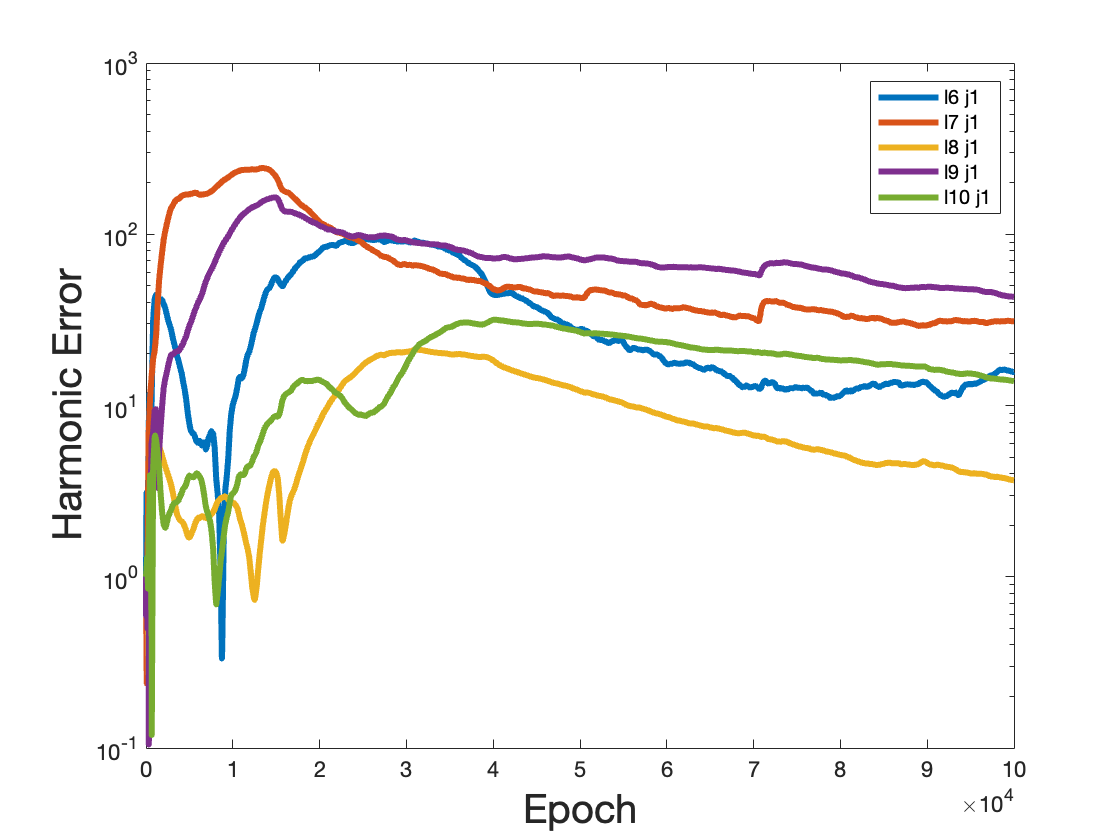}}
    \caption{Harmonic Errors Revealing Interleaved Frequency Learning}
    \label{trigtrain1testharmonicm1}
\end{figure}

\begin{figure}
    \centering
    \subfigure[Target function]{\includegraphics[width=6cm, height=4cm]{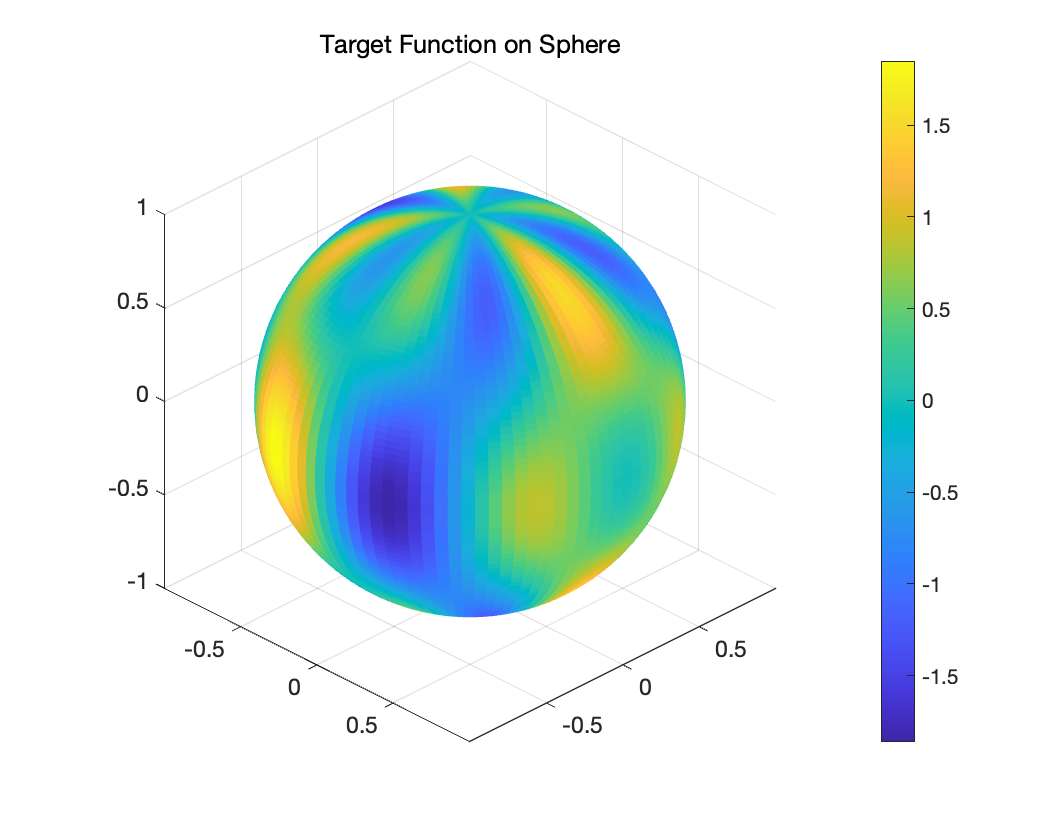}}
    \subfigure[Neural network output]{\includegraphics[width=6cm, height=4cm]{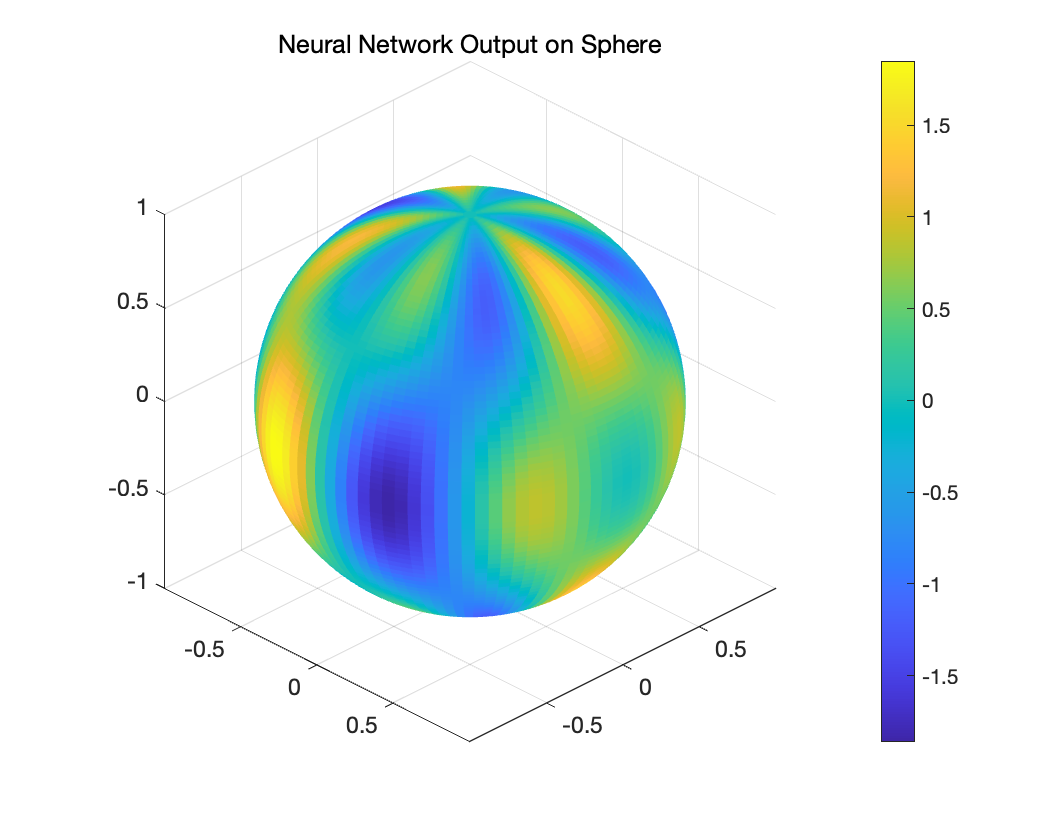}}
    \subfigure[Absolute error distribution]{\includegraphics[width=6cm, height=4cm]{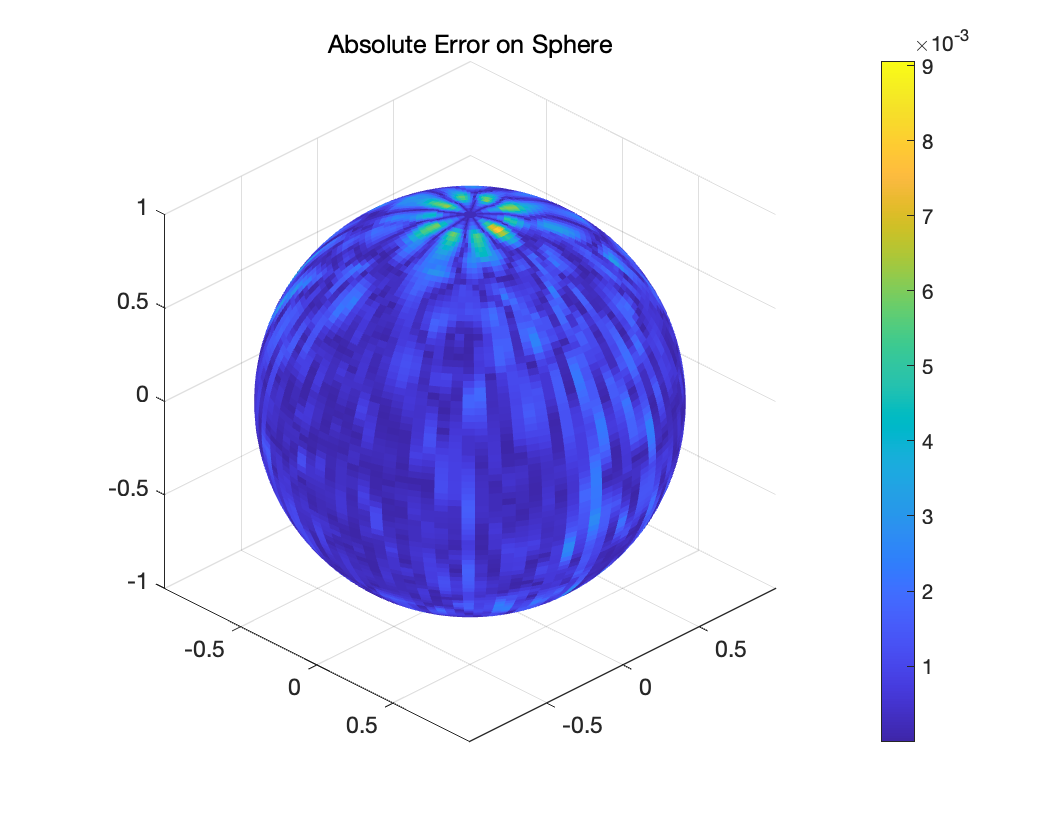}}
    \caption{Spatial comparison showing mixed-frequency approximation}
    \label{trigtrain1testvisual}
\end{figure}

With trainable weights, the network achieves a final loss of 3.2e-3 (Figure \ref{trigtrain1test}), demonstrating improved convergence compared to the fixed-weight case. The harmonic error analysis (Figure \ref{trigtrain1testharmonic}) reveals an intriguing pattern where certain high-frequency components are learned concurrently with their low-frequency counterparts. This behavior is particularly evident in the simultaneous learning of $\sin(\tau)$ and $\sin(3\tau)$ components, suggesting that weight adaptation enables more flexible frequency learning patterns.

The spatial error distribution (Figure \ref{trigtrain1testvisual}) exhibits a more uniform structure compared to the fixed-weight case, indicating that weight training allows the network to balance the approximation of different frequency components better. However, the learning trajectory still shows preferential treatment of certain frequencies, maintaining partial adherence to the frequency principle.

\textbf{Case 2: Contradiction to Frequency Principle}
\begin{figure}
    \centering
    \includegraphics[width=6cm, height=4cm]{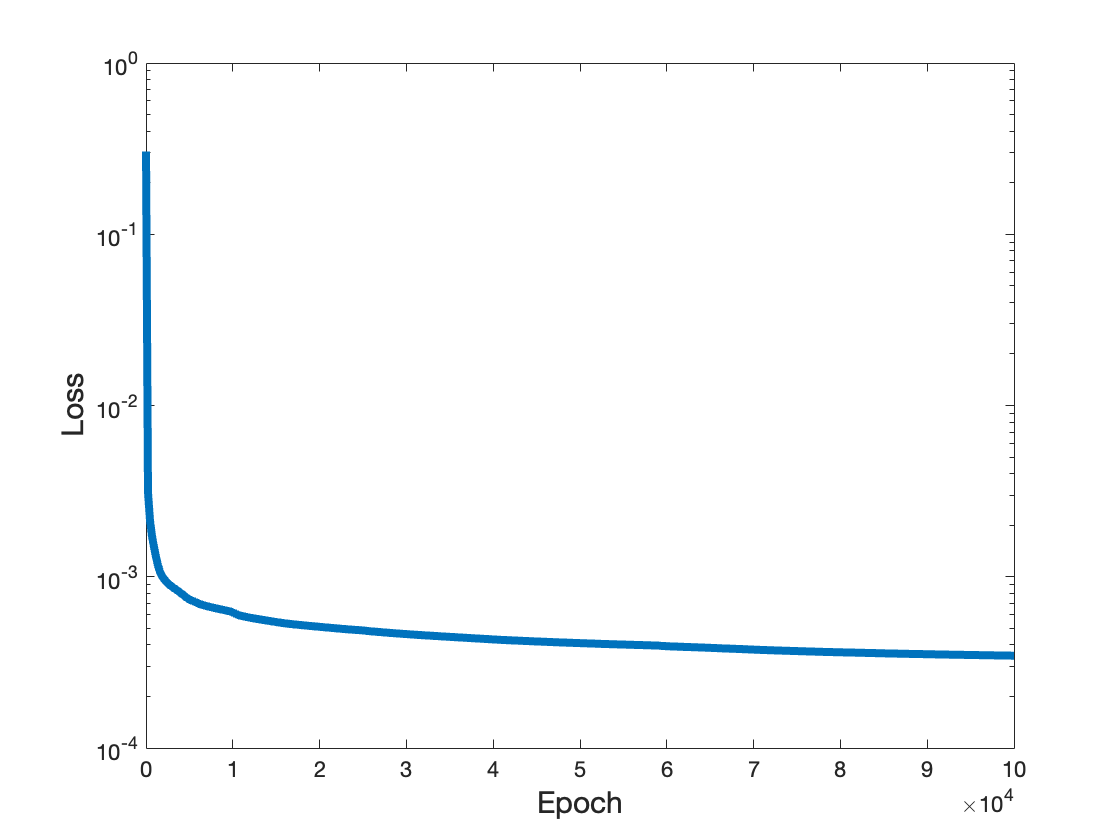}
    \caption{Training loss evolution showing inverse frequency learning}
    \label{trigtrain3test}
\end{figure}

\begin{figure}
    \centering
    \subfigure[$l = 1,3,5$, $j=0$]{\includegraphics[width=6cm, height=4cm]{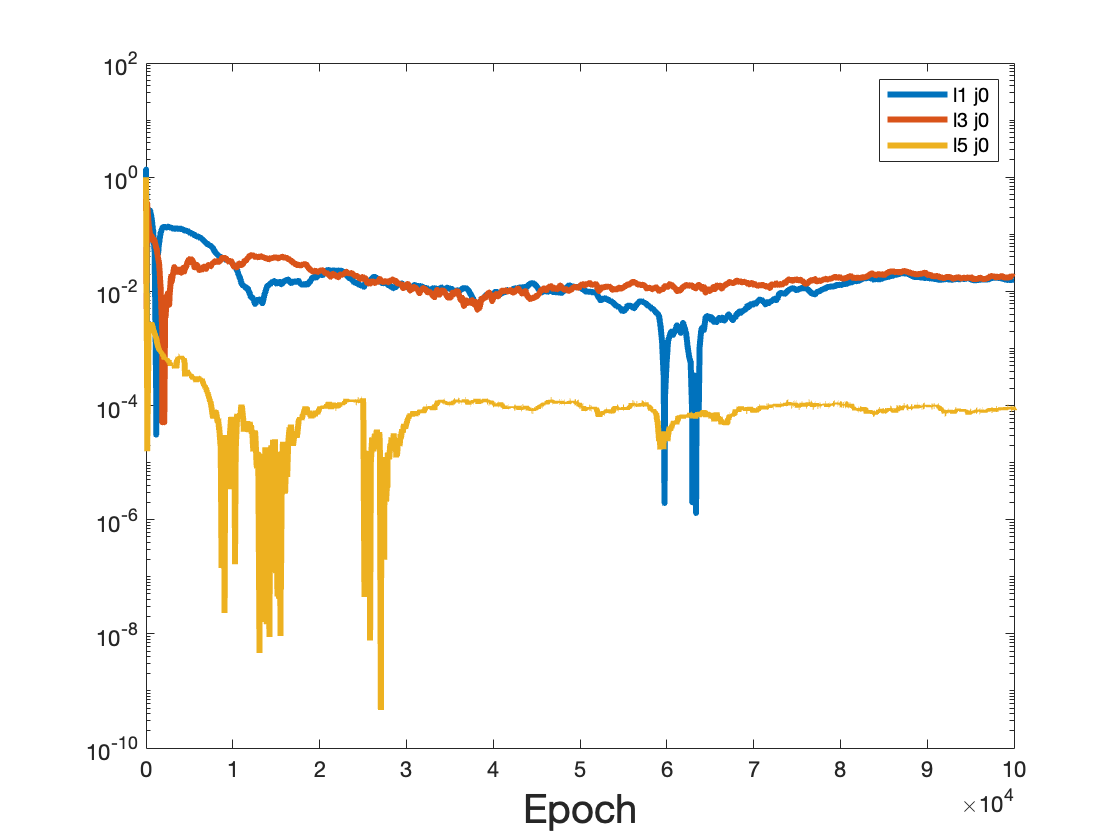}}
    
    \subfigure[$l = 1,2,\ldots,5$, $j=0$]{\includegraphics[width=6cm, height=4cm]{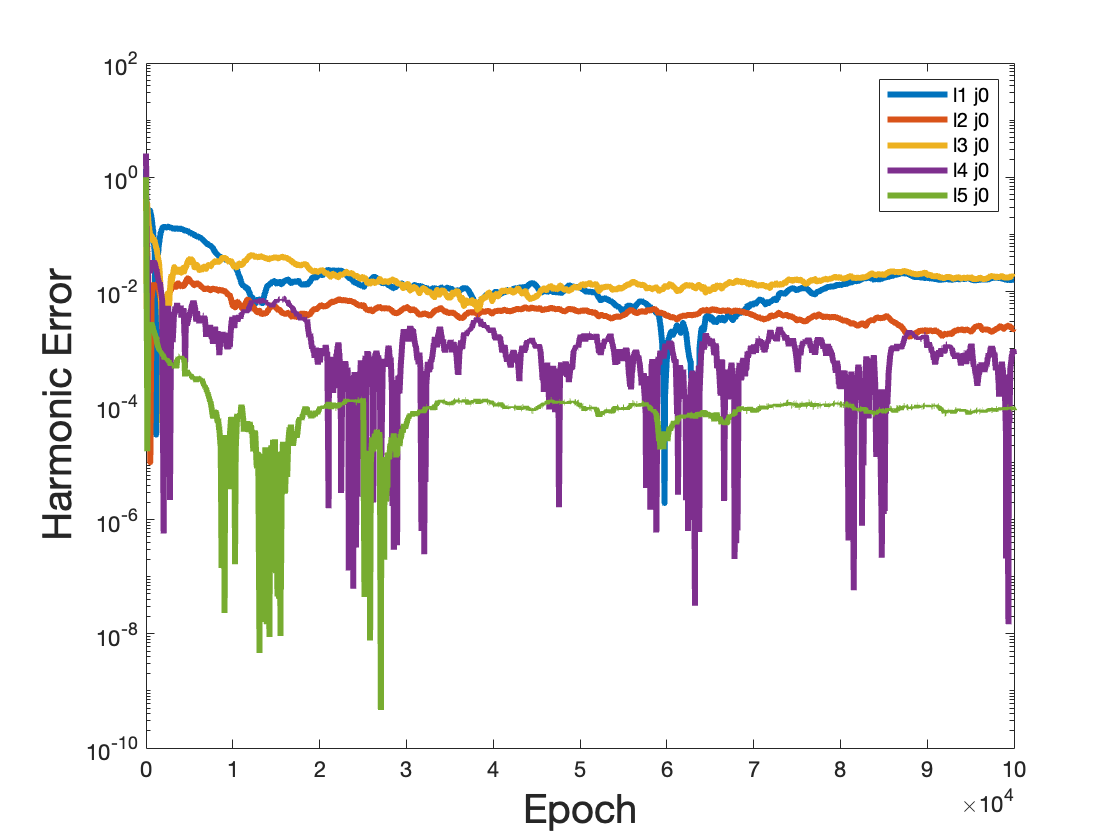}}
    \subfigure[$l = 6,7,\ldots,10$, $j=0$]{\includegraphics[width=6cm, height=4cm]{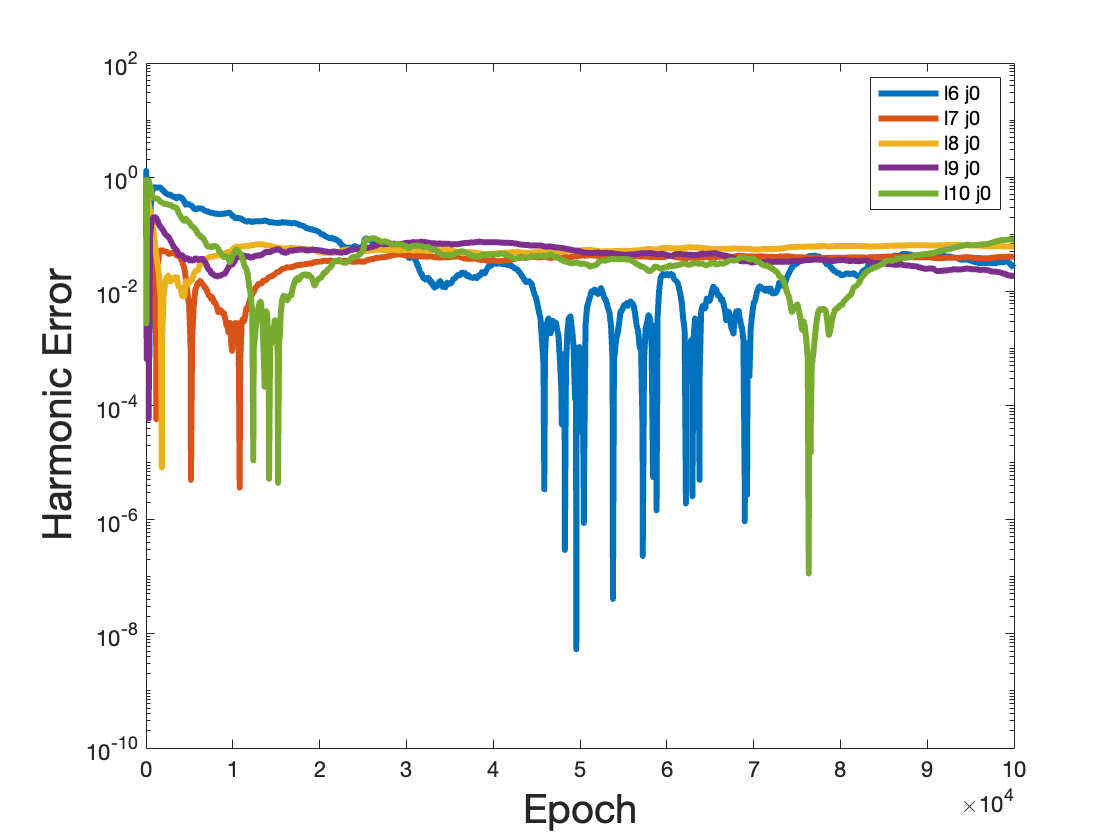}}
    \caption{Harmonic Errors showing prioritized high-frequency learning}
    \label{trigtrain3testharmonic}
\end{figure}
\begin{figure}
    \centering
    \subfigure[$l = 1,3,5$, $j=1$]{\includegraphics[width=6cm, height=4cm]{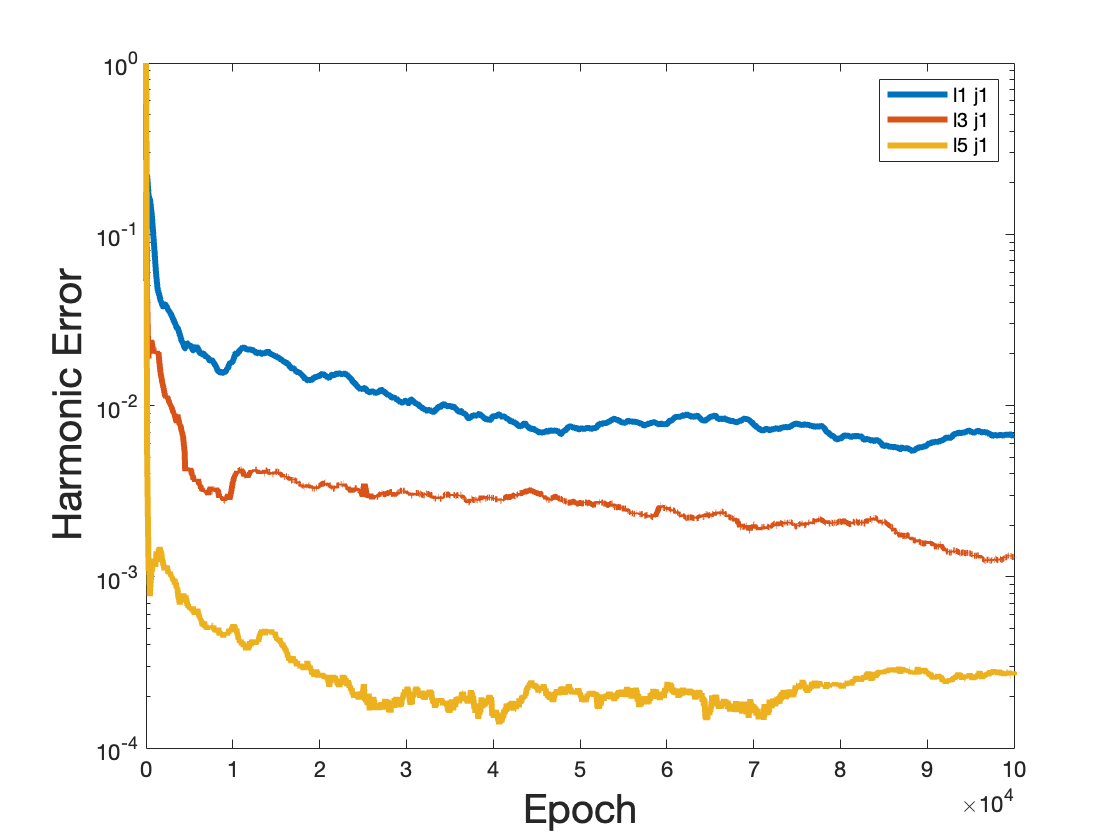}}
    
    \subfigure[$l = 1,2,\ldots,5$, $j=1$]{\includegraphics[width=6cm, height=4cm]{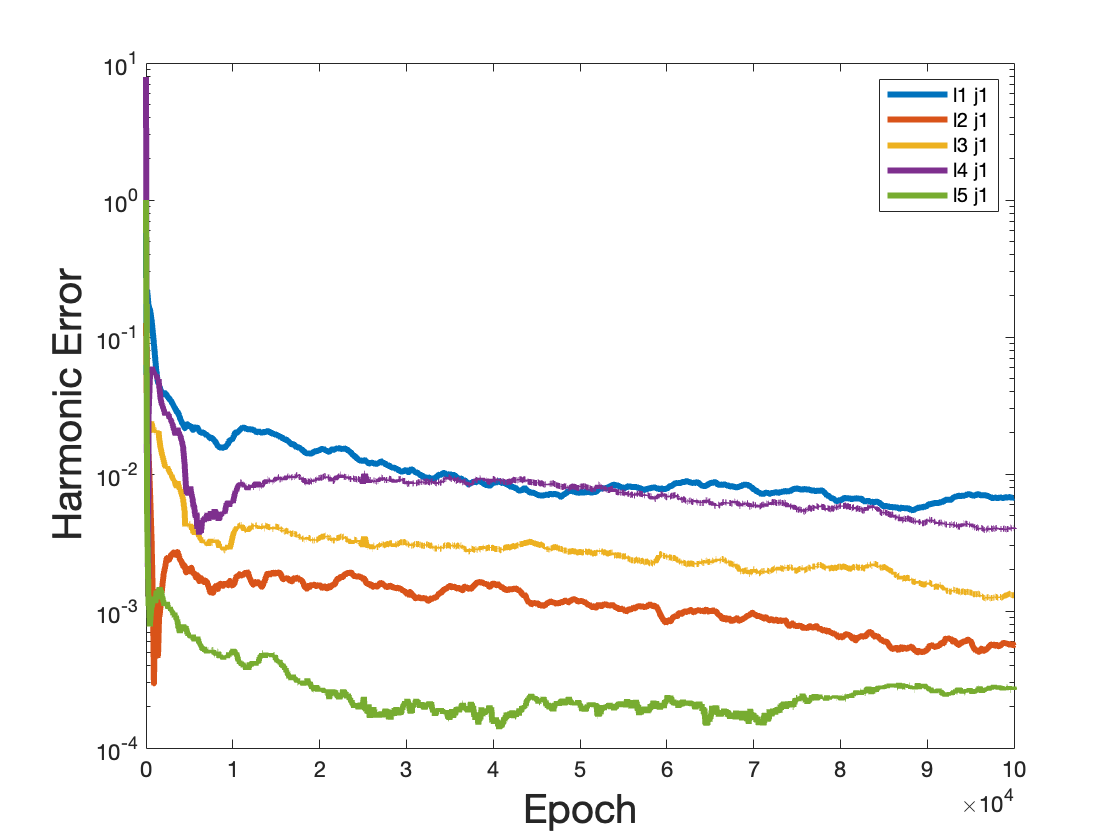}}
    \subfigure[$l = 6,7,\ldots,10$, $j=1$]{\includegraphics[width=6cm, height=4cm]{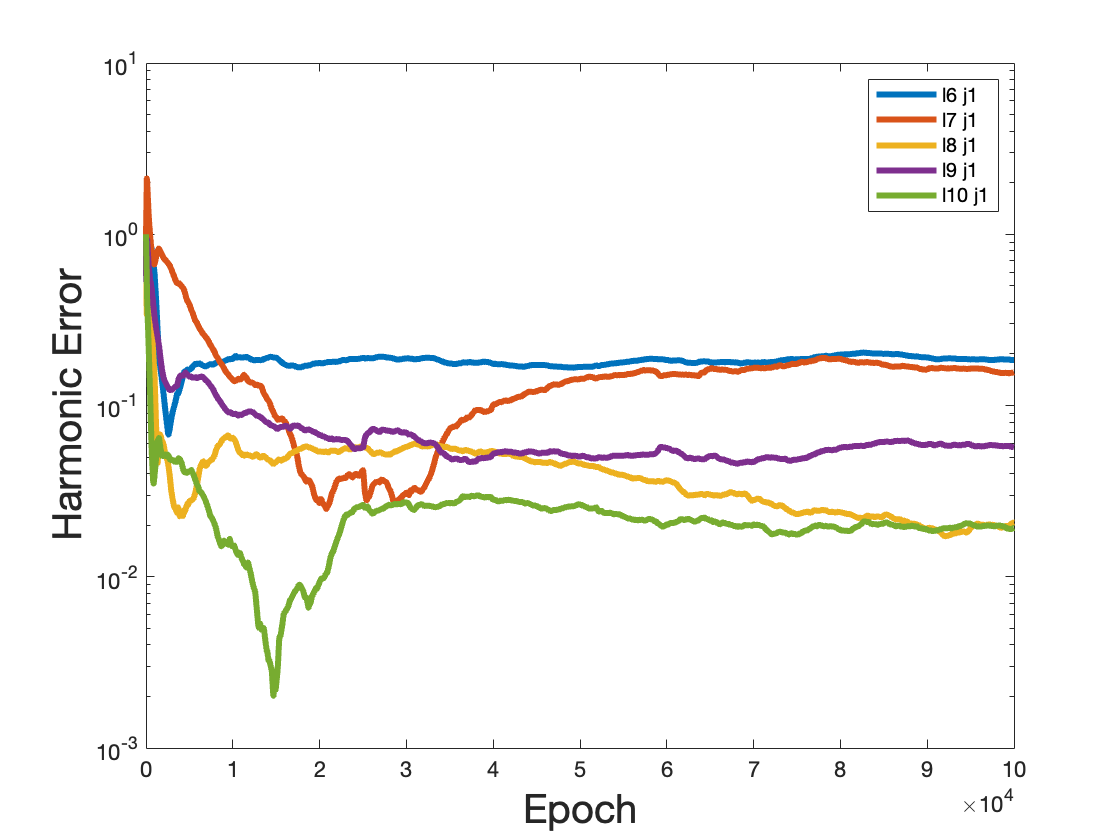}}
    \caption{Harmonic Errors showing prioritized high-frequency learning}
    \label{trigtrain3testharmonicm1}
\end{figure}

\begin{figure}
    \centering
    \subfigure[Target function]{\includegraphics[width=6cm, height=4cm]{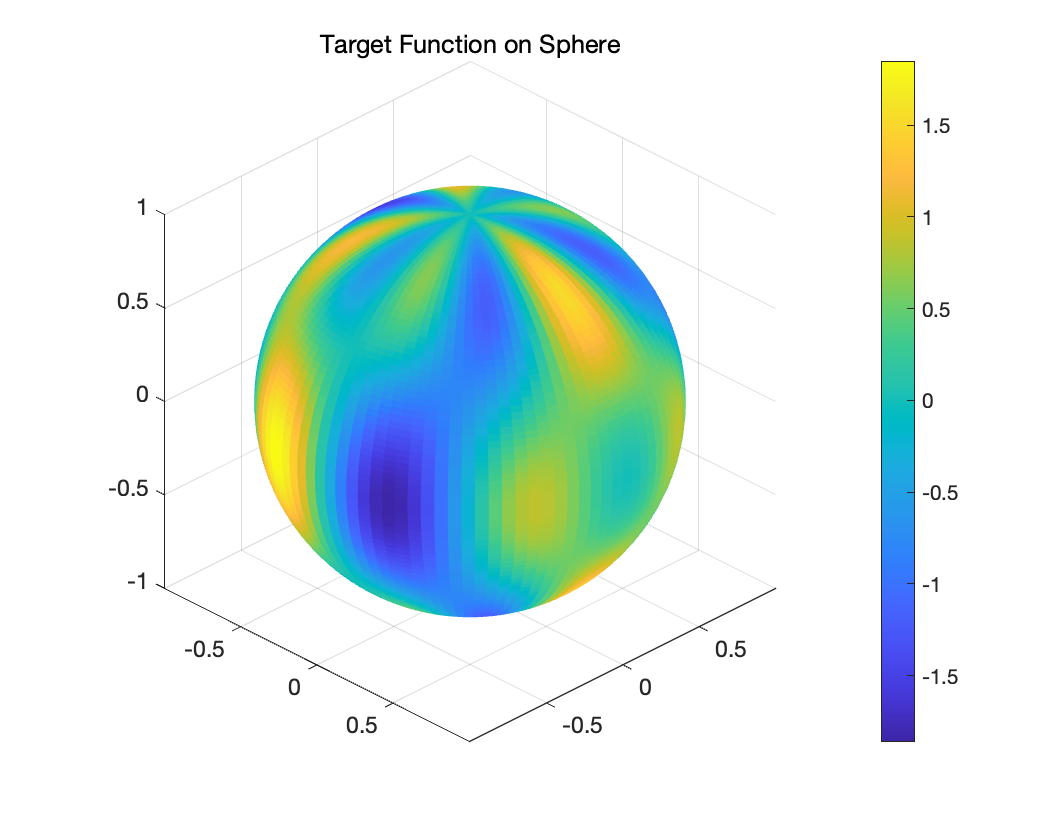}}
    \subfigure[Neural network output]{\includegraphics[width=6cm, height=4cm]{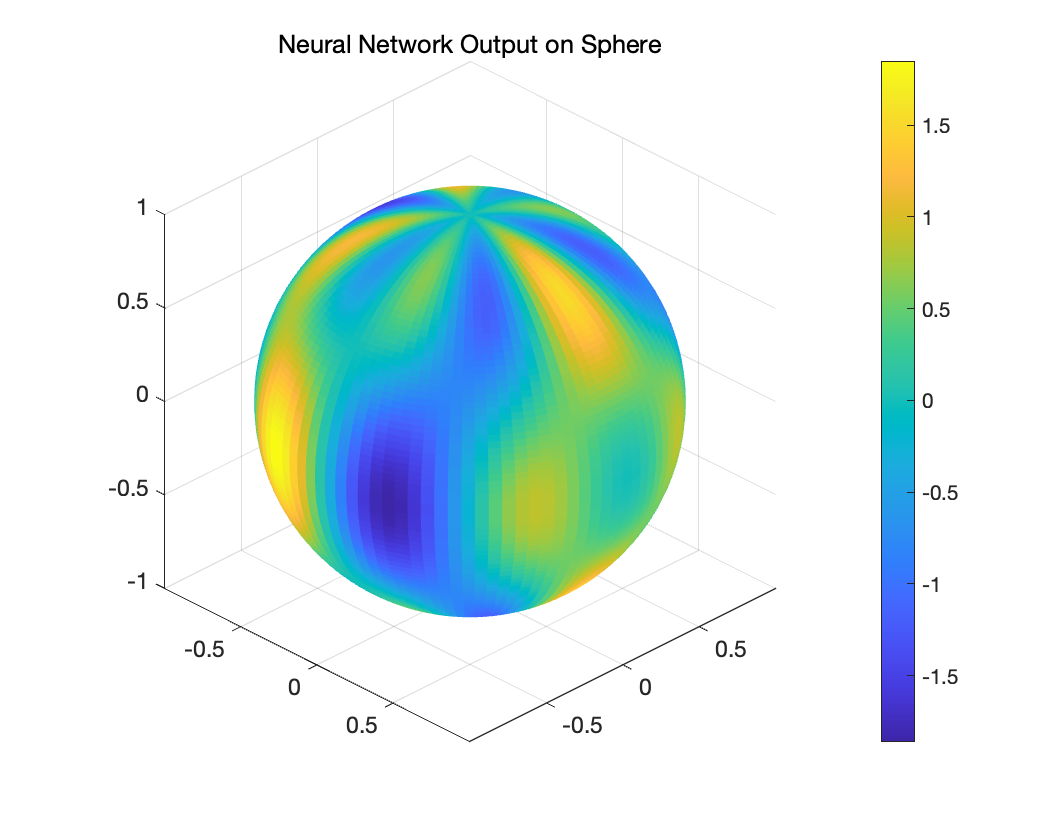}}
    \subfigure[Absolute error distribution]{\includegraphics[width=6cm, height=4cm]{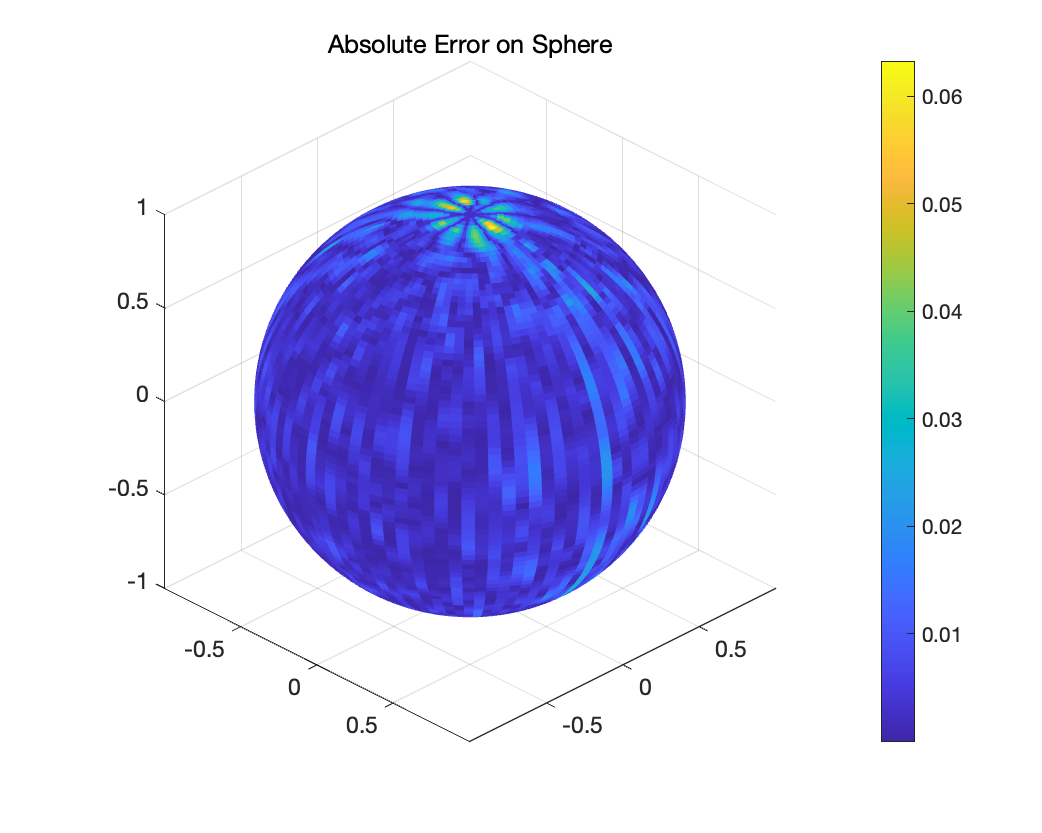}}
    \caption{Spatial comparison revealing high-frequency dominance}
    \label{trigtrain3testvisual}
\end{figure}

In this case, like Section \ref{Sec:SphericalAnalysisFW}, a high-frequency initialization $\sin(10\tau)\cos(10\phi)$ also given as Figure \ref{3testhighinitial} shown. We observe a more pronounced contradiction to the frequency principle, with the network reaching a final loss of 8.4e-4 (Figure \ref{trigtrain3test}). The harmonic error analysis (Figure \ref{trigtrain3testharmonic}) demonstrates that the network prioritizes learning the high-frequency components $\cos(5\phi)$ and $\sin(3\tau)$ before addressing the lower-frequency terms. This behavior is more distinct than in the fixed-weight case, suggesting that weight adaptation can enhance the network's capacity to learn high-frequency features independently.

The spatial error distribution (Figure \ref{trigtrain3testvisual}) shows concentrated errors in regions dominated by low-frequency components, while high-frequency features are captured with remarkable accuracy. This inverse learning pattern is maintained throughout the training process, providing strong evidence that trainable weights can fundamentally alter the frequency-dependent learning dynamics.

\textbf{Comparative Analysis}
The investigation of trainable weights reveals several key distinctions from the fixed-weight scenario:
\begin{itemize}
    \item Weight adaptation enables more efficient convergence, as evidenced by lower final loss values across all cases
    \item The network demonstrates enhanced flexibility in learning frequency components, allowing for more complex learning patterns
    \item The contradiction to the frequency principle becomes more pronounced, suggesting that weight training can fundamentally alter the natural frequency bias of neural networks
\end{itemize}

These observations extend our understanding of the frequency principle in neural networks on spherical domains. The additional degrees of freedom provided by trainable weights appear to both facilitate better function approximation and enable more diverse learning trajectories. This suggests that the frequency principle, while still relevant, may need to be reconsidered in the context of fully trainable networks, particularly when approximating functions with coupled frequency components on non-Euclidean domains.

\section*{Conclusion}
In this paper, we examined shallow ReLU networks defined on the unit sphere by exploiting the spherical harmonic expansion of both the target functions and the activation functions themselves. In the fixed-weight setting, each ReLU neuron exhibits an inherent axisymmetry around its weight vector. This local perspective explains why the spherical harmonic coefficients decay exponentially, thus naturally prioritizing low-frequency modes when weights remain static. However, we also demonstrated numerically that nontrivial high-frequency components can occasionally converge more rapidly if there is substantial error in those regions.

Extending the analysis to fully trainable networks uncovered an additional mechanism: \emph{weight rotation}. Because changing the direction of each neuron effectively reorients the local axisymmetry in the global spherical coordinate system, higher frequencies may be selectively amplified or subdued, depending on the interplay between gradient flows and the error distribution. This adaptive realignment challenges the typical assumption that deep learning proceeds strictly from low to high frequencies. Our numerical experiments substantiate the theoretical predictions that both full and partial violations of the Frequency Principle can occur under certain conditions, especially when deliberate high-frequency perturbations are introduced or when the network initialization favors higher-frequency modes.

Overall, our findings highlight that while exponential decay in the intrinsic ReLU expansion remains a dominant force toward lower frequencies, the network’s capacity to adjust weights provides a pathway for significantly more complex frequency-learning trajectories. These insights extend the classical Frequency Principle beyond unbounded or periodic domains, emphasizing the need for careful consideration of geometry, weight adaptation, and target-function structure when analyzing or designing neural networks on manifolds. Future work may explore deeper architectures, other activation functions with similar or weaker spectral biases, and more systematic strategies for mitigating adverse high-frequency effects in real-world applications.

\section{Acknowledgment}

The author would like to express their sincere gratitude to Professor Jinchao Xu for his valuable suggestions and kind invitation to KAUST, which made this research possible. We are also grateful to Research Assistant Jongho Park for insightful discussions that contributed significantly to the development of this work.

\bibliographystyle{siamplain}
\bibliography{refs_A-principle}
\end{document}